\theoremstyle{plain}
\newtheorem{theorem}{Theorem}[section]
\newtheorem{lemma}[theorem]{Lemma}
\theoremstyle{definition}
\newtheorem{assumption}[theorem]{Assumption}
\theoremstyle{remark}
\newtheorem{remark}[theorem]{Remark}
\tikzset{
->, 
>=stealth',
node distance=2cm, 
every state/.style={thick, fill=gray!10},
initial text=$ $, 
}
\DeclareMathOperator{\E}{\mathbb{E}}
\DeclareMathOperator{\argmin}{argmin}
\DeclareMathOperator{\argmax}{argmax}
\DeclareMathOperator{\relint}{ri}
\DeclareMathOperator{\dom}{dom}
\newcommand{\R}{{{\mathbb R}}}
\newcommand{\ps}[1]{\langle #1 \rangle}
\newcommand{\ms}{\mathcal{S}}
\newcommand{\ma}{\mathcal{A}}
\newcommand{\mc}{\mathcal{C}}
\newcommand{\mgr}{\mathcal{G}_\rho}
\newcommand{\mg}{\mathcal{G}}
\newcommand{\mz}{\mathcal{Z}}
\newcommand{\hvk}{{\hat{V}_h^{\pk}}}
\newcommand{\hqk}{{\hat{Q}_h^\pk}}
\newcommand{\pk}{{\pi_{k}}}
\newcommand{\pkp}{{\pi_{k + 1}}}
\newcommand{\tk}{{\mathcal{T}^{\pi_k}}}
\newcommand{\tkp}{{\mathcal{T}^{\pi_{k+1}}}}
\newcommand{\tto}{{\mathcal{T}}}
\newcommand{\vk}{{V^{\pi_k}}}
\newcommand{\vkp}{{V^{\pi_{k + 1}}}}
\newcommand{\tp}{{\mathcal{T}^\pi}}
\newcommand{\tth}{{\mathcal{T}^{h - 1}}}
\begin{document}

\author{Kimon Protopapas\thanks{Department of Computer Science, ETH Z\"urich, Switzerland. Contact: \texttt{kprotopapas@student.ethz.ch}, \texttt{barakat9anas@gmail.com}. Most of this work was completed when both authors were affiliated with ETH Z\"urich, K.P as a Master student and A.B. as a postdoctoral fellow. A.B. is currently affiliated with Singapore University of Technology and Design as a research fellow.} \quad Anas Barakat\footnotemark[1]}
\title{Policy Mirror Descent with Lookahead}
\maketitle

\begin{abstract}
Policy Mirror Descent (PMD) stands as a versatile algorithmic framework encompassing several seminal policy gradient algorithms such as natural policy gradient, with connections with state-of-the-art reinforcement learning (RL) algorithms such as~TRPO and PPO. PMD can be seen as a soft Policy Iteration algorithm implementing regularized 1-step greedy policy improvement. 
However, 1-step greedy policies might not be the best choice and recent remarkable empirical successes in RL such as AlphaGo and AlphaZero have demonstrated that greedy approaches with respect to multiple steps outperform their 1-step counterpart. In this work, we propose a new class of PMD algorithms called $h$-PMD which incorporates multi-step greedy policy improvement with lookahead depth~$h$ to the PMD update rule. To solve discounted infinite horizon Markov Decision Processes with discount factor~$\gamma$, we show that $h$-PMD which generalizes the standard PMD enjoys a faster dimension-free $\gamma^h$-linear convergence rate, contingent on the computation of multi-step greedy policies.
We propose an inexact version of $h$-PMD where lookahead action values are estimated. Under a generative model, we establish a sample complexity for $h$-PMD which improves over prior work. Finally, we extend our result to linear function approximation to scale to large state spaces. Under suitable assumptions, our sample complexity only involves dependence on the dimension of the feature map space instead of the state space size. 
\end{abstract}

\section{Introduction}

Policy Mirror Descent (PMD) is a general class of algorithms for solving reinforcement learning (RL) problems. Motivated by the surge of interest in understanding popular Policy Gradient (PG) methods, PMD has been recently investigated in a line of works \citep{lan23mp-pmd,xiao22jmlr,li-et-al23homotopic,johnson-et-al23optimal-pmd}. Notably, PMD encompasses several PG methods as particular cases via its flexible mirror map. A prominent example is the celebrated Natural PG method. PMD has also close connections to state-of-the-art methods such as TRPO and PPO \citep{mdpo} which have achieved widespread empirical success \citep{trpo,ppo}, including most recently in fine-tuning Large Language Models via RL from human feedback \citep{ouyang-et-al22openai-llm}. Interestingly, PMD has also been inspired by one of the most fundamental algorithms in RL: Policy Iteration (PI). While PI alternates between policy evaluation and policy improvement, PMD regularizes the latter step to address the instability issue of PI with inexact policy evaluation. 

Policy Iteration and its variants have been extensively studied in the literature, see e.g. \citep{bertsekas-tsitsiklis96book,munos03,scherrer13}. In particular, PI has been studied in conjunction with the lookahead mechanism \citep{efroni-et-al18beyond}, i.e. using multi-step greedy policy improvement instead of single-step greedy policies. Intuitively, the idea is that the application of the Bellman operator multiple times before computing a greedy policy leads to a more accurate approximation of the optimal value function. Implemented via Monte Carlo Tree Search (MCTS), multi-step greedy policy improvement\footnote{Note that this is different from $n$-step return approaches which are rather used for policy evaluation \citep{sutton-barto18book}.} contributed to the empirical success of some of the most impressive applications of RL including AlphaGo, AlphaZero and MuZero~\citep{silver-et-al16tree-search,silver-et-al17,silver-et-al18,muzero}. Besides this practical success, a body of work \citep{efroni-et-al18beyond,efroni-et-al18multi-step-greedy,tomar-et-al20multi-step-greedy,winnicki-et-al23lookahead-aistats,winnicki-et-al21-lookahead-fa} has investigated the role of lookahead in improving the performance of RL algorithms, reporting a convergence rate speed-up when enhancing PI with $h$-step greedy policy improvement with a reasonable computational overhead. 

In this work, we propose to cross-fertilize the PMD class of algorithms with multi-step greedy policy improvement to obtain a novel class of PMD algorithms enjoying the benefits of lookahead. 
Our contributions are as follows: 
\begin{itemize}[left=0in]

\item We propose a novel class of algorithms called $h$-PMD enhancing PMD with  multi-step greedy policy updates where~$h$ is the depth of the lookahead. 
This class collapses to standard PMD when~$h=1\,.$ 
When the stepsize parameter goes to infinity, we recover the PI algorithm with multiple-step greedy policy improvement previously analyzed in \citep{efroni-et-al18beyond}. When solving a Markov Decision Process problem with $h$-PMD, we show that the value suboptimality gap converges to zero geometrically with a contraction factor of~$\gamma^h$. This rate is faster than the standard convergence rate of PI and the similar rate for PMD which was recently established in \citep{johnson-et-al23optimal-pmd} for~$h \geq 2$. This rate improvement requires the computation of a $h$-step lookahead value function at each iteration which can be performed using planning methods such as tree-search. These results for exact $h$-PMD are exposed in Sec.~\ref{sec:exact-h-pmd}, when value functions can be evaluated exactly.

\item  We examine the inexact setting where the $h$-step action value functions are not available. 
In this setting, we propose a Monte Carlo sample-based procedure to estimate the unknown $h$-step lookahead action-value function at each state-action pair. 
We provide a sample complexity for this Monte Carlo procedure, improving over standard PMD thanks to the use of lookahead. Larger lookahead depth translates into a better sample complexity and a faster suboptimality gap convergence rate. However, this improvement comes at a more intensive computational effort to perform the lookahead steps. 
Sec.~\ref{sec:inexact-h-pmd} discusses this inexact setting and the aforementioned tradeoff. 

\item We extend our results to the function approximation setting to address the case of large state spaces where tabular methods are not tractable. In particular, we design a $h$-PMD algorithm where the $h$-step lookahead action-value function is approximated using a linear combination of state-action feature vectors in the policy evaluation step. Under linear function approximation and using a generative sampling model, we provide a performance bound for our $h$-PMD algorithm with linear function approximation. Our resulting sample complexity only depends on the dimension of the feature map space instead of the size of the state space and improves over prior work analyzing PMD with function approximation.   

\item We perform simulations to verify our theoretical findings empirically on the standard DeepSea RL environment from Deep Mind's \textit{bsuite} \citep{osband-bsuite}. Our experiments illustrate the convergence rate improvement of $h$-PMD with increasing lookahead depth~$h$ in both the exact and inexact settings.
\end{itemize}

\section{Preliminaries}

\noindent\textbf{Notation.} For any integer~$n \in \mathbb{N}  \setminus \{0\}$, the set of integers from~$1$ to $n$ is denoted by~$[n]$. For a finite set~$\mathcal{X},$ we denote by~$\Delta(\mathcal{X})$ the probability simplex over the set~$\mathcal{X}\,.$ For any~$d \in \mathbb{N}$, we endow the  Euclidean space~$\R^d$ with the norm~$\|\cdot\|_{\infty}$ defined for every~$v \in \R^d$ by~$\|v\|_{\infty} := \max_{i \in [d]}|v_i|$ where~$v_i$ is the $i$-th coordinate.
The relative interior of a set~$\mathcal{Z}$ is denoted by~$\relint(\mathcal{Z})\,.$ 

\noindent\textbf{Markov Decision Process (MDP).} We consider an infinite horizon discounted Markov Decision Process~$\mathcal{M} = (\ms, \ma, r, \gamma, P, \rho)$ where $\ms$ and $\ma$ are finite sets of states and actions respectively
with cardinalities~$S = |\mathcal{S}|$ and~$A = |\mathcal{A}|$ respectively, $P: \mathcal{S} \times \mathcal{A} \to \Delta(\mathcal{S})$ is the Markovian probability transition kernel, $r: \ms \times \ma \rightarrow [0, 1]$ is the reward function, $\gamma \in (0,1)$ is the discount factor and~$\rho \in \Delta(\mathcal{S})$ is the initial state distribution. 
A randomized stationary policy is a mapping~$\pi: \mathcal{S} \to \Delta(\mathcal{A})$ specifying the probability~$\pi(a|s)$ of selecting action~$a$ in state~$s$ for any~$s \in \mathcal{S}, a \in \mathcal{A}.$ The set of all such policies is denoted by~$\Pi$. At each time step, the learning agent takes an action~$a$ with probability~$\pi(a|s)$, the environment transitions from the current state~$s \in \mathcal{S}$ to the next state~$s' \in \mathcal{S}$ with probability~$P(s'|s,a)$ and the agent receives a reward~$r(s,a)$. 

\noindent\textbf{Value functions and optimal policy.} 
For any policy~$\pi \in \Pi$, we define the state value function~$V^{\pi}: \ms \to \R$ for every~$s \in \ms$ by~$V^\pi (s) = \E_{\pi} \left[\sum_{t = 0}^{\infty} \gamma^t r(s_t, a_t) | s_0 = s \right]\,.$ The state-action value function~$Q^{\pi}: \ms \times \ma \to \R$ can be similarly defined for every~$s \in \ms, a \in \ma$ by~$Q^\pi (s,a) = \E_{\pi} \left[\sum_{t = 0}^{\infty} \gamma^t r(s_t, a_t) | s_0 = s, a_0 = a \right]$ where~$\E_{\pi}$ is the expectation over the state-action Markov chain~$(s_t, a_t)$ induced by the MDP and the policy~$\pi$ generating actions. The goal is to find a policy~$\pi$ maximizing~$V^{\pi}$. A classic result shows that there exists an optimal deterministic policy~$\pi^{\star} \in \Pi$ maximizing~$V^{\pi}$ simultaneously for all the states~\citep{puterman14}. In this work, we focus on searching for an $\epsilon$-optimal policy, i.e. a policy~$\pi$ satisfying~$\|V^{\star} - V^{\pi}\|_{\infty} \leq \epsilon\,$ where~$V^{\star} := V^{\pi^{\star}}\,.$ 

\noindent\textbf{Bellman operators and Bellman equations.} 
We will often represent the reward function~$r$ by a vector in~$\R^{SA}$ and the transition kernel by the operator~$P \in \R^{SA \times S}$ acting on vectors~$v \in \R^S$ (which can also be seen as functions defined over~$\mathcal{S}$) as follows: $(Pv) (s,a) = \sum_{s' \in \mathcal{S}} P(s'|s,a) v(s')$ for any~$s \in \mathcal{S}, a \in \mathcal{A}.$ We further define the mean operator~$M^{\pi}: \R^{SA} \to \R^S$ mapping any vector~$Q \in \R^{SA}$ to a vector in $\R^S$ whose components are given by~$(M^{\pi}Q)(s) = \sum_{a \in \mathcal{A}} \pi(a|s)Q(s,a)\,,$ and the maximum operator~$M^{\star}: \R^{SA} \to \R^S$ defined by~$(M^{\star}Q)(s) = \max_{a \in \mathcal{A}} Q(s,a)$ for any~$s \in \mathcal{S}, Q \in \R^{SA}\,.$  
Using these notations, we introduce for any given policy~$\pi \in \Pi$ the associated expected Bellman operator~$\tp: \R^{S} \to \R^{S}$ defined for every~$V \in \R^{S}$ by~$\tp V := M^{\pi} (r + \gamma P V)$ and the Bellman optimality operator~$\tto : \R^{S} \to \R^{S}$ defined by~$\tto V := M^{\star} (r + \gamma P V)$ for every~$V \in \R^S\,.$  
Using the notations~$r^{\pi} := M^{\pi} r$ and~$P^{\pi} := M^{\pi} P$, we can also simply write~$\tp V = r^\pi + \gamma P^{\pi} V$ and~$\tto V = \max_{\pi \in \Pi} \tp V$ for any~$V \in \R^S\,.$   

The Bellman optimality operator~$\tto$ and the expected Bellman operator~$\tp$ for any given policy~$\pi \in \Pi$ are~$\gamma$-contraction mappings w.r.t. the~$\|\cdot\|_{\infty}$ norm and hence admit unique fixed points~$V^{\star}$ and~$V^{\pi}$ respectively. In particular, these operators satisfy the following inequalities: $\lVert \tp V - V^\pi \rVert_\infty \leq \gamma \lVert V - V^\pi \rVert_\infty$ and~$\lVert \tto V - V^\star \rVert_\infty \leq \gamma \lVert V - V^\star \rVert_\infty$ for any~$V \in \R^S\,.$ Moreover, we recall that for $\mathbf{e} = (1, 1, \ldots, 1) \in \mathbb{R}^{S}$ and any~$c \in \mathbb{R}_{\geq 0}, V \in \R^S$, we have~$\tp (V + c \,\mathbf{e}) = \tp V + \gamma c \,\mathbf{e}$ and~$\tto (V + c \,\mathbf{e}) = \tto V + \gamma c \,\mathbf{e}\,.$  
We recall that the set~$\mg (V^{\star}):=  \{\pi \in \Pi: \tp V^{\star} = \tto V^{\star}\}$ of $1$~-step greedy policies w.r.t. the optimal value~$V^{\star}$ coincides with the set of stationary optimal policies.

\section{A Refresher on PMD and PI with Lookahead}

\subsection{PMD and its Connection to PI}
Policy Gradient methods consist in performing gradient ascent updates with respect to an objective~$\mathbb{E}_{\rho}[V^{\pi}(s)]$ w.r.t. the policy~$\pi$ parametrized by its values~$\pi(a|s)$ for~$s, a \in \mathcal{S} \times \mathcal{A}$ in the case of a tabular policy parametrization. Interestingly, policy gradient methods have been recently shown to be closely connected to a class of Policy Mirror Descent algorithms by using dynamical reweighting of the Bregman divergence with discounted visitation distribution coefficients in a classical mirror descent update rule using policy gradients. We refer the reader to section~4 in \citep{xiao22jmlr} or section~3.1 in~\citep{johnson-et-al23optimal-pmd} for a detailed exposition. 
The resulting PMD update rule is given by
\begin{equation}
        \pi^{k + 1}_s \in \argmax_{\pi_s \in \Delta(\mathcal{A})} \left\{\eta_k \langle Q^\pk_s, \pi_s \rangle - D_{\phi}(\pi_s, \pi^k_s)\right\}\,,
    \tag{PMD}
\end{equation}
where we use the shorthand notations~$\pi^k_s = \pk (\cdot | s) $ and~$\pi_s = \pi (\cdot | s)$, where~$(\eta_k)$
is a sequence of positive stepsizes, $Q^\pk_s \in \mathbb{R}^{A}$ is a vector containing state action values for $\pk$ at state $s$, and $D_{\phi}$ is the Bregman divergence induced by a mirror map~$\phi: \dom \phi \to \R$ such that~$\Delta(\mathcal{A}) \subset \dom \phi\,,$ i.e. for any~$\pi, \pi' \in \Pi, s \in \mathcal{S},$
\begin{equation*}
D_{\phi}(\pi_s, \pi'_s) = \phi(\pi_s) - \phi(\pi'_s) - \ps{\nabla \phi(\pi'_s), \pi_s - \pi'_s}\,, 
\end{equation*}
where we suppose throughout this paper that the function~$\phi$ is of Legendre type, i.e. strictly convex and essentially smooth in the relative interior of~$\dom \phi$ (see~\citep{rockafellar-et-al70cvx-analysis}, section~26). The mirror map choice gives rise to a large class of algorithms. Two special cases are noteworthy: (a) When~$\phi$ is the squared $2$-norm, the Bregman divergence is the squared Euclidean distance and the resulting algorithm is a projected $Q$-ascent and (b) when~$\phi$ is the negative entropy, the corresponding Bregman divergence is the Kullback-Leibler~(KL) divergence and~(PMD) becomes the popular Natural Policy Gradient algorithm (see e.g. \citep{xiao22jmlr}, section 4 for details).

Now, using our operator notations, one can observe that the (PMD) update rule above is equivalent to the following update rule (see Lemma~\ref{lem:equivalence-pmd} for a proof), 
\begin{equation}
\label{eq:pmd-1}
    \pi_{k + 1} \in \argmax_{\pi \in \Pi} \left\{\eta_k \tp \vk - D_{\phi}(\pi, \pk)\right\}\,,
\end{equation}
where~$D_{\phi}(\pi, \pi_k) \in \mathbb{R}^{S}$ is a vector whose components are given by~$D_{\phi}(\pi_s, \pi^k_s)$ for~$s \in \ms\,.$
Note that the maximization can be carried out independently for each state component.\\

\noindent\textbf{Connection to Policy Iteration.} 
As previously noticed in the literature \citep{xiao22jmlr,johnson-et-al23optimal-pmd},  
PMD can be interpreted as a `soft' PI. Indeed, taking infinite stepsizes~$\eta_k$  in PMD (or a null Bregman divergence) immediately leads to the following update rule: 
\begin{equation}
    \pi_{k + 1} \in \argmax_{\pi \in \Pi} \left\{\tp \vk \right\}\,,
    \tag{PI} 
\end{equation}
which corresponds to synchronous Policy Iteration (PI). 
The method alternates between a policy evaluation step to estimate the value function~$\vk$ of the current policy~$\pi_k$ and a policy improvement step implemented as a one-step greedy policy with respect to the current value function estimate.  

\subsection{Policy Iteration with Lookahead}

As discussed in \citep{efroni-et-al18beyond}, Policy Iteration can be generalized 
by performing $h \geq 1$ greedy policy improvement steps at each iteration instead of a single step. The resulting $h$-PI update rule is: 

\begin{equation}
    \pkp \in \argmax_{\pi \in \Pi}  \left\{\tp \tth \vk \right\}\,,
    \tag{$h$-PI} 
\end{equation}
where~$h \in \mathbb{N} \setminus \{0\}\,.$ 
The~$h$-greedy policy~$\pi^{k+1}$ w.r.t~$V^{\pi_k}$ selects the first optimal action of a non-stationary $h$-horizon optimal control problem. It can also be seen as the $1$-step greedy policy w.r.t.~$\tto^{h-1} V^{\pi_k}\,.$  
In the rest of this paper, we will denote by~$\mg_h(V)$ the set of $h$-step greedy policies w.r.t. any value function $V \in \R^S$, i.e.~$\mg_h (V) := \argmax_{\pi \in \Pi} \tp \tth V = \{\pi \in \Pi: \tp \tth V = \tto^h V\}$. Notice that~$\mg_1 (V) = \mg (V)$. 
Overall, the $h$-PI method alternates between finding a $h$-greedy policy and estimating the value of this policy. Interestingly, similarly to PI, a $h$-greedy policy guarantees monotonic improvement, i.e. $V^{\pi'} \geq V^{\pi}$ component-wise for any~$\pi \in \Pi, \pi' \in \mg_h (V^{\pi})\,.$ Moreover, since~$\tto^h$ is a~$\gamma^h$-contraction, it can be shown that the sequence~$(\|V^{\star} - V^{\pi_k}\|_{\infty})_k$ is contracting with coefficient~$\gamma^h\,.$
See section~3 in \citep{efroni-et-al18beyond} for further details about~$h$-PI.\\

\begin{remark}
While the iteration complexity always improves with larger depth~$h$ (see Theorem~3 in \citep{efroni-et-al18beyond}), each iteration becomes more computationally demanding. As mentioned in \citep{efroni-et-al19tree-search}, when the model is known, the $h$-greedy policy can be computed with Dynamic Programming (DP) in linear time in the depth~$h$. 

Another possibility is to implement a tree-search of depth~$h$ starting from the root state~$s$ to compute the $h$-greedy policy in deterministic MDPs. We refer to \citep{efroni-et-al19tree-search} for further details. Sampling-based tree search methods such as Monte Carlo Tree Search (MCTS) (see~\citep{browne-et-al12survey-mcts}) can be used to circumvent the exponential complexity in the depth~$h$ of tree search methods.
\end{remark}

\section{Policy Mirror Descent with Lookahead} 
\label{sec:exact-h-pmd}

In this section we propose our novel Policy Mirror Descent (PMD) algorithm which incorporates $h$-step lookahead for policy improvement: $h$-PMD.

\subsection{$h$-PMD: Using $h$-Step Greedy Updates}

Similarly to the generalization of PI to $h$-PI discussed in the previous section, we obtain our algorithm $h$-PMD by incorporating lookahead to PMD as follows: 
\begin{equation*}\label{alg:hpmd-exact}
    \pkp = \argmax_{\pi \in \Pi} \left\{\eta_k \tp \tto^{h - 1} \vk - D_{\phi}(\pi, \pk) \right\}\,.
    \tag{$h$-PMD}
\end{equation*}
Notice that we recover~\eqref{eq:pmd-1}, i.e. (PMD), by setting~$h=1\,.$ 
We now provide an alternative way to write the~$h$-PMD update rule which is more convenient for its implementation. We introduce a few additional notations for this purpose. For any policy~$\pi \in \Pi,$ let~$V_h^{\pi} := \tto^{h-1} V^{\pi}$ for any integer~$h \geq~1\,.$ Consider the corresponding action value function defined for every~$s, a \in \ms \times \ma$ by:
\begin{equation} 
\label{eq:Qh}
Q_h^{\pi}(s,a) := r(s,a) + \gamma (P V_{h}^{\pi})(s,a)\,.
\end{equation}
Using these notations, it can be easily shown that the $h$-PMD rule above is equivalent to: 
\begin{equation}
\pi^{k+1}_s = \argmax_{\pi \in \Pi} \left\{\eta_k \ps{Q_h^{\pi_k}(s,\cdot), \pi_s} - D_{\phi}(\pi_s, \pi_s^{k}) \right\}\,.
    \tag{$h$-PMD'}
\end{equation}
Before moving to the analysis of this scheme, we provide two concrete examples for the implementation of $h$-PMD' corresponding to the choice of two standard mirror maps.

\noindent (1) \textit{Projected $Q_h$-ascent.} When the mirror map~$\phi$ is the squared $2$-norm, the corresponding Bregman divergence is the squared Euclidean distance and $h$-PMD' becomes 
\begin{equation}
\label{eq:hpmd-euclidean}
\pi_s^{k+1} = \text{Proj}_{\Delta(\mathcal{A})}(\pi_s^k + \eta_k Q_h^{\pi_k}(s,\cdot))\,,
\end{equation}
for every~$s \in \ms$ and~$\text{Proj}_{\Delta(\mathcal{A})}$ is the projection operator on the simplex~$\Delta(\ma)\,.$

\noindent (2) \textit{Multiplicative $Q_h$-ascent.} When the mirror map~$\phi$ is the negative entropy, the Bregman divergence~$D_{\phi}$ is the Kullback-Leibler divergence and for every~$(s,a) \in \ms \times \ma$,  
\begin{equation}
\label{eq:hpmd-KL}
\pi_{k+1}(a|s) = \pi_k(a|s) \frac{\exp(\eta_k Q_h^{\pi_k}(s,a))}{Z_k(s)}\,, \quad Z_k(s):= \sum_{a \in \ma} \pi_k(a|s) \exp(\eta_k Q_h^{\pi_k}(s,a))\,.
\end{equation}

\noindent\textbf{Connection to AlphaZero.}
Before switching gears to the analysis, we point out an interesting connection between $h$-PMD and the successful class of AlphaZero algorithms \citep{silver-et-al17, silver-et-al18, muzero} which are based on heuristics.

It has been shown in \citep{grill-et-alMCTS-as-regpol} that AlphaZero can be seen as a regularized policy optimization algorithm, drawing a connection to the standard PMD algorithm (with $h=1$). We argue that AlphaZero is even more naturally connected to our $h$-PMD algorithm. Indeed, the $Q$ values in~\citep[Eq.~(7) p. 3]{grill-et-alMCTS-as-regpol} correspond more closely to the lookahead action value function $Q_h^\pi$ approximated via tree search instead of the standard $Q^\pi$ function with $h = 1$. This connection clearly delineates the dependence on the lookahead depth (or tree search depth)~$h$ which was not clear in \citep{grill-et-alMCTS-as-regpol}. 
We have implemented a version of our $h$-PMD algorithm with Deep Mind’s MCTS implementation (see section~\ref{app:subsec-mcts} and the code provided for details). Conducting further experiments to compare our algorithm to AlphaZero on similar large scale settings would be interesting. We are not aware of any theoretical convergence guarantee for AlphaZero which relies on many heuristics. In contrast, our $h$-PMD algorithm enjoys theoretical guarantees that we establish in the next sections.

\subsection{Convergence Analysis for Exact $h$-PMD}

Using the contraction properties of the Bellman operators, it can be shown that the suboptimality gap~$\|V^{\pi_k} - V^*\|_{\infty}$ for PI iterates converges to zero at a linear rate with a contraction factor~$\gamma$, regardless of the underlying MDP. This instance-independent convergence rate was  generalized to PMD in \citep{johnson-et-al23optimal-pmd}. 
In this section, we establish a linear convergence rate for the subobtimality value function gap of the exact $h$-PMD algorithm with a contraction factor of~$\gamma^h$ where~$h$ is the lookahead depth. 
We assume for now that the value function~$V^{\pi_k}$ and a greedy policy~$\pi_{k+1}$ in ($h$-PMD) can be computed exactly. These assumptions will be relaxed in the next section dealing with inexact $h$-PMD. 

\begin{theorem}[Exact $h$-PMD]
\label{thm:exact-pmd-lookahead}
     Let~$(c_k)$ be a sequence of positive reals and let the stepsize $\eta_k$ in~($h$-PMD) satisfy~$\eta_k \geq \frac{1}{c_k} \lVert \min_{\pi \in  \mg_h(\vk)} D_{\phi}(\pi, \pi_k)\rVert_\infty$ where we recall that~$\mg_h(\vk)$ is the set of greedy policies with respect to $\tto^{h - 1}\vk$ and that the minimum is computed component-wise.
     Initialized at~$\pi_0 \in \relint(\Pi)$, the iterates~$(\pi_k)$ of~($h$-PMD) with~$h \in \mathbb{N} \setminus \{0\}$ satisfy for every~$k \in \mathbb{N}$, 
\begin{equation*}
        \lVert V^\star - \vk \rVert_\infty \leq \gamma^{hk} \left( \lVert V^\star - V^{\pi_0} \rVert_\infty + \frac{1}{1-\gamma}\sum_{t = 1}^k  \frac{c_{t - 1}}{ \gamma^{ht}}\right).
\end{equation*}

\end{theorem}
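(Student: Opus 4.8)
The plan is to prove a one-step recursion of the form $\lVert V^\star - \vkp\rVert_\infty \leq \gamma^h \lVert V^\star - \vk\rVert_\infty + \tfrac{c_k}{1-\gamma}$ and then unroll it. The argument follows the template of the exact $h$-PI contraction bound, with the inexactness of the ($h$-PMD) update relative to a true $h$-greedy policy absorbed into the additive $c_k$ term via the step-size condition.

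First I would extract a per-state first-order inequality from the ($h$-PMD') update. Fix a state $s$ and a competitor policy $\pi^\star \in \mg_h(\vk)$. Since $\pkp_s$ maximizes $\eta_k \ps{Q_h^{\pi_k}(s,\cdot),\pi_s} - D_{\phi}(\pi_s,\pi_s^k)$ and $D_{\phi}\geq 0$, comparing the maximizer against $\pi^\star_s$ and dropping the nonnegative divergence at $\pkp_s$ gives, using the identity $\ps{Q_h^{\pi_k}(s,\cdot),\pi_s} = (\tp \tto^{h-1}\vk)(s)$ together with $(\mathcal{T}^{\pi^\star}\tto^{h-1}\vk)(s) = (\tto^h \vk)(s)$, the bound $(\tto^h\vk)(s) - (\tkp \tto^{h-1}\vk)(s) \leq \tfrac{1}{\eta_k} D_{\phi}(\pi^\star_s,\pi_s^k)$. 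Choosing $\pi^\star$ to be the component-wise minimizer of $D_{\phi}(\cdot,\pi_s^k)$ over $\mg_h(\vk)$ and invoking the step-size assumption turns the right-hand side into $c_k$, yielding the key estimate $\tkp \tto^{h-1}\vk \geq \tto^h\vk - c_k \mathbf{e}$ component-wise.

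Next I would split $V^\star - \vkp = (V^\star - \tto^h\vk) + (\tto^h\vk - \vkp)$. The first term is controlled by the $\gamma^h$-contraction of $\tto^h$ and $V^\star \geq \vk$: since $\tto^h V^\star = V^\star$ and $\tto^h$ is monotone, $0 \leq V^\star - \tto^h\vk \leq \gamma^h \lVert V^\star - \vk\rVert_\infty \mathbf{e}$. The main obstacle is the second term, where exact monotonic improvement fails because $\pkp$ is only approximately $h$-greedy. Instead I would establish an approximate improvement bound: combining the key estimate with $\tto^h\vk \geq \tto^{h-1}\vk$ gives $\tkp u \geq u - c_k\mathbf{e}$ for $u := \tto^{h-1}\vk$; iterating $\tkp$ (monotone, with $\tkp(V+c\mathbf{e}) = \tkp V + \gamma c\,\mathbf{e}$) and summing the resulting geometric series of accumulated errors yields $\vkp \geq \tto^{h-1}\vk - \tfrac{c_k}{1-\gamma}\mathbf{e}$, i.e. $\tto^{h-1}\vk - \vkp \leq \tfrac{c_k}{1-\gamma}\mathbf{e}$. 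Then $\tto^h\vk - \vkp \leq (\tkp\tto^{h-1}\vk - \vkp) + c_k\mathbf{e} = \gamma P^{\pkp}(\tto^{h-1}\vk - \vkp) + c_k\mathbf{e}$, and bounding the stochastic-matrix term by $\tfrac{\gamma c_k}{1-\gamma}\mathbf{e}$ gives $\tto^h\vk - \vkp \leq \tfrac{c_k}{1-\gamma}\mathbf{e}$.

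Finally, adding the two bounds and taking the $\lVert\cdot\rVert_\infty$ norm (legitimate since $V^\star \geq \vkp$ makes the difference nonnegative) produces $\lVert V^\star - \vkp\rVert_\infty \leq \gamma^h\lVert V^\star - \vk\rVert_\infty + \tfrac{c_k}{1-\gamma}$. Unrolling this recursion from $k$ down to $0$ and reindexing the error sum gives $\gamma^{hk}\lVert V^\star - V^{\pi_0}\rVert_\infty + \tfrac{1}{1-\gamma}\sum_{t=1}^k \gamma^{h(k-t)}c_{t-1}$, which equals the claimed bound after factoring out $\gamma^{hk}$. I expect the geometric accumulation of per-iteration errors in the approximate-improvement step to be the only delicate point; the remainder is standard contraction bookkeeping.
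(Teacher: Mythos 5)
Your proof is correct and takes essentially the same route as the paper's: the same key estimate $\tto^h \vk \leq \tkp \tto^{h-1} \vk + c_k \mathbf{e}$ (which the paper derives via the three-point descent lemma while you use a direct optimality comparison --- the extra Bregman terms are discarded there anyway), the same iteration of the monotone affine operator $\tkp$ toward its fixed point $\vkp$ with geometric accumulation of the $c_k$ errors producing the $\tfrac{1}{1-\gamma}$ factor, and the identical one-step recursion $\lVert V^\star - \vkp \rVert_\infty \leq \gamma^h \lVert V^\star - \vk \rVert_\infty + \tfrac{c_k}{1-\gamma}$ unrolled in the same way. Your detour through $\vkp \geq \tto^{h-1}\vk - \tfrac{c_k}{1-\gamma}\mathbf{e}$ followed by the $\gamma P^{\pi_{k+1}}$ step is just a reshuffling of the paper's Lemma~\ref{lem:incdeep}, which bounds $\tkp \tto^{h-1} \vk \leq \vkp + \tfrac{\gamma}{1-\gamma} c_k \mathbf{e}$ directly.
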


A few remarks are in order regarding this result: 
\begin{itemize}[left=0in]
    \item Compared to \citep{johnson-et-al23optimal-pmd}, our new algorithm achieves a faster $\gamma^h$-rate where~$h$ is the depth of the lookahead. It should be noted that the approach in \citep{johnson-et-al23optimal-pmd} is claimed to be optimal over all methods that are guaranteed to increase the probability of the greedy action at each timestep. Adding lookahead to the policy improvement step circumvents this restriction. 
    
    \item Unlike prior work~\citep{xiao22jmlr,lan23mp-pmd} featuring distribution mismatch coefficients which can scale with the size of the state space, our convergence rate does not depend on any other instance-dependent quantities. This is thanks to the analysis which circumvents the use of the performance difference lemma similarly to~\citep{johnson-et-al23optimal-pmd}.  

    \item Choosing $c_t = \gamma^{2h(t+1)} c_0$ for a positive constant $c_0$ for every integer $t$, the bound becomes (after bounding the geometric sum): $\lVert V^{\star} - V^{\pi_k} \rVert_\infty \leq \gamma^{hk} \left( \lVert V^\star - V^{\pi_0} \rVert_{\infty} + \frac{c_0}{(1-\gamma)^2}\right)$. As $c_0 \rightarrow 0$ we recover the linear convergence result of $h$-PI established in \citep{efroni-et-al18beyond}.

    \item 
    This faster rate comes at the cost of a more complex value function computation at each iteration. However, our experiments suggest that the benefits of the faster convergence rate greatly outweigh the extra cost of computing the lookahead, in terms of both overall running time until convergence and sample complexity (in the inexact case). See section~\ref{sec:sims} and Appendix~\ref{appx:simu} for evidence. 

    \item The cost of computing the adaptive stepsizes is typically simply that of computing a Bregman divergence between two policies. See Appendix~\ref{app:subsec:stepsizes-in-thm-exact-hpmd}, \ref{app:subsec:about-stepsizes} for a more detailed discussion.

    \item We defer the proof of Theorem~\ref{thm:exact-pmd-lookahead} to Appendix~\ref{appx:exact-hpmd}. Our proof highlights the relationship between $h$-PMD and $h$-PI through the explicit use of Bellman operators. Notice that even in the particular case of $h=1$, our proof is more compact compared to the one in \citep{johnson-et-al23optimal-pmd} (see sec.~6 and Lemma~A.1 to~A.3 therein for comparison) using our convenient notations. 
\end{itemize}

\section{Inexact and Stochastic $h$-Policy Mirror Descent}
\label{sec:inexact-h-pmd}

In this section, we discuss the case where the lookahead action value~$Q_h^{\pi_k}$ defined in~\eqref{eq:Qh} is unknown. 
We propose a procedure to estimate it using Monte Carlo sampling and we briefly discuss an alternative using a tree search method. The resulting inexact $h$-PMD update rule for every~$s \in \ms$ is 
\begin{equation}\label{eq:inexact-hpmd}
    \pi_s^{k + 1} = \argmax_{\pi_s \in \Delta(\ma)} \left\{\eta_k \langle \hat{Q}_h^\pk(s, \cdot), \pi_s \rangle - D_{\phi}(\pi_s, \pi_s^{k}) \right\}, 
\end{equation}
where~$\hat{Q}_h^\pk$ is the estimated version of the lookahead action value~$Q_h^{\pi_k}$ induced by policy~$\pi_k\,.$
We conclude this section by providing a convergence analysis for inexact $h$-PMD and discussing its sample complexity using the proposed Monte Carlo estimator. 

\subsection{Monte Carlo $h$-Greedy Policy Evaluation} 
\label{sec:monte-carlo-planning}
 
Estimating the lookahead action value function~$Q_h^{\pi}$ for a given policy~$\pi$ involves solving a $h$-horizon planning problem using samples from the MDP. Our estimation procedure combines a standard planning method with Monte Carlo estimation under a generative model of the MDP. We give an algorithmic description of the procedure below, and defer the reader to Appendix \ref{appx:subsec-mc-planning} for a precise definition of the procedure.
Applying the recursive algorithm below with $k := h$ returns an estimate for the action value function~$Q_h^{\pi}(s, a)$ at a given state-action pair $(s, a) \in \ms \times \ma$. 

\begin{algorithm}
\caption{Lookahead Q-function Estimation via Monte Carlo Planning}\label{alg:monte-carlo-h-greedy}
\begin{algorithmic}
\State \textbf{Procedure} $Q(k, s, a, \pi)$
\If{$k = 1$}
    \State \textbf{return} $r(s, a)  + \frac{\gamma}{M} \sum_{i = 1}^{M} \hat{V}^{\pi}(s_i')$ (where $s_i' \sim \mathcal{P}(\cdot | s, a) $ for $i \in [M]$)
\Else
    \State \textbf{return} $r(s, a)  + \frac{\gamma}{M} \sum_{i = 1}^{M} \max_{a' \in \ma} Q(k - 1, s_i', a', \pi)$ (where $s_i' \sim \mathcal{P}(\cdot | s, a) $ for $i \in [M]$)
\EndIf
\end{algorithmic}
\end{algorithm}

Note that the base case of the recursion estimates the value function using Monte Carlo rollouts, see Appendix \ref{appx:subsec-mc-planning}.

\begin{remark}
Notice that actions are exhaustively selected in our procedure like in a planning method. Using bandit ideas to guide Monte Carlo planning (see e.g. \citep{kocsis-szepesvari06}) would be interesting to investigate for more efficiency. 
We leave the investigation of such selective action sampling and exploration for future work. In practice, lookahead policies are often computed using tree search techniques \citep{efroni-et-al18beyond}. 

\end{remark}

\begin{remark}
When~$h=1$, the procedure collapses to a simple Monte Carlo estimate of the Q-value function~$Q^{\pi_k}$ at each iteration~$k$ of the $h$-PMD algorithm. 
\end{remark}

\subsection{Analysis of Inexact $h$-PMD}

The next theorem is a generalization of Theorem~\ref{thm:exact-pmd-lookahead} to the inexact setting in which the lookahead function~$Q_h^{\pi_k}$ can be estimated with some errors at each iteration~$k$ of $h$-PMD. 

\begin{theorem}[Inexact $h$-PMD]
\label{thm:inexact-hpmd}
Suppose there exists~$b \in \mathbb{R}_+$ s.t.  $\lVert \hat{Q}_h^\pk - Q_h^\pk \rVert_\infty \leq b$ where the maximum norm is over both the state and action spaces. Let $\Tilde{\mg}_h = \argmax_{\pi \in \Pi}\, \,M^\pi \hat{Q}^\pk_h$ and let~$(c_k)$ be a sequence of positive reals and let the stepsize $\eta_k$ in~($h$-PMD) satisfy~$\eta_k \geq \frac{1}{c_k} \lVert \min_{\pi \in  \Tilde{\mg}_h} D_{\phi}(\pi, \pi_k)\rVert_\infty$ where we recall that the minimum is computed component-wise. 
Initialized at~$\pi_0 \in \relint(\Pi)$, the iterates~$(\pi_k)$ of inexact~($h$-PMD) with~$h \in \mathbb{N} \setminus \{0\}$ satisfy for every~$k \in \mathbb{N}$, 
\begin{equation*}
    \lVert V^\star - \vk \rVert_\infty \leq \gamma^{hk} \left( \lVert V^\star - V^{\pi_0} \rVert_\infty + \frac{1}{1 - \gamma} \sum_{t = 1}^k \frac{c_{t - 1}}{\gamma^{ht}}\right) 
    + \frac{2b}{(1 -\gamma)(1 - \gamma^h)}\,.
\end{equation*}
\end{theorem}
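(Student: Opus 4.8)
The plan is to reduce everything to Theorem~\ref{thm:exact-pmd-lookahead}: I will establish a single inexact contraction inequality of the form $\lVert V^\star - \vkp \rVert_\infty \leq \gamma^h \lVert V^\star - \vk \rVert_\infty + \frac{c_k + 2b}{1-\gamma}$ and then unroll it, so that the analysis is identical to the exact case after the substitution $c_k \mapsto c_k + 2b$, with the only genuinely new ingredient being a separate geometric summation of the extra $2b$ terms.

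First I would exploit the optimality of the inexact update~\eqref{eq:inexact-hpmd}. Since $\phi$ is of Legendre type, $\pi^{k+1}_s$ is the unique maximizer, and the three-point optimality inequality gives, for every state $s$ and every competitor $\pi_s$,
\[
\eta_k \ps{\hat{Q}_h^{\pk}(s,\cdot), \pi^{k+1}_s} - D_{\phi}(\pi^{k+1}_s, \pi^k_s) \geq \eta_k \ps{\hat{Q}_h^{\pk}(s,\cdot), \pi_s} - D_{\phi}(\pi_s, \pi^k_s) + D_{\phi}(\pi_s, \pi^{k+1}_s).
\]
Choosing $\pi \in \tilde{\mg}_h$ to minimize $D_{\phi}(\pi_s,\pi^k_s)$ component-wise, dropping the nonnegative terms $D_{\phi}(\pi^{k+1}_s,\pi^k_s)$ and $D_{\phi}(\pi_s,\pi^{k+1}_s)$, and invoking the stepsize condition $\eta_k \geq \frac{1}{c_k}\lVert \min_{\pi \in \tilde{\mg}_h} D_{\phi}(\pi, \pk)\rVert_\infty$ yields the ``almost greedy'' inequality $M^{\pkp}\hat{Q}_h^{\pk} \geq M^{\star} \hat{Q}_h^{\pk} - c_k \mathbf{e}$ component-wise. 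Next I would pass back to the true action values: since $M^{\pkp}$ is an averaging operator and $M^{\star}$ a maximum, both are $1$-Lipschitz in $\lVert\cdot\rVert_\infty$, so the assumption $\lVert \hat{Q}_h^{\pk} - Q_h^{\pk}\rVert_\infty \leq b$ gives $M^{\pkp} Q_h^{\pk} \geq M^{\pkp}\hat{Q}_h^{\pk} - b\mathbf{e}$ and $M^{\star}\hat{Q}_h^{\pk} \geq M^{\star} Q_h^{\pk} - b\mathbf{e}$. Recalling $M^{\pkp} Q_h^{\pk} = \tkp \tth \vk$ and $M^{\star} Q_h^{\pk} = \tto \tth \vk = \tto^h \vk$, I obtain $\tkp \tth \vk \geq \tto^h \vk - (c_k + 2b)\mathbf{e}$, the inexact analogue of the key estimate in Theorem~\ref{thm:exact-pmd-lookahead} with the error inflated by $2b$.

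From here the argument mirrors the exact case. The sequence $(\tto^j \vk)_j$ is non-decreasing (from $\tto\vk \geq \tk \vk = \vk$ and monotonicity of $\tto$), so $\tto^h \vk \geq \tth \vk$ and hence $\tkp\tth\vk - \tth\vk \geq -(c_k+2b)\mathbf{e}$. Telescoping $\vkp - \tkp\tth\vk = \sum_{n \geq 1}(\gamma P^{\pkp})^n(\tkp\tth\vk - \tth\vk)$ with $(\gamma P^{\pkp})^n\mathbf{e}=\gamma^n\mathbf{e}$ then gives $\vkp \geq \tkp\tth\vk - \frac{\gamma(c_k+2b)}{1-\gamma}\mathbf{e} \geq \tto^h\vk - \frac{c_k+2b}{1-\gamma}\mathbf{e}$. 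Using $V^\star = \tto^h V^\star$, the $\gamma^h$-contraction of $\tto^h$, and $\vkp \leq V^\star$ yields the one-step recursion $\lVert V^\star - \vkp\rVert_\infty \leq \gamma^h \lVert V^\star - \vk\rVert_\infty + \frac{c_k+2b}{1-\gamma}$. Unrolling produces $\gamma^{hk}\lVert V^\star - V^{\pi_0}\rVert_\infty + \frac{1}{1-\gamma}\sum_{t=1}^k \gamma^{h(k-t)} c_{t-1}$ (which equals the Theorem~\ref{thm:exact-pmd-lookahead} bound after factoring $\gamma^{hk}$) plus the new term $\frac{2b}{1-\gamma}\sum_{t=1}^k \gamma^{h(k-t)}$, bounded by the full geometric series $\frac{2b}{(1-\gamma)(1-\gamma^h)}$.

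The main obstacle is the bookkeeping in the second paragraph: one must verify that the estimation error enters additively as exactly $2b$ (once through the averaging operator $M^{\pkp}$ and once through the maximum $M^{\star}$), and that it is the shifted error $c_k + 2b$ that propagates through the monotonicity/contraction machinery, since an off-by-a-factor here would corrupt the final constant. The telescoping step converting the one-step ``almost greedy'' inequality into a genuine lower bound on $\vkp$ is the technical heart; everything else is the same unrolling as in the exact proof, and setting $b=0$ recovers Theorem~\ref{thm:exact-pmd-lookahead} verbatim.
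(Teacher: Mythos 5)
Your proof is correct and follows essentially the same route as the paper's: the $2b$-inflated three-point descent inequality (your step reducing $M^{\star}\hat{Q}_h^{\pk}$ to $M^{\pkp}\hat{Q}_h^{\pk}$ is exactly Lemmas~\ref{lem:approx}--\ref{lem:subapprox}), followed by propagation to the fixed point $\vkp$ and the $\gamma^h$-contraction recursion with geometric summation of the error. The only cosmetic difference is that you obtain the bound $\vkp \geq \tkp\tth\vk - \frac{\gamma(c_k+2b)}{1-\gamma}\mathbf{e}$ via the closed-form telescoping identity $\vkp - \tkp\tth\vk = \sum_{n\geq 1}(\gamma P^{\pkp})^n(\tkp\tth\vk - \tth\vk)$ using the affine form of $\tkp$, where the paper's Lemma~\ref{lem:incapprox} derives the same estimate by induction on repeated applications of $\tkp$ with monotonicity and the translation property.
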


The proof of this result is similar to the proof of Theorem~\ref{thm:exact-pmd-lookahead} and consists in propagating the error~$b$ in the analysis. We defer it to Appendix~\ref{appx:inexact-hpmd}.  

As can be directly seen from this formulation, the $\gamma^h$ convergence rate from the exact setting generalizes to the inexact setting. Therefore, 
higher lookahead depths lead to faster convergence rates. 
Another improvement with relation to~\citep{johnson-et-al23optimal-pmd} is that the additive error term has a milder dependence on the effective horizon~$1-\gamma$: the term $\frac{2b}{(1 - \gamma)(1 - \gamma^h)}$ is smaller for all values of~$h\geq 2$ than $\frac{4b}{(1 - \gamma)^2}$. Larger lookahead depths yield a smaller asymptotic error in terms of the suboptimality gap. 

We are now ready to establish the sample complexity of inexact $h$-PMD under a generative model using Theorem~\ref{thm:inexact-hpmd} together with concentration results for our Monte Carlo lookahead action value estimator. 
\begin{theorem}[Sample complexity of $h$-PMD]\label{thm:inexact-hpmd-sample}
Assume that inexact $h$-PMD is run for a number of iterations~$K > \frac{1}{h(1 - \gamma)} \log (\frac{4}{\epsilon(1 - \gamma)(1 -\gamma^h)})$, using the choice of stepsize defined by the sequence $(c_k) := (\gamma^{2h (k + 1)})$. Additionally, suppose we are given a target suboptimality value $\epsilon > 0$, and a probability threshold~$\delta > 0$. Finally, assume the lookahead value function $\hat{Q}_h^\pk$ is approximated at each iteration with the Monte Carlo estimation procedure described in section \ref{sec:monte-carlo-planning}, with the following parameter values: $M_0 = \tilde{\mathcal{O}}(\frac{\gamma^{2h}}{(1 - \gamma)^4(1 - \gamma^h)^2 \varepsilon^2})$ and for all $ j \in [h] \,, M_j = M = \tilde{\mathcal{O}}(\frac{1}{(1 - \gamma)^6 \varepsilon^2})\,.$
Then, with probability at least $1 - \delta$, the suboptimality at all iterations $k$ satisfy the following bound:
\begin{equation}
    \lVert V^\star - \vk \rVert_\infty \leq \gamma^{hk} \left(\lVert V^\star - V^{\pi_0}\rVert_\infty + \frac{1 - \gamma^{hk}}{(1 - \gamma)(1 - \gamma^{h})}\right) + \epsilon
\end{equation}

Using this procedure, $h$-PMD uses at most $K  M_0 H S + h KM S A$ samples in total. The overall sample complexity of the inexact $h$-PMD is then given by $\tilde{\mathcal{O}}(\frac{S}{h \epsilon^2(1 - \gamma)^6 (1 - \gamma^h)^2} + \frac{SA}{\epsilon^2(1 - \gamma)^7})$ where the notation $\tilde{\mathcal{O}}$ hides at most polylogarithmic factors.

\end{theorem}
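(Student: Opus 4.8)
The plan is to combine the deterministic recursion of Theorem~\ref{thm:inexact-hpmd} with a high-probability control of the Monte Carlo estimation error $b = \max_k \lVert \hat{Q}_h^\pk - Q_h^\pk\rVert_\infty$. First I would substitute the prescribed schedule $c_{t-1} = \gamma^{2ht}$ into the bound of Theorem~\ref{thm:inexact-hpmd}. The error sum then collapses to a geometric series, $\frac{1}{1-\gamma}\sum_{t=1}^k \gamma^{2ht}/\gamma^{ht} = \frac{1}{1-\gamma}\sum_{t=1}^k \gamma^{ht} \le \frac{1-\gamma^{hk}}{(1-\gamma)(1-\gamma^h)}$, which reproduces the transient term in the statement. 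It then remains to force the additive term $\frac{2b}{(1-\gamma)(1-\gamma^h)}$ below $\epsilon$, i.e. to guarantee $b \le \tfrac12\epsilon(1-\gamma)(1-\gamma^h)$ uniformly over the $K$ iterations with probability at least $1-\delta$. The hypothesis on $K$ is used separately: bounding $\lVert V^\star - V^{\pi_0}\rVert_\infty \le \frac{1}{1-\gamma}$ and $\frac{1-\gamma^{hk}}{(1-\gamma)(1-\gamma^h)} \le \frac{1}{(1-\gamma)(1-\gamma^h)}$, the transient term at $k=K$ is at most $\gamma^{hK}\frac{2}{(1-\gamma)(1-\gamma^h)}$, and the stated lower bound on $K$ together with $\log(1/\gamma)\ge 1-\gamma$ makes this at most $\epsilon$, certifying $O(\epsilon)$-optimality at termination.

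The core of the argument is the concentration bound on $b$, which I would obtain by tracking the error through the recursive estimator of Section~\ref{sec:monte-carlo-planning}. Writing $V_k^{\mathrm{true}} := \tto^{k-1}V^{\pi}$ (so that $\hat{V}_k \approx V_k^{\mathrm{true}}$ and $\hat{Q}_h \approx Q_h^\pi$), set $e_k := \lVert \hat{V}_k - V_k^{\mathrm{true}}\rVert_\infty$ over the visited states $\ms_k$. At the leaves, the base error $e_1 = \lVert \hat{V}^\pi - V^\pi\rVert_\infty$ splits into a truncation bias at most $\frac{\gamma^H}{1-\gamma}$ and a Hoeffding deviation of the $M_0$ rollout returns, each return lying in $[0,\frac{1}{1-\gamma}]$, giving $e_1 \le \frac{\gamma^H}{1-\gamma} + \frac{1}{1-\gamma}\sqrt{\log(2/\delta')/(2M_0)}$. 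At each subsequent stage, $\hat{Q}_k(\tilde{s},\tilde{a}) = r + \frac{\gamma}{M_k}\sum_{s'} \hat{V}_k(s')$ carries the propagated error $\gamma e_k$ plus a fresh Hoeffding deviation of the $M_k$ next-state samples of $V_k^{\mathrm{true}}$ (bounded by $\frac{1}{1-\gamma}$), while the maximization $\hat{V}_{k+1} = \max_{\tilde{a}}\hat{Q}_k$ is non-expansive in $\lVert\cdot\rVert_\infty$ and does not inflate the error. This yields $e_{k+1} \le \gamma e_k + \gamma\beta_k$ with $\beta_k = \frac{1}{1-\gamma}\sqrt{\log(2/\delta')/(2M_k)}$, which unrolls to $b \le \gamma^h e_1 + \sum_{k=1}^h \gamma^{h-k+1}\beta_k$; with $M_k = M$ the per-level sampling contribution is at most $\frac{\gamma(1-\gamma^h)}{(1-\gamma)^2}\sqrt{\log(2/\delta')/(2M)}$.

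I would then split the budget $\tfrac12\epsilon(1-\gamma)(1-\gamma^h)$ across the three sources (truncation, leaf Monte Carlo, per-level sampling) and solve each inequality for the sample sizes, which recovers the stated orders $M = \tilde{\mathcal O}((1-\gamma)^{-6}\epsilon^{-2})$, $M_0 = \tilde{\mathcal O}(\gamma^{2h}(1-\gamma)^{-4}(1-\gamma^h)^{-2}\epsilon^{-2})$, and $H = \mathcal O((1-\gamma)^{-1}\log(1/(\epsilon(1-\gamma))))$. The union bound is applied over every node where a Hoeffding inequality is invoked, i.e. the $M_k$-sample transition averages at the internal state–action nodes and the $M_0$-sample leaf estimates, aggregated across the $K$ iterations; choosing $\delta'$ so the total failure probability is at most $\delta$ produces exactly the factors $\log(4h|\ma||\ms_0||\mc|/\delta)$ and $\log(4|\ms_0||\mc|/\delta)$ in $M$ and $M_0$. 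Crucially, since the sampled transition tree does not depend on $\pi_k$ it can be generated once and reused across iterations, which is what lets the transition-sample count avoid an extra factor of $K$. Finally I would count samples: $M_0$ rollouts of length $H$ from each of the $|\ms|$ states per iteration give $K M_0 H |\ms|$ trajectory samples, and the $h$ layers of $M$ transition samples from the $|\ms||\ma|$ pairs give $h M|\ms||\ma|$; substituting the chosen orders and $K = \tilde{\mathcal O}(1/(h(1-\gamma)))$ (and absorbing $h \le (1-\gamma)^{-1}$ into the second term) yields the advertised $\tilde{\mathcal O}\big(\frac{|\ms|}{h\epsilon^2(1-\gamma)^6(1-\gamma^h)^2} + \frac{|\ms||\ma|}{\epsilon^2(1-\gamma)^7}\big)$.

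The main obstacle is the second paragraph: correctly propagating the sampling error through the interleaved Bellman–max recursion while keeping the union-bound cardinality (the tree-node factor $|\mc|$) under control, and in particular exploiting the non-expansiveness of the maximum to prevent error amplification across the $h$ levels. Everything else—the geometric-series simplification, the $K$ threshold, and the final sample count—is routine bookkeeping once the per-iteration guarantee $b \le \tfrac12\epsilon(1-\gamma)(1-\gamma^h)$ is secured.
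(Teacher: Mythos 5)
Your proposal follows the paper's proof in all its essentials: you substitute $c_{t-1} = \gamma^{2ht}$ into Theorem~\ref{thm:inexact-hpmd} and collapse the error sum to the geometric factor $\frac{1-\gamma^{hk}}{(1-\gamma)(1-\gamma^h)}$; you reduce the problem to a uniform bound on $b = \max_k \lVert \hat{Q}_h^\pk - Q_h^\pk\rVert_\infty$ (the paper takes $b = \epsilon(1-\gamma)(1-\gamma^h)/4$, which is where the constant $144$ comes from — your budget $b \leq \epsilon(1-\gamma)(1-\gamma^h)/2$ gives $36$ instead, same orders); and your level-by-level error recursion $e_{k+1} \leq \gamma e_k + \gamma \beta_k$ is the $V$-level version of the paper's $Q$-level recursion $\Delta_k \leq \gamma \Delta_{k-1} + \mathcal{E}_k$ derived via the sampled operator $\hat{J}$. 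In particular you correctly apply Hoeffding to the \emph{deterministic} function $\tto^{k-1}V^{\pi}$ evaluated at the i.i.d.\ sampled successor states, rather than to the random $\hat{V}_k$ — this is exactly the decomposition $\hat{J}\hat{V}_k - \hat{J}\tto^{k-1}V^\pi$ plus $\hat{J}\tto^{k-1}V^\pi - J\tto^{k-1}V^\pi$ that makes the paper's argument go through, and your treatment of the leaf term (truncation bias $\gamma^H/(1-\gamma)$ plus rollout deviation) and of the union bound over tree nodes (with $|\ms_0| \leq |\ma|^h M_h M^{h-1}$) matches the paper's.

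There is, however, one genuine flaw in your bookkeeping: the claim that ``since the sampled transition tree does not depend on $\pi_k$ it can be generated once and reused across iterations,'' which you invoke to remove the factor $K$ from the transition-sample count and from the log factors. This is not what the paper does, and as stated it is statistically unsound: if the per-level samples are reused, then $\pi_k$ — and hence the function $\tto^{k-1}V^{\pi_k}$ to which you apply Hoeffding at iteration $k$ — is itself a function of those very samples, so the independence required by Hoeffding fails; one would need uniform concentration over all reachable policies, which neither you nor the paper establishes. (The leaf estimates cannot be reused in any case, since they are rollouts of $\pi_k$.) The paper instead draws fresh samples at every iteration: its appendix count is $K M_0 H |\ms| + K h M |\ms||\ma|$, and the advertised second term $\tilde{\mathcal{O}}(|\ms||\ma| \epsilon^{-2}(1-\gamma)^{-7})$ then follows for \emph{all} $h$ because $Kh = \tilde{\mathcal{O}}(1/(1-\gamma))$ — whereas your route needs the unjustified extra hypothesis $h \leq (1-\gamma)^{-1}$ to ``absorb'' $h$ into the second term. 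To be fair, the theorem statement's ``$h M |\ms||\ma|$'' (without $K$) and the absence of a $\log K$ factor in the stated parameter values are inconsistencies of the paper itself (the appendix proof carries the $K$, and the uniformity over iterations would contribute a $\log K$ hidden in the $\tilde{\mathcal{O}}$); but the correct resolution is fresh sampling with these polylogarithmic factors absorbed, not tree reuse.
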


Compared to Theorem~5.1 in \citep{johnson-et-al23optimal-pmd}, the dependence on the effective horizon improves by a factor of the order of~$1/(1 - \gamma)$ for $h > A^{-1}(1 - \gamma)^{-1}$ thanks to our Monte Carlo lookahead estimator. Their sample complexity is of order $\Tilde{\mathcal{O}}(\frac{1}{\epsilon^2 (1 - \gamma)^8})$, whereas for $h > \frac{1}{A(1 - \gamma)}$ ours becomes $\Tilde{\mathcal{O}}(\frac{1}{\epsilon^2(1 -\gamma)^7})$. 

Our sample complexity does not depend on quantities such as the distribution mismatch coefficient used in prior work \citep{xiao22jmlr,lan23mp-pmd}, which may scale with the state space size.

\section{Extension to Function Approximation}
\label{sec:fa-hpmd}

In MDPs with prohibitively large state action spaces, we represent each state-action pair~$(s,a) \in \ms \times \ma$ by a feature vector~$\psi(s,a) \in \R^d$ where typically~$d \ll S A$ and where $\psi : \ms \times \ma \to \mathbb{R}^d$ is a feature map, also represented as a matrix $\Psi \in \mathbb{R}^{SA \times d}$. Assuming that this feature map holds enough ``information" about each state action pair (more rigorous assumptions will be provided later), it is possible to approximate any value function using a matrix vector product $\Psi \theta$, where~$\theta \in \mathbb{R}^d$. 
In this section, we propose an inexact $h$-PMD algorithm using action value function approximation. 

\subsection{Inexact $h$-PMD with Function Approximation}

Using the notations above, approximating $Q^\pk_h$ by $\Psi \theta_k$, the inexact $h$-PMD update rule becomes: 
\begin{equation}
    \pi_s^{k + 1} \in \argmax_{\pi_s \in \Delta(\ms)}\left\{\eta_k \langle (\Psi \theta_k)_s, \pi_s \rangle - D_{\phi}(\pi_s, \pi^k_s)\right\}\,. 
\end{equation}
We note that the policy update above can be implicit to avoid computing a policy for every state. It is possible to implement this algorithm using our Monte Carlo planning method in a similar fashion to~\citep{lattimore-szepesvari-api}: use the procedure in section~\ref{sec:monte-carlo-planning} to estimate $Q_h^{\pi_k}(s, a)$ for all state action pairs~$(s, a)$ in some set $\mc \subset \ms \times \ma$, and use these estimates as targets for a least squares regression to approximate~$Q_h^{\pi_k}$ by~$\Psi\theta_k$ at time step~$k$. 
See Appendix~\ref{appx:hpmd-fun-approx} for further details. 
We conclude this section with a convergence analysis of $h$-PMD with linear function approximation.

\subsection{Convergence Analysis of $h$-PMD with Linear Function Approximation}

Our analysis follows the approach of \citep{lattimore-szepesvari-api}. 
We make the following standard assumptions.

\begin{assumption}
\label{as:full-rank-feature-matrix}
The feature matrix~$\Psi \in \R^{SA \times d}$ where $d \leq SA$ is full rank.
\end{assumption}

\begin{assumption}[Approximate Universal value function realizability]
\label{as:value-approx}
There exists~$\epsilon_{\text{FA}} > 0$ s.t. for any~$ \pi \in \Pi$, $\inf_{\theta \in \mathbb{R}^d} \lVert Q^\pi_h - \Psi \theta \rVert_\infty \leq \epsilon_{\text{FA}}\,.$
\end{assumption}
We defer a discussion about Assumption~\ref{as:value-approx} to  Appendix~\ref{app:subsec:assumption-value-approx}. 

\begin{theorem}[Convergence of $h$-PMD with linear function approximation]
    \label{thm:hpmd-linear-fa} 
    Let $(\theta_k)$ be the sequence of iterates produced by $h$-PMD with linear function approximation, run for $K$ iterations, starting with policy $\pi_0 \in \relint(\Pi)$, using stepsizes defined by $(c_k)$. Assume that targets $\hat{Q}_h^{\pi_k}$ were computed using the procedure described in sec.~\ref{sec:monte-carlo-planning} such that with probability at least $1 - \delta,\,$  $ \forall k\; \leq K\,, \forall z \in \mc \subset \ms \times \ma$ the targets satisfy $|\hat{Q}_h^\pk(z) - Q_h^\pk(z)| \leq \epsilon$.\footnote{This can be achieved by choosing the parameters as in Theorem~\ref{thm:inexact-hpmd-sample} (see Appendix~\ref{appx:hpmd-fun-approx} for details).} 
    
    Then, there exists a choice of $\mc$ for which the policy iterates satisfy at each iteration: 
    \begin{align*}
        \lVert V^\star - \vk \rVert_\infty &\leq \gamma^{hk}\left(\lVert V^\star - V^{\pi_0} \rVert_\infty + \frac{1}{1 - \gamma}\sum_{t = 1}^k \frac{c_{t - 1}}{1 - \gamma}\right) 
        + \frac{2\sqrt{d}\,\epsilon + 2 \, ( 1 + \sqrt{d}) \epsilon_{\text{FA}}}{(1 - \gamma)(1 - \gamma^h)}\,.
    \end{align*}
\end{theorem}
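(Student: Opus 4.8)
The plan is to reduce this statement to the inexact convergence guarantee of Theorem~\ref{thm:inexact-hpmd}. The function-approximation update is exactly the inexact $h$-PMD update~\eqref{eq:inexact-hpmd} with the estimated lookahead action value replaced by the linear surrogate $\Psi\theta_k$. Hence, if I can produce a single bound $\lVert \Psi\theta_k - Q_h^\pk\rVert_\infty \leq b$ that holds uniformly over \emph{all} state-action pairs (not merely over the core set $\mc$ where targets are observed), then Theorem~\ref{thm:inexact-hpmd} applied with this value of $b$ immediately yields the main term together with the additive error $\frac{2b}{(1-\gamma)(1-\gamma^h)}$. The whole difficulty therefore concentrates on controlling the $\ell_\infty$ error of the least-squares fit off the core set, given that the targets are only observed (with Monte Carlo error of size at most $\epsilon$) on $\mc$.

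Following \citet{lattimore-szepesvari-api}, I would choose $\mc$ to be the support of an (approximately) G-optimal experimental design for the feature set $\{\psi(z): z \in \ms\times\ma\}$. By the Kiefer--Wolfowitz theorem there exists a distribution $\mu$ supported on at most $d(d+1)/2$ points with design matrix $V(\mu) := \sum_z \mu(z)\psi(z)\psi(z)^\top$ (invertible since $\Psi$ is full rank by Assumption~1) satisfying the key guarantee $\max_{z} \psi(z)^\top V(\mu)^{-1}\psi(z) \leq d$. Taking $\mc$ to be this support controls the variance of linear prediction uniformly across all state-action pairs with only a $\sqrt{d}$ amplification, which is precisely the origin of the $\sqrt{d}$ factors in the stated bound.

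For the error bound itself, fix an iteration $k$ and let $\theta^\star$ denote the best linear approximation, so that by the realizability assumption $\lVert Q_h^\pk - \Psi\theta^\star\rVert_\infty \leq \lVert \epsilon_{\pi_k}\rVert_\infty$. Writing each target as $\hat{Q}_h^\pk(z) = \psi(z)^\top\theta^\star + e_z$ with $|e_z| \leq \epsilon + \lVert\epsilon_{\pi_k}\rVert_\infty$ (combining the Monte Carlo error on $\mc$ with the realizability residual), the weighted least-squares solution satisfies $\hat\theta_k - \theta^\star = V(\mu)^{-1}\sum_{z}\mu(z)\psi(z)e_z$. For any point $z'$, a Cauchy--Schwarz step in the $V(\mu)^{-1}$ inner product together with the Kiefer--Wolfowitz guarantee then gives
\begin{equation*}
|\psi(z')^\top(\hat\theta_k - \theta^\star)| \leq \Big(\max_z |e_z|\Big)\sqrt{\psi(z')^\top V(\mu)^{-1}\psi(z')} \leq \sqrt{d}\,\big(\epsilon + \lVert\epsilon_{\pi_k}\rVert_\infty\big)\,.
\end{equation*}
Adding back the realizability residual through the triangle inequality yields the uniform bound $\lVert \Psi\theta_k - Q_h^\pk\rVert_\infty \leq (1+\sqrt{d})\lVert\epsilon_{\pi_k}\rVert_\infty + \sqrt{d}\,\epsilon$.

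Finally, I would take the worst case over $i \leq k$ to obtain a single $b = \max_{i\leq k}\lVert\epsilon_{\pi_i}\rVert_\infty(1+\sqrt{d}) + \sqrt{d}\,\epsilon$ valid at every iteration, and invoke Theorem~\ref{thm:inexact-hpmd} with this $b$; substituting into $\frac{2b}{(1-\gamma)(1-\gamma^h)}$ reproduces the claimed additive term while the geometric main term is inherited directly. I expect the main obstacle to be exactly this off-core-set control of the least-squares error: a naive argument only certifies accuracy on $\mc$, and it is the optimal-design choice of $\mc$ (via Kiefer--Wolfowitz) that converts pointwise accuracy on a set of size $O(d^2)$ into a uniform $\ell_\infty$ guarantee with the stated $\sqrt{d}$ dependence. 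A secondary point requiring care is the union bound over iterations ensuring that the Monte Carlo targets satisfy $|\hat{Q}_h^\pk(z) - Q_h^\pk(z)| \leq \epsilon$ simultaneously for all $k \leq K$ and all $z \in \mc$ with probability at least $1-\delta$, which is what dictates the sample sizes quoted in the footnote and imports the $\log(\cdots|\mc|/\delta)$ factors from Theorem~\ref{thm:inexact-hpmd-sample}.
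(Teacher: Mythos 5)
Your proposal is correct and follows essentially the same route as the paper: a Kiefer--Wolfowitz optimal-design choice of $\mc$, the least-squares extrapolation bound $\lVert \Psi\theta_k - Q_h^{\pi_k}\rVert_\infty \leq (1+\sqrt{d})\lVert\epsilon_{\pi_k}\rVert_\infty + \sqrt{d}\,\epsilon$ (the paper's Theorem~\ref{thm:fa-lookahead-error-bound}, which it imports from \citet{lattimore-szepesvari-api}), and then invocation of the error-propagation guarantee of Theorem~\ref{thm:inexact-hpmd} with $b = \max_{i\leq k}\lVert\epsilon_{\pi_i}\rVert_\infty(1+\sqrt{d}) + \sqrt{d}\,\epsilon$. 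One small point: your key display is right, but the clean way to obtain it is Cauchy--Schwarz with respect to the design measure $\rho$ combined with the identity $\sum_{z\in\mc}\rho(z)\psi(z)\psi(z)^\top = \mgr$, which collapses the sum to $\sqrt{\psi(z')^\top \mgr^{-1}\psi(z')} \leq \sqrt{d}$; a naive Cauchy--Schwarz in the $\mgr^{-1}$ inner product applied termwise would only give a factor $d$.
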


The proof of Theorem~\ref{thm:hpmd-linear-fa} can be found in Appendix~\ref{appx:proof-thm63}. 
Notice that our performance bound does not depend on the state space size like in sec.~\ref{sec:inexact-h-pmd} and depends on the dimension~$d$ of the feature space instead. 
We should notice though that the choice of the set~$\mathcal{C}$ is crucial for this result and given by the Kiefer-Wolfowitz theorem~\citep{kiefer-wolfowitz60} which guarantees that~$|\mathcal{C}| = \mathcal{O}(d^2)$ as discussed in~\citep{lattimore-szepesvari-api}.  
Moreover,
our method for function approximation yields a sample complexity that is instance independent. Existing results for PMD with function approximation \citep{alfano-yuan-rebeschini23pmd-param, yuan-et-al23log-linear-npg} depend on distribution mismatch coefficients which can scale with the size of the state space.

In terms of computational complexity, at each iteration, $h$-PMD uses $O((MA)^h)$ elementary operations where $M$ is the size of the minibatch of trajectories used, $A$ is the size of the action space and $h$ is the lookahead depth. This computational complexity which scales exponentially in $h$ is inherent to tree search methods. Despite this seemingly prohibitive computational scaling, tree search has been instrumental in practice for achieving state of the art results in some environments \citep{muzero, hessel2022} and modern implementations (notably using GPU-based search on vectorized environments) make tree search much more efficient in practice. 

\begin{figure}[t]
      \centering 
      \includegraphics[width=0.8\textwidth,height=0.2\textheight]{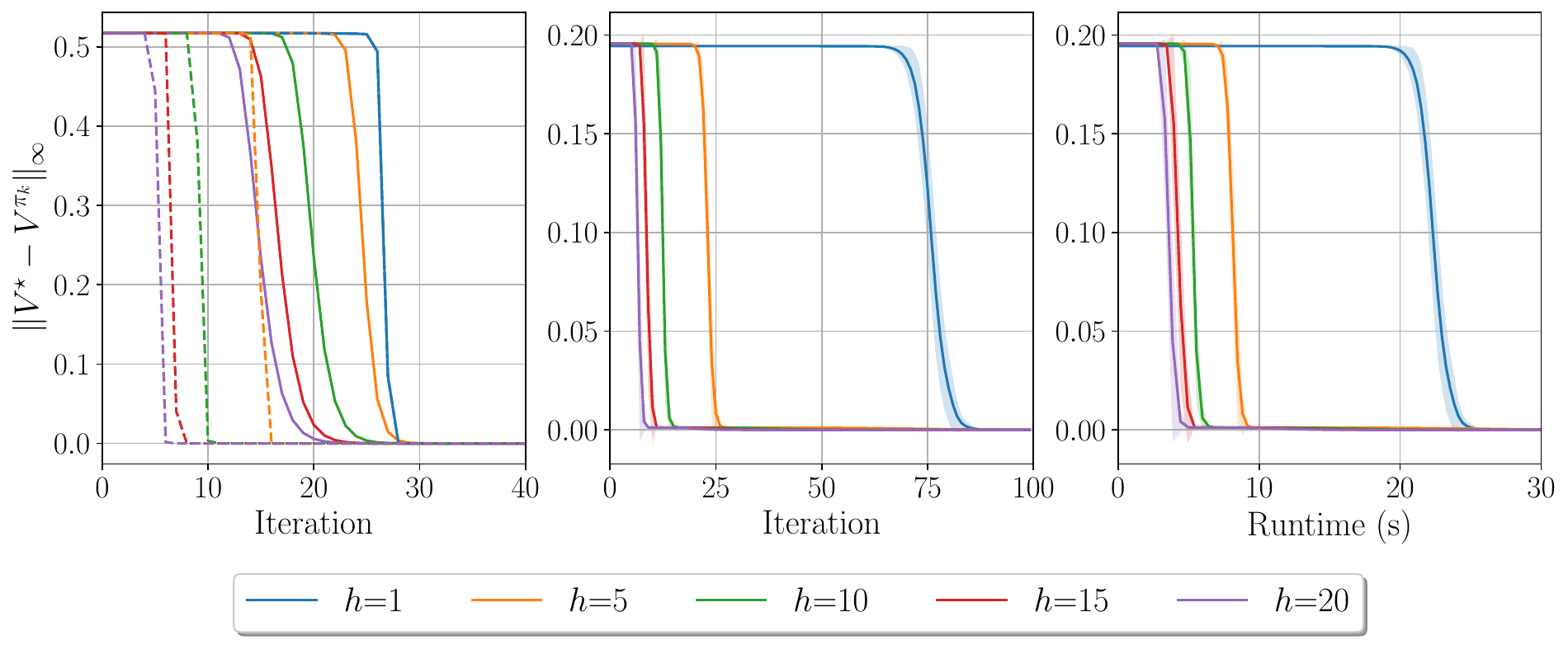}
\caption{Suboptimality value function gap for $h$-PMD in the exact (left) and inexact (middle/right) settings, plotted against iterations in the exact case (left) and against both iterations (middle) and runtime (right) in the inexact case. 16 runs performed for each~$h$, mean in solid line and standard deviation as shaded area.
In dotted lines (left), the step size $\eta_k$ is equal to its lower bound in sec.~\ref{sec:exact-h-pmd}, with the choice~$c_k := \gamma^{2h (k + 1)}$ (note the dependence on $h$) and in solid lines, the step size $\eta_k$ is set using an identical stepsize schedule across all values of~$h$ with~$c_k := \gamma^{2 (k + 1)}$ to isolate the effect of the lookahead. 
Notice that higher values of $h$ still perform better even in terms of runtime.}
\vspace{-5mm}
\label{fig:deqa-triple}
\end{figure}

\section{Simulations} \label{sec:sims}

We conduct simulations to investigate the effect of the lookahead depth on the convergence rate and illustrate our theoretical findings. We run the $h$-PMD algorithm for different values of~$h$ in both exact and inexact settings on the DeepSea environment from DeepMind's \textit{bsuite} \citep{osband-bsuite} using a grid size of 64 by 64, and a discount factor $\gamma = 0.99$. Additional experiments are provided in Appendix~\ref{appx:simu}. Our codebase where all our experiments can be replicated is available here: \url{https://gitlab.com/kimon.protopapa/pmd-lookahead}.

\noindent\textbf{Exact $h$-PMD.}
We run the exact $h$-PMD algorithm for 100 iterations for increasing values of $h$ using the KL divergence. Similar results were observed for the Euclidean divergence.
We tested two different stepsize schedules: (a) in dotted lines in Fig.~\ref{fig:deqa-triple} (left), $\eta_k$ equal to its lower bound in sec.~\ref{sec:exact-h-pmd}, with the choice~$c_k := \gamma^{2h (k + 1)}$ (note the dependence on $h$); and (b) in solid lines, $\eta_k$ identical stepsize schedule across all values of~$h$ with~$c_k := \gamma^{2 (k + 1)}$ to isolate the effect of the lookahead. We clearly observe under both stepsize schedules that $h$-PMD converges in fewer iterations when lookahead value functions are used instead of regular $Q$ functions.

\noindent\textbf{Inexact $h$-PMD.}
In this setting, we estimated the value function $Q_h$ using the vanilla Monte Carlo planning procedure detailed in section \ref{sec:inexact-h-pmd} in a stochastic variant of the DeepSea environment, and all the parameters except for $h$ were kept identical across runs. 
As predicted by our results, $h$-PMD converges in fewer iterations when $h$ is increased (see Fig.~\ref{fig:deqa-triple} (middle)). We also observed in our simulations that $h$-PMD uses less samples overall (see Appendix~\ref{appx:simu} for the total number of samples used at each iteration), and usually converges after less overall computation time (see Fig.~\ref{fig:deqa-triple} (right)). We also refer the reader to Appendix~\ref{appx:simu} where we have performed additional experiments with a larger lookahead depth~$h = 100.$ 
In this case, the algorithm converges in a single iteration. This is theoretically expected as computing the lookahead values with very large~$h$ boils down to computing the optimal values like in value iteration with a large number of iterations. 
We also performed additional experiments in continuous control tasks to illustrate the general applicability of our algorithm (see Fig.~\ref{fig:muzero-gym} in Appendix~\ref{appx:simu}).\\

\begin{remark}{(\textbf{Choice of the lookahead depth~$h$.})} The lookahead depth is a hyperparameter of the algorithm and can be tuned similarly to other hyperparameters such as the step size of the algorithm. Of course, the value would depend on the environment and the structure of the reward at hand. Sparse and delayed reward settings will likely benefit from lookahead with larger depth values. We have performed several simulations with different values of $h$ for each environment setting and the performance can potentially improve drastically with a better lookahead depth value (see also appendix~\ref{appx:simu} for further simulations). In addition, we observe that larger lookahead depth is not always better: see Fig.~\ref{fig:interm-values-h-better} in the appendix for an example where large lookahead depth becomes slower and does not perform better. Note that in this more challenging practical setting the best performance is not obtained for higher values of $h$: intermediate values of $h$ perform better. This illustrates the tradeoff in choosing the depth $h$ between an improved convergence rate and the computational cost induced by a larger $h$. 
We believe further investigation regarding the selection of the depth parameter might be useful to further improve the practical performance of the algorithm.  
\end{remark}

\section{Related Work}

\noindent\textbf{Policy Mirror Descent and Policy Gradient Methods.} 
Motivated by their empirical success \citep{trpo, ppo}, the analysis of PG methods has recently attracted a lot of attention \citep{agarwal-et-al21theory-pg,bhandari-russo21,khodadadian-et-al22natural-pg,bhandari-russo24or-global-opt-pg,xiao22jmlr,lan23mp-pmd,johnson-et-al23optimal-pmd}. 
Among these works, a line of research focused on the analysis of PMD \citep{xiao22jmlr,lan23mp-pmd,li-lan23pmd-exploration,li-et-al23homotopic,johnson-et-al23optimal-pmd,alfano-yuan-rebeschini23pmd-param} as a flexible algorithmic framework, and its particular instances such as the celebrated natural policy gradient \citep{khodadadian-et-al22natural-pg,yuan-et-al23log-linear-npg}. In particular, for solving unregularized MDPs (which is our main focus), Xiao \citep{xiao22jmlr} established linear convergence of PMD in the tabular setting with a rate depending on the instance dependent distribution mismatch coefficients, and extended their results to the inexact setting using a generative model. Similar results were established for the particular case of the natural policy gradient algorithm \citep{bhandari-russo21,khodadadian-et-al22natural-pg}. More recently, Johnson, Pike-Burke, and Rebeschini \citep{johnson-et-al23optimal-pmd} showed a dimension-free rate for PMD with exact policy evaluation using adaptive stepsizes, closing the gap with PI. Notably, they further show that this rate is optimal and their analysis inspired from PI circumvents the use of the performance difference lemma which was prevalent in prior work. 
We improve over these results using lookahead and we extend our analysis beyond the tabular setting by using linear function approximation. To the best of our knowledge, the use of multi-step greedy policy improvement in PMD and its analysis are novel in the literature. 

\noindent\textbf{Policy Iteration with Multiple-Step Policy Improvement.} 
Multi-step greedy policies have been successfully used in applications such as the game of Go and have been studied in a body of works \citep{efroni-et-al18beyond, efroni-et-al18multi-step-greedy, winnicki-et-al23lookahead-aistats, efroni-online,tomar-et-al20multi-step-greedy}. In particular, multi-step greedy policy improvement has been studied in conjunction with PI in \citep{efroni-et-al18beyond,winnicki-et-al23lookahead-aistats}. Efroni et al. \citep{efroni-et-al18beyond} introduced $h$-PI which incorporates $h$-lookahead to PI and generalized existing analyses on PI with 1-step greedy policy improvement to $h$-step greedy policies. However, their work requires access to an $h$-greedy policy oracle, and the $h$-PI algorithm is unstable in the stochastic setting even for $h = 1$ because of potentially large policy updates. We address all these issues in the present work with our $h$-PMD algorithm.
Winnicki and Srikant \citep{winnicki-et-al23lookahead-aistats} showed that a first-visit version of a PI scheme using a single sample path for policy evaluation converges to the optimal policy provided that the policy improvement step uses lookahead instead of a $1$-step greedy policy. Unlike our present work, the latter work assumes access to a lookahead greedy policy oracle. More recently, Alegre et al. \citep{alegre-et-al23} proposed to use lookahead for policy improvement in a transfer learning setting. Extending Generalized Policy Improvement (GPI) \citep{barreto-et-al20gpi} which identifies a new policy that simultaneously improves over all previously learned policies each solving a specific task, they introduce $h$-GPI which is a multi-step extension of GPI. We rather consider a lookahead version of PMD including policy gradient methods instead of PI. We focus on solving a single task and provide policy optimization guarantees in terms of iteration and sample complexities.  

\noindent\textbf{Lookahead-Based Policy Improvement with Tree Search.}
Lookahead policy improvement is usually implemented via tree search methods \citep{browne-et-al12survey-mcts,shah-mcts, muzero, uct,parallelts,efroni-et-al19tree-search}. Recently, Dalal et al. \citep{softtreemax} proposed a method combining policy gradient and tree search to reduce the variance of stochastic policy gradients. The method relies on a softmax policy parametrization incorporating a tree expansion. Our general $h$-PMD framework which incorporates a different lookahead policy improvement step also brings together PG and tree search methods. Morimura et al. \citep{morimura-mcts} designed a method with a mixture policy of PG and MCTS for non-Markov Decision Processes. Grill et al. \citep{grill-et-al-mcts-regularized} showed that AlphaZero as well as several other MCTS algorithms compute approximate solutions to a family of regularized policy optimization problems which bear similarities with our $h$-PMD algorithm update rule for $h=1$ (see sec.~3.2 therein).  

\section{Conclusion}

In this work, we introduced a novel class of PMD algorithms with lookahead inspired by the success of lookahead policies in practice. 
We have shown an improved $\gamma^h$-linear convergence rate depending on the lookahead depth in the exact setting. 
We proposed a stochastic version of our algorithm enjoying an improved sample complexity. 
We further extended our results to scale to large state spaces via linear function approximation with a performance bound independent of the state space size.  
Our paper offers several interesting directions for future research. A possible avenue for future work is to investigate the use of more general function approximators such as neural networks to approximate the lookahead value functions and scale to large state-action spaces. Recent work for PMD along these lines \citep{alfano-yuan-rebeschini23pmd-param} might be a good starting point. Enhancing our algorithm with exploration mechanisms (see e.g. the recent work~\citep{epistemic}) is an important future direction. Our PMD update rule might offer some advantage in this regard as was recently observed in the literature \citep{li-lan23pmd-exploration}. 
Constructing more efficient fully online estimators for the lookahead action values using MCTS and designing adaptive lookahead strategies to select the tree search horizon \citep{rosenberg-et-al23} are promising avenues for future work. 
Our algorithm brings together two popular and successful families of RL algorithms: PG methods and tree search methods. Looking forward, we hope our work further stimulates research efforts to investigate practical and efficient methods enjoying the benefits of both classes of methods.

\newpage
\begin{ack}
We would like to thank the anonymous area chair and reviewers for their useful comments which helped us to improve our paper. Most of this work was completed when A.B. was affiliated with ETH Zürich as a postdoctoral fellow. This work was partially supported by an ETH Foundations of Data Science (ETH-FDS) postdoctoral fellowship. 
\end{ack}

\bibliography{main}
\bibliographystyle{abbrvnat}

\newpage
\addtocontents{toc}{\protect\setcounter{tocdepth}{2}}
\tableofcontents

\appendix

\section{Additional Details and Proofs for Section~\ref{sec:exact-h-pmd}: Exact $h$-PMD}
\label{appx:exact-hpmd}

\begin{lemma}
\label{lem:equivalence-pmd}
For any policies~$\pi, \pi' \in \Pi$ and every state~$s \in \mathcal{S},$ we have~$\langle Q^{\pi'}_s, \pi_s \rangle = \tp V^{\pi'} (s)\,.$ 
As a consequence, the update rules~(PMD) and~\eqref{eq:pmd-1} in the main part are equivalent. 
\end{lemma}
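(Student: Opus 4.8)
The plan is to prove the pointwise identity $\langle Q^{\pi'}_s, \pi_s \rangle = \mathcal{T}^{\pi} V^{\pi'}(s)$ for every state $s$, and then derive the equivalence of the two update rules as an immediate consequence. The key observation is that both sides are just different ways of writing the expected one-step value of playing $\pi$ at state $s$ and then following $\pi'$ thereafter.

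First I would expand the left-hand side using the definition of the inner product over the action set and the definition of the state-action value function: $\langle Q^{\pi'}_s, \pi_s \rangle = \sum_{a \in \mathcal{A}} \pi(a|s) Q^{\pi'}(s,a)$. Next I would substitute the Bellman consistency identity $Q^{\pi'}(s,a) = r(s,a) + \gamma (P V^{\pi'})(s,a)$, which follows directly from the definitions of $Q^{\pi'}$ and $V^{\pi'}$ given in the Preliminaries. This turns the sum into $\sum_{a} \pi(a|s)\big(r(s,a) + \gamma (P V^{\pi'})(s,a)\big)$, which by the definition of the mean operator $M^{\pi}$ is exactly $\big(M^{\pi}(r + \gamma P V^{\pi'})\big)(s)$. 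Recognizing this as the definition of the expected Bellman operator, $\mathcal{T}^{\pi} V^{\pi'} = M^{\pi}(r + \gamma P V^{\pi'})$, evaluated at $s$, completes the identity. This is essentially a chain of substitutions unwinding the operator notation, so I expect no real difficulty here.

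For the equivalence of the update rules, I would argue that the objective in (PMD), namely $\eta_k \langle Q^{\pi_k}_s, \pi_s \rangle - D_{\phi}(\pi_s, \pi_s^k)$, equals the $s$-th component of the objective in~\eqref{eq:pmd-1}, namely $\eta_k \mathcal{T}^{\pi} V^{\pi_k} - D_{\phi}(\pi, \pi_k)$, by applying the just-proved identity with $\pi' = \pi_k$. Since~\eqref{eq:pmd-1} decomposes across states (as noted in the main text, the maximization can be carried out independently for each state component), maximizing the vector-valued objective over $\pi \in \Pi$ is equivalent to maximizing each scalar component $\eta_k \langle Q^{\pi_k}_s, \pi_s \rangle - D_{\phi}(\pi_s, \pi_s^k)$ over $\pi_s \in \Delta(\mathcal{A})$ separately, which is precisely the (PMD) update.

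The statement is elementary, so there is no genuine ``hard part''; the only point requiring a little care is making the state-decomposability explicit, i.e. justifying that an $\argmax$ over the whole policy $\pi \in \Pi$ factorizes into independent $\argmax$ problems over each $\pi_s \in \Delta(\mathcal{A})$. This holds because the Bregman divergence vector has independent components $D_{\phi}(\pi_s, \pi_s^k)$ and the term $\mathcal{T}^{\pi}V^{\pi_k}(s)$ depends only on $\pi_s$ (through $M^{\pi}$), so there is no coupling between states in the objective; I would state this factorization explicitly rather than leave it implicit.
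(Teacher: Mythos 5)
Your proposal is correct and follows essentially the same route as the paper's proof: expanding the inner product, substituting $Q^{\pi'}(s,a) = r(s,a) + \gamma (P V^{\pi'})(s,a)$, and recollecting terms via $M^{\pi}$ (the paper writes this through $r^{\pi}$ and $P^{\pi}$, which is the same computation), then setting $\pi' = \pi_k$ for the equivalence. Your explicit remark on the state-wise factorization of the $\argmax$ is a small point the paper leaves implicit, but it does not change the argument.
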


\begin{proof}
Let~$\pi, \pi' \in \Pi$. For every~$s \in \mathcal{S}$, we have
\begin{align}
\langle Q^{\pi'}_s, \pi_s \rangle &= \sum_{a \in \mathcal{A}} Q^{\pi'}(s,a) \,\pi(a|s) \tag{by definition of the scalar product}\\
&= \sum_{a \in \mathcal{A}} \left(r(s,a) + \gamma \sum_{s' \in \mathcal{S}} P(s'|s,a) V^{\pi'}(s') \right) \pi(a|s) \tag{$Q^{\pi'}$ as a function of~$V^{\pi'}$}\\
&= r^{\pi}(s) + \gamma \sum_{s' \in \mathcal{S}} \left(\sum_{a \in \mathcal{A}} P(s'|s,a) \pi(a|s) \right) V^{\pi'}(s') \tag{by rearranging and using the definition of~$r^{\pi}$}\\
&= r^{\pi}(s) + \gamma  P^{\pi} V^{\pi'}(s) \tag{using the definition of~$P^{\pi}$}\\
&= \tp V^{\pi'}(s)\,.
\end{align}
The second part of the statement holds by setting~$\pi' = \pi_k\,.$ 

\end{proof}

\subsection{Auxiliary Lemma for the Proof of Theorem~\ref{thm:exact-pmd-lookahead}: Exact $h$-PMD}

\begin{lemma}[Triple Point Descent Lemma for $h$-PMD]
\label{lem:3pddeep}
        Let~$h \in \mathbb{N} \setminus \{0\}\,.$ 
        Initialized at~$\pi_0 \in \relint(\Pi)$, the iterates of $h$-step greedy PMD remain in $\relint(\Pi)$ and satisfy the following inequality for any~$k \in \mathbb{N}$ and any policy $\pi \in \Pi$,
        \begin{equation*}
            \eta_k \tp \tto^{h - 1}\vk \leq \eta_k \tkp \tth \vk + D_{\phi}(\pi, \pk) - D_{\phi}(\pi, \pkp) - D_{\phi}(\pkp, \pk)
        \end{equation*}
    \end{lemma}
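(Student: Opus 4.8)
The plan is to reduce everything to a per-state inequality, since the ($h$-PMD') update is solved independently for each $s \in \ms$, and then to combine the first-order optimality condition of that per-state maximization with the standard three-point identity for Bregman divergences, translating the resulting scalar products into Bellman operators via an analog of Lemma~\ref{lem:equivalence-pmd}.

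First I would settle the relative-interior claim by induction on $k$. Assuming $\pk \in \relint(\Pi)$, fix a state $s$ and consider the per-state objective $g_s(\pi_s) := \eta_k \ps{Q_h^{\pi_k}(s,\cdot), \pi_s} - D_{\phi}(\pi_s, \pi_s^k)$, which is strictly concave (by strict convexity of $\phi$) and upper semicontinuous on $\Delta(\ma)$. Because $\phi$ is of Legendre type, essential smoothness forces $\|\nabla\phi(\pi_s)\| \to \infty$ as $\pi_s$ approaches the relative boundary of $\Delta(\ma)$, so the directional derivative of $\pi_s \mapsto D_{\phi}(\pi_s, \pi_s^k)$ diverges there while the linear term stays bounded on the compact simplex; hence $g_s \to -\infty$ toward the boundary and its unique maximizer $\pi_s^{k+1}$ lies in $\relint(\Delta(\ma))$ (see \citet{rockafellar-et-al70cvx-analysis}). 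This both proves $\pkp \in \relint(\Pi)$ and guarantees that $\nabla\phi(\pi_s^{k+1})$ is well-defined, which is what legitimizes the subsequent manipulations.

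Next I would write the first-order optimality condition for the concave maximization of $g_s$ over the convex set $\Delta(\ma)$: for every $\pi_s \in \Delta(\ma)$,
\[
\ps{\eta_k Q_h^{\pi_k}(s,\cdot) - \nabla\phi(\pi_s^{k+1}) + \nabla\phi(\pi_s^k),\ \pi_s - \pi_s^{k+1}} \le 0,
\]
i.e. $\eta_k \ps{Q_h^{\pi_k}(s,\cdot),\ \pi_s - \pi_s^{k+1}} \le \ps{\nabla\phi(\pi_s^{k+1}) - \nabla\phi(\pi_s^k),\ \pi_s - \pi_s^{k+1}}$. Applying the three-point identity $\ps{\nabla\phi(y) - \nabla\phi(z),\ x - y} = D_{\phi}(x,z) - D_{\phi}(x,y) - D_{\phi}(y,z)$ with $x=\pi_s$, $y=\pi_s^{k+1}$, $z=\pi_s^k$ turns the right-hand side into $D_{\phi}(\pi_s, \pi_s^k) - D_{\phi}(\pi_s, \pi_s^{k+1}) - D_{\phi}(\pi_s^{k+1}, \pi_s^k)$.

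Finally I would eliminate the scalar products. Repeating the computation of Lemma~\ref{lem:equivalence-pmd} with $V^{\pi'}$ replaced by $V_h^{\pi_k} = \tth\vk$ gives the identity $\ps{Q_h^{\pi_k}(s,\cdot), \pi_s} = \tp \tth \vk(s)$ for any $\pi \in \Pi$, and in particular $\ps{Q_h^{\pi_k}(s,\cdot), \pi_s^{k+1}} = \tkp \tth \vk(s)$. Substituting these into the previous display and rearranging (and recalling $\tto^{h-1} = \tth$) yields the claimed component-wise inequality for every $s \in \ms$ and arbitrary comparator $\pi \in \Pi$. I expect the relative-interior step to be the main obstacle: it is the only nontrivial part, since all the Bregman and optimality manipulations presuppose that $\nabla\phi(\pi_s^{k+1})$ exists, so the Legendre/essential-smoothness argument must be carried out cleanly before any optimality condition is written.
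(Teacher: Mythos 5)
Your derivation of the inequality is, in substance, the paper's own proof with the cited black box opened up: the paper reduces to the per-state problem, uses the identity $\ps{Q_h^{\pi_k}(s,\cdot), \pi_s} = (\tp \tth \vk)(s)$ (the $Q_h$ analogue of Lemma~\ref{lem:equivalence-pmd}), and then invokes the three-point descent lemma (Lemma~6 of \citet{xiao22jmlr}, a variant of Lemma~3.2 in \citet{chen-teboulle93}), whose proof is exactly the first-order optimality condition plus the three-point Bregman identity that you carry out by hand. So the inequality part of your argument is correct and follows the same route.

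The genuine gap is in the step you yourself single out as the main obstacle. Essential smoothness of a Legendre function forces $\lVert \nabla\phi \rVert \to \infty$ only along sequences approaching the boundary of $\dom \phi$, not the relative boundary of $\Delta(\ma)$. The two coincide (on the simplex) for the negative entropy, where your blow-up argument is sound, but not for the paper's other flagship instance, the squared $2$-norm: there $\dom\phi = \R^A$, $\nabla\phi$ is bounded on the simplex, $g_s$ does \emph{not} tend to $-\infty$ at the relative boundary, and the update \eqref{eq:hpmd-euclidean} is a Euclidean projection that can genuinely place zero mass on actions, so $\pi_s^{k+1}$ may sit on the relative boundary of $\Delta(\ma)$. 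What is actually true in general --- and all that your subsequent manipulations require, namely that $\nabla\phi(\pi_s^{k+1})$ exists --- is $\pi_s^{k+1} \in \relint(\dom\phi) \cap \Delta(\ma)$, which is precisely the conclusion of Xiao's Lemma~6: it is automatic in the Euclidean case and reduces to $\relint(\Delta(\ma))$ in the entropy case. With the relative-interior claim stated and justified in that form, the remainder of your argument --- optimality condition, three-point identity, and the substitutions $\ps{Q_h^{\pi_k}(s,\cdot), \pi_s} = (\tp \tth \vk)(s)$ and $\ps{Q_h^{\pi_k}(s,\cdot), \pi_s^{k+1}} = (\tkp \tth \vk)(s)$ --- goes through verbatim and matches the paper.
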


    \begin{proof}
    Recall the definition of the $h$-step greedy action value function denoted by~$Q^\pi_h: \ms \times \ma \to \R$ in~\eqref{eq:Qh}.  
    Then,
    recall that for each state~$s \in \mathcal{S}$ and every~$k \in \mathbb{N}$, we have 
    \begin{equation}
        \label{eq:sp-eq-tv}
       \langle Q^\pk_h (s, \cdot), \pi_s \rangle = (\tp\tth V^\pk)(s)\,.
    \end{equation}
    Applying the three point descent lemma (see Lemma 6 in \citep{xiao22jmlr} with $\phi(\pi) = \ps{Q_{h}^{\pi_k}(s, \cdot), \pi}$ for any~$\pi \in \Pi$ and~$\mathcal{C} = \Delta(\mathcal{A})$ with the notation therein),   
    we obtain for any policy~$\pi \in \Pi$, 
    \begin{equation*}
        - \eta_k \langle Q^\pk_d (s, \cdot), \pi^{k + 1}_s \rangle + D_{\phi}(\pi^{k + 1}_s, \pi^{k}_s) \leq - \eta_k \langle Q^\pk_d (s, \cdot), \pi_s \rangle + D_{\phi}(\pi_s, \pi^{k}_s) - D_{\phi}(\pi_s, \pi^{k + 1}_s)\,.
    \end{equation*} 
    We obtain the desired result by rearranging the above last inequality and using~\eqref{eq:sp-eq-tv}. 
    \end{proof}

    \begin{lemma}\label{lem:incdeep} Let~$h \in \mathbb{N} \setminus \{0\}\,.$ 
        Initialized at~$\pi_0 \in \relint(\Pi)$, the iterates of $h$-step greedy PMD satisfy 
        for every~$k \in \mathbb{N},$ 
        \begin{equation*}
          \tkp \tth \vk \leq  \vkp + \frac{\gamma}{1 - \gamma} c_k \mathbf{e}\,,
        \end{equation*}
        where~$(c_k)$ is the sequence of positive integers introduced in Theorem~\ref{thm:exact-pmd-lookahead} such that~$\frac{\tilde{d}_k}{\eta_k} \leq c_k\,$ and~$\tilde{d}_k = \lVert \min_{\pi \in \mg_h(V^{\pi_k})} D_{\phi}(\pi, \pk) \rVert_\infty$.
    \end{lemma}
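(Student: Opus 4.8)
The plan is to first control the \emph{one-step} greedy gap $\tto^h\vk - \tkp\tth\vk$ pointwise by $c_k\mathbf{e}$, and then to convert this one-step estimate into a bound against the fixed point $\vkp$ by iterating the contraction $\tkp$.

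For the first step I would invoke the Triple Point Descent Lemma~\ref{lem:3pddeep} at a comparison policy $\pi^\star$ chosen, state by state, to be an $h$-greedy policy with respect to $\vk$ that attains the component-wise minimum of $D_{\phi}(\cdot,\pk)$ over $\mg_h(\vk)$. Since $\pi^\star \in \mg_h(\vk)$ we have $\mathcal{T}^{\pi^\star}\tth\vk = \tto^h\vk$, and since the Bregman terms $D_{\phi}(\pi^\star,\pkp)$ and $D_{\phi}(\pkp,\pk)$ are nonnegative, dropping them gives
\[
\eta_k\,\tto^h\vk \;\le\; \eta_k\,\tkp\tth\vk + D_{\phi}(\pi^\star,\pk).
\]
Dividing by $\eta_k$ and using the stepsize hypothesis $\tilde d_k/\eta_k \le c_k$, together with $D_{\phi}(\pi^\star,\pk) = \min_{\pi\in\mg_h(\vk)}D_{\phi}(\pi,\pk) \le \tilde d_k\,\mathbf{e}$, yields the key pointwise estimate $\tto^h\vk - \tkp\tth\vk \le c_k\mathbf{e}$.

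Next I would replace $\tto^h\vk$ by $\tth\vk$ on the left. Because $\tto\vk \ge \tk\vk = \vk$, monotonicity of $\tto$ gives the nondecreasing chain $\tth\vk = \tto^{h-1}\vk \le \tto^h\vk$, so the key estimate upgrades to $\tth\vk - \tkp\tth\vk \le c_k\mathbf{e}$, i.e. $\tth\vk \le \tkp\tth\vk + c_k\mathbf{e}$. To pass to the fixed point I would iterate $\tkp$: writing $a_n := (\tkp)^n\tth\vk$, the monotonicity of $\tkp$ together with the constant-shift identity $\tkp(V + c\mathbf{e}) = \tkp V + \gamma c\,\mathbf{e}$ propagates this inequality to $a_n \le a_{n+1} + \gamma^n c_k\mathbf{e}$ for every $n$. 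Telescoping from $n=1$ and letting $n\to\infty$, the iterates $a_{n+1} = (\tkp)^{n+1}\tth\vk$ converge to the unique fixed point $\vkp$ of the $\gamma$-contraction $\tkp$, while $\sum_{n\ge 1}\gamma^n = \frac{\gamma}{1-\gamma}$; hence $\tkp\tth\vk = a_1 \le \vkp + \frac{\gamma}{1-\gamma}c_k\mathbf{e}$, as claimed.

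I expect the first step to be the main obstacle. One must justify choosing the comparison policy $\pi^\star$ state-wise: this is legitimate because the $h$-greediness condition decouples across states (each $\pi^\star(\cdot\mid s)$ need only be supported on the maximizers of $Q_h^{\pi_k}(s,\cdot)$) and because $D_{\phi}(\pi,\pk)$ likewise decouples across states, so the per-state $D_{\phi}$-minimizing greedy distribution assembles into a genuine policy of $\mg_h(\vk)$ for which $\mathcal{T}^{\pi^\star}\tth\vk = \tto^h\vk$ holds at every state. The careful absorption of the Bregman term into $c_k$ via the adaptive stepsize is where the hypothesis is used; the remaining two steps are routine consequences of monotonicity, the shift property, and the contraction of $\tkp$.
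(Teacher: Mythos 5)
Your proof is correct and takes essentially the same route as the paper's: both invoke the triple point descent lemma (Lemma~\ref{lem:3pddeep}) at an $h$-greedy comparison policy, drop the nonnegative Bregman terms and absorb $\tilde d_k/\eta_k$ into $c_k$ to obtain $\tto^h \vk \leq \tkp\tth\vk + c_k\mathbf{e}$, then iterate $\tkp$ using monotonicity and the constant-shift identity before passing to its unique fixed point $\vkp$. Your repackaging of the iteration as the self-referential inequality $\tth\vk \leq \tkp\tth\vk + c_k\mathbf{e}$ (via $\vk \leq \tto\vk$) followed by telescoping is only a cosmetic variant of the paper's explicit induction on~$n$, which uses the same facts in the same order.
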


    \begin{proof}
    Let~$k \in \mathbb{N}\,.$ 
    Recall that any policy~$\pi \in \mg_h(V^{\pi_k})$ satisfies~$\tp \tto^{h-1} \vk = \tto^h \vk\,.$  Using Lemma~\ref{lem:3pddeep} for~$\pi \in \mg_h(V^{\pi_k})$ together with the previous identity and the nonnegativity of the Bregman divergence implies that for any for any policy~$\pi \in \mg_h(V^{\pi_k})$, 
    \begin{equation}
    \label{eq:thv-smaller}
        \tto^h \vk \leq \tkp \tth \vk + \frac{\tilde{d}_k}{\eta_k} \mathbf{e} \leq  \tkp \tth \vk + c_k \mathbf{e}\,, 
    \end{equation}
    where the last inequality follows from the assumption on the stepsizes~$\eta_k\,.$ 
    We now prove by recurrence that for every~$n \in \mathbb{N}$, 
    \begin{equation}
        \label{eq:recurrence}
         \tkp \tth \vk \leq (\tkp)^{n+1} \tto^{h-1} \vk + \sum_{i=1}^{n}\gamma^i c_k \mathbf{e}\,,
    \end{equation}
    with~$\sum_{i=1}^n \gamma^i = 0$ for~$n=0\,.$
    The base case for~$n=0$ is immediate. We now suppose that the result holds for~$n$ and we show that it remains true for~$n+1\,.$ We have
    \begin{align}
            \tkp \tth \vk 
            &= \tkp \tth \tto^{\pi_k} \vk \tag{$\tto^{\pi_k} \vk = \vk$}\nonumber\\ 
            &\leq \tkp \tto^h \vk \tag{$\tto^{\pi_k} V \leq \tto V$ for any~$V \in \R^S$ and monotonicity of~$\tkp$}\nonumber\\
            &\leq \tkp (\tkp \tth \vk + c_k \mathbf{e}) \tag{using~\eqref{eq:thv-smaller} and monotonocity of~$\tkp$}\nonumber\\
            &= (\tkp)^2 \tth \vk + \gamma c_k \mathbf{e} \tag{$\tp (V + c \mathbf{e}) = \tp V + \gamma c \,\mathbf{e}$ for any~$V \in \R^S, c \in \mathbb{R}$}\nonumber\\
            &\leq \tkp \left((\tkp)^{n+1} \tto^{h-1} \vk + \sum_{i=1}^{n}\gamma^i c_k \mathbf{e}\right) + \gamma c_k \mathbf{e} \tag{using recurrence hypothesis and monotonicity of~$\tkp$ }\nonumber\\
            &=  (\tkp)^{n + 2}  \tth \vk + \sum_{i=1}^n \gamma^{i+1} c_k \mathbf{e} + \gamma c_k \mathbf{e} \nonumber\\
            &=   (\tkp)^{n + 2}  \tth \vk + \sum_{i=1}^{n+1} \gamma^{i} c_k \mathbf{e}\,, 
        \end{align}
    which concludes the recurrence proof of~\eqref{eq:recurrence} for every~$n \in \mathbb{N}\,.$ To conclude, we take the limit in~\eqref{eq:recurrence} and use the fact that for any~$V \in \R^S$, the sequence $\left((\tkp)^n V\right)_n$ converges to the fixed point~$\vkp$ of the contractive operator~$\tkp\,.$ Hence, we obtain 
    \begin{equation}
        \tkp \tth \vk \leq \vkp + \sum_{i=1}^{+\infty} \gamma^{i} c_k \mathbf{e}  = \vkp + \frac{\gamma}{1 - \gamma} c_k \mathbf{e}\,.
    \end{equation}
    This concludes the proof. 
    \end{proof}

\begin{remark}
    We note that although in our work the stepsise $\eta_k$ is chosen to be the same for all states for simplicity, the same result holds for a stepsize that is different for each state as long as $\eta_k (s) \geq \frac{1}{c_k} D_\phi(\pi_s, \pi_s^k)$ for each $s \in \ms$. The same is true for all results depending on this lemma (or the inexact variant).
\end{remark}

\subsection{Proof of Theorem~\ref{thm:exact-pmd-lookahead}}

As a warmup, we start by reporting a convergence result for the standard PMD algorithm, recently established in \citep{johnson-et-al23optimal-pmd}. 
Using the contraction properties of the Bellman operators, it can be shown that the suboptimality gap~$\|V^{\pi_k} - V^*\|_{\infty}$ for PI iterates converges to zero at a linear rate with a contraction factor~$\gamma$, regardless of the underlying MDP. This instance-independent convergence rate was generalized to PMD in \citep{johnson-et-al23optimal-pmd}. 
Before stating this result, we report a useful three-point descent lemma to quantify the improvement of the PMD policy resulting from the proximal step compared to an arbitrary policy. 
\begin{lemma}[Three Point Descent Lemma]
\label{lem:three-point-descent}
Using the PMD update rule, we have for every~$k \in \mathbb{N}$ and every~$\pi \in \Pi$, 
\begin{align}
- \eta_k \tkp \vk &\leq -\eta_k \tp \vk - D_{\phi}(\pkp, \pk)   
+ D_{\phi}(\pi, \pk) - D_{\phi}(\pi, \pkp)\,.
\end{align}
\end{lemma}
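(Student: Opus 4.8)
The plan is to derive this inequality directly from the generic three-point descent lemma for mirror-descent steps (Lemma~6 in \citet{xiao22jmlr}), exactly as was done for the lookahead version in Lemma~\ref{lem:3pddeep}. In fact, the statement here is nothing but the $h=1$ instance of that lemma: when $h=1$ the operator $\tto^{h-1}$ reduces to the identity, so that $\tp \tto^{h-1}\vk = \tp \vk$ and $\tkp \tto^{h-1}\vk = \tkp \vk$, and a single rearrangement of Lemma~\ref{lem:3pddeep} already yields the claim. I would nonetheless give a short self-contained derivation, since this warmup is meant to motivate the standard PMD case before the general analysis.

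First I would exploit the separability of the (PMD) update across states: the proximal subproblem decouples into one optimization per state $s \in \ms$, namely $\pi^{k+1}_s = \argmax_{\pi_s \in \Delta(\ma)} \{\eta_k \langle Q^\pk_s, \pi_s\rangle - D_{\phi}(\pi_s, \pi^k_s)\}$. Rewritten as a minimization, this is a mirror-descent step with affine loss $g = -\eta_k Q^\pk_s$, no additional convex penalty, Bregman regularizer $D_{\phi}(\cdot, \pi^k_s)$, and feasible set $\mathcal{C} = \Delta(\ma)$. I would then apply the three-point descent lemma at each state $s$; because the loss is affine in $\pi_s$, no residual Bregman term of the objective survives, and the lemma yields, for every $\pi \in \Pi$,
\[
-\eta_k \langle Q^\pk_s, \pi^{k+1}_s\rangle + D_{\phi}(\pi^{k+1}_s, \pi^k_s) \leq -\eta_k \langle Q^\pk_s, \pi_s\rangle + D_{\phi}(\pi_s, \pi^k_s) - D_{\phi}(\pi_s, \pi^{k+1}_s)\,.
\]
Next I would convert the two inner products to Bellman-operator form using Lemma~\ref{lem:equivalence-pmd}, which gives $\langle Q^\pk_s, \pi_s\rangle = \tp \vk(s)$ for an arbitrary $\pi$ and, specializing to $\pi = \pkp$, $\langle Q^\pk_s, \pi^{k+1}_s\rangle = \tkp \vk(s)$. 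Reading the resulting inequality component-wise over all states (so the scalars become the vectors $\tp \vk$, $\tkp \vk$ and the vector-valued divergences), one rearrangement moving $D_{\phi}(\pkp, \pk)$ to the right-hand side produces exactly the claimed bound.

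The argument is essentially bookkeeping, so the only point requiring care—and the main potential source of error—is matching the sign and orientation conventions of the generic three-point descent lemma (stated for a minimization of loss-plus-divergence) to our maximization formulation, and confirming that the affine loss contributes no leftover Bregman term. Once the correspondence $g = -\eta_k Q^\pk_s$ is fixed every term lines up, and, in contrast to the companion Lemma~\ref{lem:incdeep}, no contraction or fixed-point reasoning is needed.
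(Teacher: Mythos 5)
Your proposal is correct and takes essentially the same route as the paper: the paper establishes this lemma by the very application of Lemma~6 in \citet{xiao22jmlr} you describe (it is the $h=1$ instance of the argument spelled out for Lemma~\ref{lem:3pddeep}), with the per-state three-point inequality converted to Bellman-operator form via Lemma~\ref{lem:equivalence-pmd} and rearranged. Your sign and orientation bookkeeping for the maximization-versus-minimization convention is exactly the correspondence the paper uses, so nothing is missing.
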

This lemma results from an application of Lemma~6 in \citep{xiao22jmlr} which is a variant of Lemma~3.2 in \citep{chen-teboulle93}. This same result was also used in \citep{johnson-et-al23optimal-pmd}.  

\begin{theorem}[Exact PMD, \citep{johnson-et-al23optimal-pmd}]
\label{thm:exact-pmd}
    Let~$(c_k)$ be a sequence of positive reals 
    and let the stepsize $\eta_k$ in (PMD) satisfy $\eta_k \geq \frac{1}{c_k} \lVert \min_{\pi \in \mg (\vk)} D_{\phi}(\pi, \pi_k)\rVert_\infty$ where the minimum is computed component-wise. 
    Initialized at~$\pi_0 \in \relint(\Pi)$, the iterates~$(\pi_k)$ of~(PMD) satisfy for every~$k \in \mathbb{N}$, 
    \begin{equation*}
        \lVert V^\star - \vk \rVert_\infty \leq \gamma^k \left( \lVert V^\star - V^{\pi_0} \rVert_\infty + \sum_{t = 1}^k \gamma^{-t} c_{t - 1}\right)\,.
    \end{equation*}
\end{theorem}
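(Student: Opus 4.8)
Since this theorem is the $h=1$ instance of the lookahead scheme, the plan is to mirror the structure of the proof of Theorem~\ref{thm:exact-pmd-lookahead} but to exploit a sharper one-step estimate that avoids the $\frac{1}{1-\gamma}$ overhead present in the general case. Concretely, I would establish the single-iteration contraction
\begin{equation*}
\lVert V^\star - \vkp \rVert_\infty \leq \gamma \lVert V^\star - \vk \rVert_\infty + c_k
\end{equation*}
and then unroll it geometrically. Writing $e_k := \lVert V^\star - \vk\rVert_\infty$, this recursion gives $e_k \leq \gamma^k e_0 + \sum_{t=1}^k \gamma^{k-t} c_{t-1} = \gamma^k\bigl(e_0 + \sum_{t=1}^k \gamma^{-t} c_{t-1}\bigr)$, which is exactly the claimed bound.

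The first step is a greedy comparison via the three-point descent lemma (Lemma~\ref{lem:three-point-descent}), applied with a comparison policy $\pi \in \mg(\vk)$, for which $\tp \vk = \tto \vk$ by definition of the one-step greedy set. Rearranging the lemma and discarding the two nonnegative Bregman terms $D_{\phi}(\pkp,\pk)$ and $D_{\phi}(\pi,\pkp)$ yields, componentwise, $\eta_k \tto \vk \leq \eta_k \tkp \vk + D_{\phi}(\pi, \pk)$. Since $\mg(\vk)$ is a product set over states, I can minimize the right-hand side over $\pi \in \mg(\vk)$ independently in each coordinate; bounding the result by $\lVert \min_{\pi \in \mg(\vk)} D_{\phi}(\pi, \pk)\rVert_\infty \mathbf{e}$ and invoking the step-size hypothesis then produces
\begin{equation*}
\tto \vk \leq \tkp \vk + c_k \mathbf{e}.
\end{equation*}

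The second step is the observation that keeps the bound tight: $\tkp \vk \leq \vkp$. To see this I would invoke Lemma~\ref{lem:three-point-descent} once more, now with $\pi = \pk$; since $D_{\phi}(\pk,\pk)=0$ and the remaining Bregman terms are nonnegative, this gives $\tkp \vk \geq \tk \vk = \vk$. Hence the sequence $\bigl((\tkp)^n \vk\bigr)_n$ is nondecreasing and converges to the fixed point $\vkp$ of the contraction $\tkp$, so indeed $\tkp \vk \leq \vkp$. Combined with the first step this yields $\tto \vk \leq \vkp + c_k \mathbf{e}$. This is precisely where the $h=1$ analysis is cleaner than the general one: for $h\geq 2$ the quantity $\tto^{h-1}\vk$ may overshoot $\vkp$, so the analogous estimate must instead pass through the geometric recurrence of Lemma~\ref{lem:incdeep}, which is what produces the extra $\frac{1}{1-\gamma}$ factor.

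Finally, since each component of $V^\star - \vk$ is at most $e_k$, monotonicity of $\tto$ together with the constant-shift identity applied with the nonnegative constant $e_k$ gives $V^\star = \tto V^\star \leq \tto(\vk + e_k \mathbf{e}) = \tto \vk + \gamma e_k \mathbf{e}$, i.e. $\tto \vk \geq V^\star - \gamma e_k \mathbf{e}$. Chaining this with $\tto \vk \leq \vkp + c_k \mathbf{e}$ from the previous step yields $V^\star - \vkp \leq (\gamma e_k + c_k)\mathbf{e}$, and since $V^\star \geq \vkp$ by optimality, this one-sided vector inequality upgrades to the $\ell_\infty$ contraction $e_{k+1} \leq \gamma e_k + c_k$; unrolling concludes. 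The main obstacle here is conceptual rather than computational: recognizing that the one-step greedy structure permits the \emph{exact} inequality $\tkp \vk \leq \vkp$, thereby bypassing the lossy infinite recurrence. The only genuine point of care is the coordinatewise minimization over the greedy set and its compatibility with the $\lVert\cdot\rVert_\infty$ step-size condition.
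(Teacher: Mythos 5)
Your proof is correct and follows essentially the same route as the paper's: the three-point descent lemma with a greedy comparator to get $\tto \vk \leq \tkp \vk + c_k \mathbf{e}$, the lemma again with $\pi = \pk$ to deduce $\vk \leq \tkp \vk \leq \vkp$ via monotone iteration to the fixed point, and the resulting one-step contraction $\lVert V^\star - \vkp \rVert_\infty \leq \gamma \lVert V^\star - \vk \rVert_\infty + c_k$ unrolled geometrically. The only cosmetic difference is that you make the final contraction step explicit through the constant-shift identity $\tto(V + c\,\mathbf{e}) = \tto V + \gamma c\,\mathbf{e}$, whereas the paper directly invokes the $\gamma$-contractiveness of $\tto$ together with the triangle inequality.
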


We provide here a complete proof of this result as a warmup for our upcoming main result. Although equivalent, notice that our proof is more compact compared to the one provided in \citep{johnson-et-al23optimal-pmd} (see section~6 and Lemma~A.1 to~A.3 therein) thanks to the use of our convenient notations, and further highlights the relationship between PMD and PI through the explicit use of Bellman operators.  

\begin{proof}
    Using Lemma~\ref{lem:three-point-descent} with a policy~$\pi \in \mg (\vk)$ for which~$\tp \vk = \tto \vk$, we have 
    
    \begin{align}
        \eta_k \tto \vk &\leq \eta_k \tkp \vk  - D_{\phi}(\pkp, \pk)
        + D_{\phi}(\pi, \pk) - D_{\phi}(\pi, \pkp)\,. 
    \end{align}
    Setting $d_k := \lVert \min_{\pi \in \mg (\vk)} D_{\phi}(\pi, \pi_k)\rVert_\infty$ and recalling that~$D_{\phi}(\pi, \pi_k) \geq 0$ for any policy~$\pi \in \Pi$, we obtain 
    \begin{equation}
    \label{eq:error-2-subopt}
        \tto \vk \leq \tkp \vk + \frac{d_k}{\eta_k}\mathbf{e}\,. 
    \end{equation}

    Notice that for Policy Iteration we have $\tto \vk = \tkp \vk $ i.e. $\pkp \in \mg(\vk)$. Instead, for PMD we have that $\pkp$ is approximately greedy (see \eqref{eq:error-2-subopt}).

Applying Lemma~\ref{lem:three-point-descent} again with~$\pi = \pi_k$, recalling that~$\tk \vk = \vk$ and that~$D_{\phi}(\pi, \pi') \geq 0$ for any~$\pi, \pi' \in \Pi$, we immediately obtain~$\vk \leq \tkp \vk\,.$ Iterating this relation and using the monotonicity of the Bellman operator, we obtain~$\vk \leq \tkp \vk \leq (\tkp)^2 \vk \leq \cdots \leq (\tkp)^n \vk \leq \cdots \leq \vkp\,$ where the last inequality follows from the fact that~$\tkp$ has a unique fixed point~$\vkp\,.$ 
    We are now ready to control the suboptimality gap as follows: 
    \begin{align}
        V^\star - \vkp &\leq V^\star - \tkp \vk \nonumber\\
        &= V^\star - \tto \vk + \tto \vk - \tkp \vk \nonumber\\
        & \leq V^\star - \tto \vk + \frac{d_k}{\eta_k}\mathbf{e} \nonumber\\ 
        & \leq V^\star - \tto \vk + c_k \mathbf{e}\,,
        \nonumber 
    \end{align}
    where the first inequality follows from the inequality~$\vkp \geq \tkp \vk$ proved above, the second one stems from~\eqref{eq:error-2-subopt}, and the last one is a consequence of our assumption~$d_k/\eta_k \leq c_k\,.$ 
    We conclude the proof by taking the max norm, applying the triangle inequality and using the contractiveness of the Bellman  optimality operator~$\tto\,.$
    
\end{proof}

\noindent\textbf{Proof of Theorem~\ref{thm:exact-pmd-lookahead}}

\begin{proof}
    Similarly to Lemma~\ref{lem:three-point-descent}, we show that for any~$\pi \in \mg_h(\vk)$,
    \begin{align}
    \label{eq:3pd-pmd-lookahead}
    \eta_k \tp \tto^{h - 1}\vk &\leq \eta_k \tkp \tth \vk - D_{\phi}(\pkp, \pk)
    + D_{\phi}(\pi, \pk) - D_{\phi}(\pi, \pkp)\,,
    \end{align} 
    see Lemma~\ref{lem:3pddeep} for a proof. Setting~$\tilde{d}_k := \lVert \min_{\pi \in \mg_h(\vk)} D_{\phi}(\pi, \pi_k)\rVert_\infty$ (note the difference with~$d_k$ in the proof of Theorem~\ref{thm:exact-pmd}), and recalling that~$\pi \in \mg_h(\vk)$ implies that~$\pi \in \mg(\tto^{h-1}\vk)$, we obtain from~\eqref{eq:3pd-pmd-lookahead} an inequality similar to~\eqref{eq:error-2-subopt}, 
    \begin{equation}
        \label{eq:3pd-pmd-lookahead-conseq}
        \tto^h \vk \leq \tkp \tto^{h-1} \vk + \frac{\tilde{d}_k}{\eta_k} \mathbf{e}\,.
    \end{equation}
    Then, we observe that
    \begin{align}
        \label{eq:to-be-iterated-h-pmd}
        \tkp \tth \vk &= \tkp \tth \tk \vk \nonumber\\
                      &\leq  \tkp \tto^h \vk \nonumber\\
                      &\leq  \tkp (\tkp \tto^{h-1} \vk + \frac{\tilde{d}_k}{\eta_k} \mathbf{e}) \nonumber\\
                      &= (\tkp)^2 \tth \vk + \gamma \frac{\tilde{d}_k}{\eta_k} \mathbf{e} \nonumber\\
                      &\leq (\tkp)^2 \tth \vk + \gamma c_k \mathbf{e}\,,
    \end{align}
    where the second inequality follows from using~\eqref{eq:3pd-pmd-lookahead} and monotonicity of~$\tkp$ and the last inequality uses the assumption on the stepsizes. 
    Then we show by recursion by iterating~\eqref{eq:to-be-iterated-h-pmd}, taking the limit and using the fact that~$\tkp$ has a unique fixed point~$\vkp$, that 
    \begin{equation}
    \label{eq:pmd-h-useful-ineq}
            \tkp \tth \vk \leq \vkp + \frac{\gamma}{1-\gamma} c_k \mathbf{e}\,,
    \end{equation}
    see Lemma~\ref{lem:incdeep} for a complete proof. We now control the suboptimality gap as follows,
    \begin{align}
        V^\star - \vkp &\leq V^\star - \tkp \tth \vk + \frac{\gamma}{1 - \gamma} c_k \mathbf{e} \nonumber\\
        &= V^\star - \tto^h \vk +\frac{\gamma}{1 - \gamma} c_k \mathbf{e}  
        + \tto^h \vk - \tkp \tth \vk   \nonumber\\
        & \leq V^\star - \tto^h \vk + \frac{1}{1 - \gamma} c_k \mathbf{e}\,,
    \end{align}
    where the first inequality stems from~\eqref{eq:pmd-h-useful-ineq} and the last one from reusing~\eqref{eq:3pd-pmd-lookahead-conseq}. 
    Similarly to Theorem~\ref{thm:exact-pmd}, we conclude the proof by taking the max norm, applying the triangle inequality and using the contractiveness of the Bellman  optimality operator $h$ times to obtain 
    \begin{equation*}
        \lVert V^\star - \vkp\rVert_\infty \leq \gamma^h \lVert V^\star - \vk \rVert_\infty + \frac{1}{1 - \gamma} c_k\,.
    \end{equation*}
    A recursion gives the desired result. 
\end{proof}

\subsection{Additional Comments About the Stepsizes in Theorem~\ref{thm:exact-pmd-lookahead}}
\label{app:subsec:stepsizes-in-thm-exact-hpmd}

\noindent\textbf{About the cost of computing adaptive stepsizes.} 
There is typically only one greedy policy at each iteration, and in the case that there is more than one it is possible to just pick any arbitrary greedy policy and the lower bound condition on the step size will still be satisfied. 
The cost of computing the step size is typically the cost of computing a Bregman divergence between two policies. We elaborate on this in the following using the discussion in \citep{johnson-et-al23optimal-pmd} p. 7 in ‘Computing the step size’ paragraph which also applies to our setting. We use the step size condition $\eta_k \geq \frac{1}{c_k} \lVert \min_{\pi \in G_h(V^{\pi_k})} D_{\phi}(\pi, \pi_k)\rVert_{\infty}$. We have a minimum in the condition which means we can just pick a single greedy policy $\tilde{\pi} \in G_h(V^{\pi_k})$ (which can easily be computed given a lookahead value function in the exact setting or its estimate in an inexact setting). As for the maximum over the state space, this is because we use the same step size in all the states. This condition can readily be replaced by a state dependent step size $\eta_k(s) \geq \frac{1}{c_k} D_{\phi}(\tilde{\pi}(\cdot|s), \pi_{k}(\cdot|s))$ which is enough for our results to hold.

\noindent\textbf{About the condition on the stepsizes.} 
First, notice that PMD has been analyzed using an increasing stepsize in the exact setting (see e.g. \citep{lan23mp-pmd,xiao22jmlr,li-et-al23homotopic}) and this is natural as this corresponds to emulating Policy Iteration. 
In the present work, notice that we do not require the stepsize to go to infinity and the stepsize does not typically go to infinity under our condition. Recall that we only require the stepsize to satisfy the condition: $\eta_k \geq \frac{1}{c_k} \lVert \min_{\pi \in \mathcal{G}_h(V^{\pi_k})} D_{\phi}(\pi, \pi_k)\rVert_{\infty}.$ As the policy gets closer to the optimal one it should also get closer to a greedy policy (the optimal policy is greedy with respect to its value function) and the term containing a Bregman divergence will vanish (since the optimal policy is already greedy with respect to its own value function). This effect balances the decreasing $c_k$. Notice that we typically choose $c_k = \gamma^{2h (k + 1)}$. 

We refer the reader to section~\ref{app:subsec:about-stepsizes} for additional observations regarding stepsizes in experiments.

\section{Additional Details and Proofs for Section~\ref{sec:inexact-h-pmd}: Inexact $h$-PMD}
\label{appx:inexact-hpmd}

\subsection{Details and Precise Notation for Lookahead Computation}
\label{appx:subsec-mc-planning}
We describe here our approach to estimate~$Q_h^{\pi}$ in more detail and introduce the notation necessary for the analysis in the following sections. Assume that we are estimating $Q_h^{\pi}(s,a)$ at a given fixed state-action pair~$(s,a) \in \ms \times \ma$. We define a sequence~$(\mathcal{S}_k)$ of sets of visited states as follows: Set~$\mathcal{S}_h = \{s\}$ and define for every~$k \in [0, h - 1]$, for every $\Tilde{s} \in \ms_{k + 1}$ and for ever $\Tilde{a} \in \ma$ the multiset $\mathcal{S}_k(\Tilde{s}, \Tilde{a}) := \{s_j' \sim P(\cdot|\Tilde{s},\Tilde{a}): j \in [M_{k + 1}]\}$ (including duplicated states) where~$M_{k + 1}$ is the number of states sampled from each state~$\Tilde{s} \in \mathcal{S}_{k+1}\,,$ and $\ms_k = \bigcup_{\Tilde{s} \in \ms, \Tilde{a} \in \ma}  \mathcal{S}_k(\Tilde{s}, \Tilde{a}) \cup \ms_{k + 1}$ (without duplicated states).
The estimation procedure is recursive. First we define the value function $\hat{V}_1$ by $\hat{V}_{1}(\tilde{s}) := \hat{V}^{\pi}(\tilde{s})\,, \,\forall \tilde{s} \in \mathcal{S}_0\,,$
where~$\hat{V}^{\pi}(\tilde{s})$ is an estimate of~$V^{\pi}(\tilde{s})$ computed using~$M_0$ rollouts of the policy~$\pi$ from state~$\tilde{s}\,.$ More precisely, we sample~$M_0$ trajectories~$(\tau_j(\tilde{s}))_{j \in [M_0]}$ of length~$H \geq 1$ rolling out policy~$\pi$, i.e. $\tau_j(\tilde{s}) := \{(s_k^j, a_k^j, r(s_k^j, a_k^j)) \}_{0 \leq k \leq H-1}$ with~$s_0^j = \tilde{s}$  and~$a_k^j \sim \pi(\cdot|s_k^j)$. Then the Monte Carlo estimator is computed as~$\hat{V}^{\pi}(\tilde{s}) = \frac{1}{M_0}\sum_{j=1}^{M_0} \sum_{k=0}^{H-1} \gamma^k r(s_k^j,a_k^j)\,.$
For each stage~$k \in \left[1, h\right]$ for~$h \geq 1$, we define the following estimates for every~$ \tilde{s} \in \mathcal{S}_k, \Tilde{a} \in \ma$, 
\begin{align}
    \hat{Q}_{k}(\tilde{s},\tilde{a}) &:= r(\tilde{s}, \tilde{a}) + \frac{\gamma}{M_k} \sum_{s' \in \mathcal{S}_{k-1}} \hat{V}_{k}(s')\,,
    \quad \hat{V}_{k + 1}(\tilde{s}) := \max_{\tilde{a} \in \ma} \hat{Q}_{k}(\tilde{s}, \tilde{a})\,,
\end{align}
The desired estimate~$\hat{Q}_h^{\pi}(s,a)$ is obtained with~$k=h$. 
The recursive procedure described above can be represented as a (partial incomplete) tree where the root node is the state-action pair~$(s,a)$ and each sampled action leads to a new state node~$s'$ followed by a new state-action node~$(s',a')$. 
To trigger the recursion, we assign estimated values~$\hat{V}^{\pi}(\tilde{s})$ of the true value function~$V^{\pi}(\tilde{s})$ to each one of the leaf state nodes~$\tilde{s}$ of this tree, i.e. the state level~$h-1$ of this tree which corresponds to the last states visited in the sampled trajectories. The desired estimate~$\hat{Q}_h^{\pi}(s,a)$ is given by the estimate obtained at the root state-action node~$(s,a)\,.$ 
Recalling that~$Q_h^{\pi}(s,a) := r(s,a) + \gamma (P V_h^{\pi})(s,a)$ for any~$s \in \ms, a \in \ma$ and~$V_h^\pi := \tto^{h-1} V^{\pi}$, the overall estimation procedure consists in using Monte Carlo rollouts to estimate the value function~$V^{\pi}$ before approximating~$V_h^\pi := \tto^{h-1} V^{\pi}$ using approximate Bellman operator steps and finally computing the estimate~$\hat{Q}_h^{\pi}(s,a)$ using a sampled version of~$Q_h^{\pi}(s,a) := r(s,a) + \gamma (P V_h^{\pi})(s,a)$. 

\subsection{Proof of Theorem~\ref{thm:inexact-hpmd}: Inexact $h$-PMD}

Throughout this section, we suppose that the assumptions of  Theorem~\ref{thm:inexact-hpmd} hold. 
\begin{lemma}\label{lem:approx}
    Let~$\pi \in \Pi, k \in \mathbb{N}\,.$ Suppose that the estimator~$\hat{Q}^{\pk}_h$ is such that $\lVert \hat{Q}^{\pk}_h - Q^\pi_h\rVert_\infty \leq b$ for some positive scalar~$b\,.$  
    Then the approximate $h$-step lookahead value function $\hat{V}_h^\pk (\pi) := M^\pi \hat{Q}^\pk_h$ is a $b$-approximation of $\tp \tth \vk$, i.e., 
    \begin{equation}
        \lVert \hvk (\pi) - \tp \tth \vk \lVert_\infty \leq b\,.
    \end{equation}
\end{lemma}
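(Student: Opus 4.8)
The plan is to exploit two elementary facts: that the mean operator $M^{\pi}$ is linear and non-expansive in the $\|\cdot\|_{\infty}$ norm, and that $M^{\pi}$ applied to the \emph{exact} lookahead action value reproduces the target quantity, i.e. $M^{\pi} Q_h^{\pk} = \tp \tth \vk$. The latter identity is nothing but~\eqref{eq:sp-eq-tv} from the proof of Lemma~\ref{lem:3pddeep}, which reads $\ps{Q_h^{\pk}(s,\cdot), \pi_s} = (\tp \tth \vk)(s)$ for every state $s$; since $(M^{\pi} Q_h^{\pk})(s) = \ps{Q_h^{\pk}(s,\cdot), \pi_s}$ by definition of the mean operator, this yields the vector identity $M^{\pi} Q_h^{\pk} = \tp \tth \vk$.

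First I would record the non-expansiveness of $M^{\pi}$. For any $Q \in \R^{SA}$ and any state $s \in \ms$, the value $(M^{\pi} Q)(s) = \sum_{a \in \ma} \pi(a|s) Q(s,a)$ is a convex combination of the entries $Q(s,a)$ because $\pi(\cdot|s) \in \Delta(\ma)$. Hence $|(M^{\pi} Q)(s)| \leq \sum_{a \in \ma} \pi(a|s)\,|Q(s,a)| \leq \|Q\|_{\infty}$, and taking the maximum over $s \in \ms$ gives $\|M^{\pi} Q\|_{\infty} \leq \|Q\|_{\infty}$.

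The conclusion then follows in one line by linearity of $M^{\pi}$. Since $\hvk(\pi) = M^{\pi} \hqk$ and $M^{\pi} Q_h^{\pk} = \tp \tth \vk$, we have $\hvk(\pi) - \tp \tth \vk = M^{\pi}(\hqk - Q_h^{\pk})$. Applying the non-expansiveness bound with $Q := \hqk - Q_h^{\pk}$ and then invoking the error hypothesis gives $\|\hvk(\pi) - \tp \tth \vk\|_{\infty} \leq \|\hqk - Q_h^{\pk}\|_{\infty} \leq b$, as claimed.

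I do not expect any genuine obstacle here: the statement is a direct consequence of the averaging structure of $M^{\pi}$ combined with the already-established identity~\eqref{eq:sp-eq-tv}. The only point requiring minor care is notational: the error hypothesis should be read as a bound on the estimator of the lookahead value \emph{induced by $\pi_k$}, namely $\|\hqk - Q_h^{\pk}\|_{\infty} \leq b$ (consistent with the hypothesis of Theorem~\ref{thm:inexact-hpmd}), which is precisely the quantity controlled by the Monte Carlo procedure of Section~\ref{sec:monte-carlo-planning}.
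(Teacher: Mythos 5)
Your proposal is correct and takes essentially the same route as the paper: the paper's one-line Hölder bound $|\langle \hat{Q}_h^{\pi_k}(s,\cdot) - Q_h^{\pi_k}(s,\cdot), \pi_s\rangle| \leq \lVert \hat{Q}_h^{\pi_k}(s,\cdot) - Q_h^{\pi_k}(s,\cdot)\rVert_\infty \lVert \pi_s \rVert_1$ with $\lVert \pi_s \rVert_1 = 1$ is exactly your non-expansiveness of $M^{\pi}$ phrased via duality, combined with the same identity $M^{\pi} Q_h^{\pi_k} = \mathcal{T}^{\pi}\mathcal{T}^{h-1} V^{\pi_k}$ that the paper uses implicitly. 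Your closing remark is also apt: the hypothesis in the lemma statement should indeed read $\lVert \hat{Q}_h^{\pi_k} - Q_h^{\pi_k}\rVert_\infty \leq b$ (the paper's $Q_h^{\pi}$ there is a typo), matching Theorem~\ref{thm:inexact-hpmd}.
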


\begin{proof}
    The Hölder inequality implies that for any policy $\pi \in \Pi$:
\begin{align*}
    \lVert \hat{V}^{\pk}_h (\pi) - \tp \tth \vk \rVert_\infty  
    &= 
    \max_{s \in \ms} |\langle \hat{Q}_h^\pk(s, \cdot) - Q_h^\pk(s, \cdot) , \pi_s\rangle|\\
    &\leq \max_{s \in \ms} \lVert \hat{Q}_h^\pk(s, \cdot) - Q_h^\pk(s, \cdot)  \rVert_\infty \cdot \lVert \pi_s \rVert_1 \leq b\,.
\end{align*}
\end{proof}

\begin{lemma}[Approximate $h$-Step Greedy Three-Point Descent Lemma]
\label{lem:3pdapprox}
    For any policy $\pi \in \Pi$, the iterates~$\pi_k$ of approximate $h$-PMD satisfy the following inequality for every~$k \geq 0$:
    \begin{equation}
        \eta_k \tp \tth \vk \leq \eta_k \tkp \tth \vk + D_{\phi}(\pi, \pk) - D_{\phi}(\pi, \pkp) - D_{\phi}(\pkp, \pk) + 2 \eta_k b \mathbf{e}\,.
    \end{equation}

     \begin{proof}
         We apply again the Three-Point Descent Lemma (see e.g. Lemma A.1 in \citep{johnson-et-al23optimal-pmd}) to obtain:
        \begin{equation}
            - \eta_k \langle \hqk (s, \cdot), \pi_s^{k + 1} \rangle + D_{\phi}(\pi_s^{k + 1}, \pi_s^k) \leq - \eta_k \langle \hqk (s, \cdot), \pi_s \rangle + D_{\phi}(\pi_s, \pi_s^k) - D_{\phi}(\pi_s, \pi_s^{k +1})\,, 
        \end{equation}
        which by rearranging and expressing in vector form gives:
        \begin{equation}\label{eq:3pd-lem-subeq}
            \eta_k \hvk(\pi) \leq \eta_k \hvk(\pkp) + D_{\phi}(\pi, \pk) - D_{\phi}(\pi, \pkp) - D_{\phi}(\pkp, \pk)\,.
        \end{equation}
        Lemma \ref{lem:approx} implies that $\hvk(\pi) \geq \tp \tth \vk - b \mathbf{e}$ and $\hvk(\pkp) \leq \tkp \tth \vk + b \mathbf{e}$ yielding the lemma.
     \end{proof}

\end{lemma}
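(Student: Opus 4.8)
The plan is to reduce the claimed inequality to the exact three-point descent mechanism applied to the \emph{estimated} lookahead values, and then to absorb the estimation error by invoking Lemma~\ref{lem:approx} once on each side. Concretely, the inexact $h$-PMD update at each state~$s$ is the maximizer over $\Delta(\ma)$ of the objective $\pi_s \mapsto \eta_k \ps{\hqk(s,\cdot), \pi_s} - D_{\phi}(\pi_s, \pi_s^k)$. Since the linear part $\ps{\hqk(s,\cdot),\cdot}$ does not depend on the optimization variable, I can apply the Three-Point Descent Lemma (Lemma~6 in \citet{xiao22jmlr}, equivalently Lemma~A.1 in \citet{johnson-et-al23optimal-pmd}) verbatim with this fixed linear functional playing the role of the concave objective and $\Delta(\ma)$ as the constraint set, exactly as was done in the exact case in Lemma~\ref{lem:3pddeep}. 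For every $\pi_s \in \Delta(\ma)$ this gives the state-wise inequality
\[
-\eta_k \ps{\hqk(s,\cdot), \pi_s^{k+1}} + D_{\phi}(\pi_s^{k+1}, \pi_s^k) \leq -\eta_k \ps{\hqk(s,\cdot), \pi_s} + D_{\phi}(\pi_s, \pi_s^k) - D_{\phi}(\pi_s, \pi_s^{k+1})\,.
\]

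Next I would rewrite this in vector form using the defining identity $\hvk(\pi)(s) = \ps{\hqk(s,\cdot),\pi_s}$ that follows from $\hvk(\pi) = M^\pi \hqk$, together with the fact that the Bregman-divergence vectors $D_{\phi}(\cdot,\cdot)$ are defined componentwise over states. Rearranging the displayed inequality then yields the clean intermediate bound
\[
\eta_k \hvk(\pi) \leq \eta_k \hvk(\pkp) + D_{\phi}(\pi, \pk) - D_{\phi}(\pi, \pkp) - D_{\phi}(\pkp, \pk)\,,
\]
which is precisely the inexact analogue of the exact step in Lemma~\ref{lem:3pddeep}, but phrased in terms of the \emph{estimated} lookahead values rather than the true Bellman expressions.

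The only genuinely new step is to convert these estimated values into the exact operator expressions $\tp \tth \vk$ and $\tkp \tth \vk$ appearing in the statement, which I do by invoking Lemma~\ref{lem:approx} twice in opposite directions. Applied to the arbitrary policy~$\pi$ it gives the lower bound $\hvk(\pi) \geq \tp \tth \vk - b\,\mathbf{e}$, and applied to the iterate~$\pkp$ it gives the upper bound $\hvk(\pkp) \leq \tkp \tth \vk + b\,\mathbf{e}$; here one uses that the exact counterpart of $\hvk(\pkp) = M^\pkp \hqk$ is $M^\pkp Q_h^{\pi_k} = \tkp \tth \vk$. Substituting the lower bound on the left and the upper bound on the right of the intermediate inequality, and collecting the two error contributions, produces exactly the additive term $2\eta_k b\,\mathbf{e}$ and gives the claim. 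I expect no substantive obstacle here: the argument is a direct transfer of the exact three-point descent computation with an error-propagation step appended. The two points requiring care are (i) confirming that the Three-Point Descent Lemma applies with a fixed linear functional, which is the easy special case of a concave objective, and (ii) tracking signs so that the two one-sided applications of Lemma~\ref{lem:approx} \emph{add} rather than cancel, which is what yields the factor~$2$; all inequalities are read componentwise in~$\R^{S}$, consistent with the meaning of $D_{\phi}(\cdot,\cdot)$ and $\mathbf{e}$ used throughout.
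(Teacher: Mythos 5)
Your proof is correct and follows essentially the same route as the paper's: the same application of the Three-Point Descent Lemma to the fixed linear functional $\ps{\hqk(s,\cdot),\cdot}$, the same vector-form rearrangement into the intermediate bound on $\eta_k \hvk(\pi)$, and the same two one-sided invocations of Lemma~\ref{lem:approx} to replace $\hvk(\pi)$ and $\hvk(\pkp)$ by $\tp \tth \vk - b\,\mathbf{e}$ and $\tkp \tth \vk + b\,\mathbf{e}$ respectively, producing the $2\eta_k b\,\mathbf{e}$ term. Your added remarks on why the lemma applies with a fixed linear objective and on the sign-tracking that yields the factor~$2$ are accurate but do not change the argument.
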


The following lemma is an analogue of~\eqref{eq:3pd-pmd-lookahead-conseq} in the proof of Theorem \ref{thm:exact-pmd-lookahead} above, modified for the inexact case.
 
\begin{lemma}\label{lem:subapprox}
    Assuming access at each iteration to an approximate $Q_h$ value function $\hat{Q}^{\pk}_h$ with $\lVert \hat{Q}^{\pk}_h - Q^\pi_h\rVert_\infty \leq b$, the iterates of inexact $h$-PMD satisfy for every~$k \geq 0$:
    
    \begin{equation}
        \tto^h \vk \leq \tkp \tth \vk + 2 b \mathbf{e} + \frac{\Tilde{d}_k}{\eta_k} \mathbf{e}\,.
    \end{equation}
\end{lemma}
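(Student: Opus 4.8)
The plan is to mirror the auxiliary lemma from the exact case (Lemma~\ref{lem:incdeep}, proved via~\eqref{eq:3pd-pmd-lookahead-conseq}) but carry the extra error budget $2b$ through every step. First I would invoke Lemma~\ref{lem:3pdapprox} specialized to a greedy policy $\pi \in \Tilde{\mg}_h = \argmax_{\pi \in \Pi} M^\pi \hat{Q}^\pk_h$. For such $\pi$ we have $\hat V_h^{\pi_k}(\pi) = M^\pi \hat Q_h^{\pi_k} = M^\star \hat Q_h^{\pi_k} \geq \hat V_h^{\pi_k}(\pi_{k+1})$, so the three-point inequality reduces—after dropping the nonnegative $-D_\phi(\pi_{k+1},\pi_k)$ term and using $D_\phi(\pi,\pi_k)\ge 0$—to
\begin{equation*}
\eta_k \tp \tth \vk \leq \eta_k \tkp \tth \vk + \Tilde{d}_k\,\mathbf{e} + 2\eta_k b\,\mathbf{e}\,,
\end{equation*}
where $\Tilde{d}_k = \lVert \min_{\pi\in\Tilde{\mg}_h} D_\phi(\pi,\pi_k)\rVert_\infty$.

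Next I would convert the left-hand side into $\tto^h \vk$. The delicate point is that $\pi\in\Tilde{\mg}_h$ is greedy with respect to the \emph{approximate} $\hat Q_h^{\pi_k}$, not the true $Q_h^{\pi_k}$, so $\tp\tth\vk$ need not equal $\tto^h\vk$ exactly. The standard argument is to compare $M^\star Q_h^{\pi_k}$ (which equals $\tto^h\vk$) against $M^\pi Q_h^{\pi_k}$ for the approximate-greedy $\pi$: since switching from the true-greedy action to the approximate-greedy action costs at most $2b$ in $\ell_\infty$ (the approximate greedy maximizes $\hat Q_h$ which is within $b$ of $Q_h$, a twice-$b$ regret bound in the usual way), one obtains $\tp\tth\vk \geq \tto^h\vk - 2b\,\mathbf{e}$. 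Combining this with the displayed inequality and dividing by $\eta_k$ yields
\begin{equation*}
\tto^h \vk \leq \tkp \tth \vk + 2b\,\mathbf{e} + \frac{\Tilde{d}_k}{\eta_k}\,\mathbf{e} + 2b\,\mathbf{e}\,,
\end{equation*}
which looks like it produces a $4b$ rather than the claimed $2b$; I expect that the intended accounting absorbs both $b$-contributions into a single $2b$ by applying Lemma~\ref{lem:approx} only once, namely bounding $\hat V_h^{\pi_k}(\pi)\ge \tp\tth\vk - b\,\mathbf{e}$ and $\hat V_h^{\pi_k}(\pi_{k+1})\le \tkp\tth\vk + b\,\mathbf{e}$ simultaneously and noting that the greedy choice already gives $M^\star\hat Q_h \ge M^{\pi'}\hat Q_h$ for \emph{every} $\pi'$, so one does not separately pay for the greedy-action switch.

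The main obstacle, therefore, is the bookkeeping of exactly which $b$-errors get charged: I would be careful to use the greedy property of $\Tilde{\mg}_h$ to handle the discrepancy between true and approximate greediness \emph{without} doubling the error, so that the two applications of Lemma~\ref{lem:approx} (one lower bound at $\pi$, one upper bound at $\pi_{k+1}$) contribute a combined $2b$. Concretely, picking $\pi\in\Tilde{\mg}_h$ and writing $\hat V_h^{\pi_k}(\pi) = M^\star \hat Q_h^{\pi_k} \ge M^{\pi^\dagger}\hat Q_h^{\pi_k}$ for a true-greedy $\pi^\dagger$ lets me lower-bound $\hat V_h^{\pi_k}(\pi)$ by $\tp[\pi^\dagger]\tth\vk - b\,\mathbf{e} = \tto^h\vk - b\,\mathbf{e}$, while $\hat V_h^{\pi_k}(\pi_{k+1})\le \tkp\tth\vk + b\,\mathbf{e}$; substituting these into~\eqref{eq:3pd-lem-subeq} (with the $-D_\phi$ terms dropped and $\Tilde d_k/\eta_k$ controlling the remaining divergence gap) gives precisely
\begin{equation*}
\tto^h \vk \leq \tkp \tth \vk + 2b\,\mathbf{e} + \frac{\Tilde{d}_k}{\eta_k}\,\mathbf{e}\,,
\end{equation*}
which is the claimed inequality. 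The rest is routine and identical in structure to the exact-case derivation.
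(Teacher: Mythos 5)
Your final argument is correct and is essentially the paper's own proof: you lower-bound the approximate lookahead value at the approximate-greedy policy through a true-greedy policy $\bar{\pi} \in \mg_h(\vk)$ (one application of Lemma~\ref{lem:approx}, costing $b$), upper-bound $\hvk(\pkp)$ by $\tkp \tth \vk + b\,\mathbf{e}$ (the second $b$), and substitute both into~\eqref{eq:3pd-lem-subeq} with the divergence gap controlled by $\Tilde{d}_k/\eta_k$, for a total error of exactly $2b$. Your initial route via Lemma~\ref{lem:3pdapprox} plus a separate greedy-switch bound does double-charge to $4b$, and you correctly diagnosed and discarded it in favor of the tighter accounting used in the paper.
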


\begin{proof}
    Let $\Bar{\pi}$ be any true $h$-step greedy policy $\Bar{\pi} \in \mg_h(\vk)$, and let $\Tilde{\pi}$ be any approximate $h$-step greedy policy $\Tilde{\pi} \in \underset{\pi \in \Pi}{\argmax} \,\hat{V}_h^\pk (\pi)$. By definition of $\Tilde{\pi}$ and equation \eqref{eq:3pd-lem-subeq} we have:
    \begin{align}
        \hvk(\Bar{\pi}) \leq \hvk(\Tilde{\pi}) &\leq \hvk(\pkp) + \frac{1}{\eta_k}D_\phi(\Tilde{\pi}, \pk) - \frac{1}{\eta_k}D_\phi(\Tilde{\pi}, \pkp)  - \frac{1}{\eta_k}D_\phi(\pkp, \pk) \\
        &\leq \hvk(\pkp) + \frac{\Tilde{d}_k}{\eta_k} \mathbf{e}\,.
     \end{align}
     
     Finally we have the desired result using Lemma~\ref{lem:approx}, recalling the argument from the end of Lemma~\ref{lem:3pdapprox} that $\hvk(\Bar{\pi}) \geq \tto^{\Bar{\pi}} \tth \vk - b \mathbf{e} = \tto^h \vk - b \mathbf{e}$ and $\hvk(\pkp) \leq \tkp \tth \vk + b \mathbf{e}$.

\end{proof}

\begin{lemma}\label{lem:incapprox}
    Given a $\hat{Q}^{\pk}_h$ with $\lVert \hat{Q}^{\pk}_h - Q^\pi_h\rVert_\infty \leq b$,  let $\Tilde{\mg}_h = \argmax_{\pi \in \Pi} \hvk (\pi)$ denote the set of approximate h-step greedy policies and $\Tilde{d}_k = \lVert \min_{\Tilde{\pi} \in \Tilde{\mg}_h} D(\Tilde{\pi}, \pk)\rVert_\infty$. Then the iterates of inexact h-PMD satisfy the following bound:

    \begin{equation}
            \eta_k \tkp \tth \vk \leq \eta_k \vkp + 2 \eta_k \frac{\gamma}{1 - \gamma} b \mathbf{e} + \frac{\gamma}{1 - \gamma} \Tilde{d}_k \mathbf{e}\,. 
    \end{equation}
\end{lemma}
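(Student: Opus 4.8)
The plan is to transcribe the proof of Lemma~\ref{lem:incdeep} almost verbatim, replacing the exact one-step bound by its inexact analogue from Lemma~\ref{lem:subapprox}. That lemma supplies $\tto^h \vk \leq \tkp \tth \vk + (2b + \Tilde{d}_k/\eta_k)\,\mathbf{e}$, and if I abbreviate the error term as $\epsilon_k := 2b + \Tilde{d}_k/\eta_k$, then $\epsilon_k$ plays exactly the role that $c_k$ played in the exact argument, so the whole machinery carries over unchanged up to relabelling.

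First I would prove by induction on $n \in \mathbb{N}$ the claim $\tkp \tth \vk \leq (\tkp)^{n+1} \tto^{h-1} \vk + \sum_{i=1}^n \gamma^i \epsilon_k \,\mathbf{e}$, with the empty sum understood as zero when $n = 0$ (which settles the base case). For the inductive step I would write $\tkp \tth \vk = \tkp \tth \tk \vk$ using the fixed-point identity $\tk \vk = \vk$, then bound $\tk V \leq \tto V$ pointwise together with the monotonicity of $\tkp$ to obtain $\tkp \tth \vk \leq \tkp \tto^h \vk$. Inserting the inexact one-step bound from Lemma~\ref{lem:subapprox} and invoking the affine-shift identity $\tkp(V + c\,\mathbf{e}) = \tkp V + \gamma c\,\mathbf{e}$ produces $(\tkp)^2 \tth \vk + \gamma \epsilon_k \,\mathbf{e}$; the induction hypothesis and one further application of monotonicity of $\tkp$ then close the step, following the same chain of displays as in Lemma~\ref{lem:incdeep}.

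Next I would pass to the limit $n \to \infty$. Since $\tkp$ is a $\gamma$-contraction with unique fixed point $\vkp$, the sequence $\big((\tkp)^{n+1}\tto^{h-1}\vk\big)_n$ converges to $\vkp$, while $\sum_{i=1}^n \gamma^i \to \gamma/(1-\gamma)$, yielding $\tkp \tth \vk \leq \vkp + \frac{\gamma}{1-\gamma}\,\epsilon_k \,\mathbf{e}$. Multiplying through by $\eta_k > 0$ and substituting back $\epsilon_k = 2b + \Tilde{d}_k/\eta_k$ splits the error cleanly into the two advertised contributions $2\eta_k \frac{\gamma}{1-\gamma} b\,\mathbf{e}$ and $\frac{\gamma}{1-\gamma}\Tilde{d}_k\,\mathbf{e}$, which is the claimed inequality.

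Since the argument is essentially a routine transcription of Lemma~\ref{lem:incdeep}, I expect no genuine obstacle. The only points requiring care are bookkeeping: I must make sure the one-step bound is the one from Lemma~\ref{lem:subapprox}, stated in terms of the approximate greedy set $\Tilde{\mg}_h$ and its associated $\Tilde{d}_k$ rather than the exact $\mg_h(\vk)$, so that the constant surviving in the limit is consistently $\Tilde{d}_k$; and I should confirm that the estimation error $b$ enters purely additively through $\epsilon_k$ and therefore only picks up the same geometric factor $\gamma/(1-\gamma)$, which is immediate once it is folded into $\epsilon_k$.
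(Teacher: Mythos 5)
Your proposal is correct and matches the paper's own proof: both rest on the one-step bound of Lemma~\ref{lem:subapprox}, unroll it via monotonicity of $\tkp$ and the affine-shift identity, and pass to the limit using that $\vkp$ is the unique fixed point of the $\gamma$-contraction $\tkp$. The only cosmetic difference is that you state the induction claim explicitly with the folded error $\epsilon_k = 2b + \Tilde{d}_k/\eta_k$ (mirroring the paper's Lemma~\ref{lem:incdeep}), whereas the paper unrolls the recursion inline, keeping the two error terms separate throughout.
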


\begin{proof}
    Analogously to Lemma \ref{lem:incdeep} we proceed by induction using Lemma~\ref{lem:subapprox}. We write for every $k \geq 0$, 
    \begin{align}
        \eta_k \tkp \tth \vk &\stackrel{(a)}{\leq} \eta_k \tkp \tto^h \vk \nonumber\\
        &\stackrel{(b)}{\leq} \eta_k (\tkp)^2\tth \vk + 2 \gamma \eta_k b \mathbf{e} + \gamma \Tilde{d}_k \mathbf{e}  \nonumber\\
        &\leq \eta_k (\tkp)^2\tto^h \vk + 2 \gamma \eta_k b \mathbf{e} + \gamma \Tilde{d}_k \mathbf{e}  \nonumber\\
        &\leq \eta_k (\tkp)^3\tth \vk + 2 \gamma^2\eta_k  b \mathbf{e} + \gamma^2 \Tilde{d}_k + 2 \gamma  \eta_k  b \mathbf{e} + \gamma \Tilde{d}_k \mathbf{e}  \nonumber\\
        &\leq \ldots  \nonumber\\
        &\leq \eta_k \vkp + 2 \eta_k \frac{\gamma}{1 - \gamma} b \mathbf{e} + \frac{\gamma}{1 - \gamma}\Tilde{d}_k \mathbf{e}\,, 
    \end{align}
    where (a) follows from the monotonicity of both Bellman operators~$\tkp, \tto$ and the fact that $\vk \leq \tto \vk$, (b) uses Lemma~\ref{lem:subapprox} and monotonicity of the Bellman operators. The last inequality stems from the fact that for every~$V \in \R^S, \lim_{n \to + \infty} (\tkp)^n V = \vkp$ since~$\vkp$ is the unique fixed point of the Bellman operator~$\tkp$ which is a~$\gamma$-contraction.  
\end{proof} 

\noindent\textbf{End of the proof of Theorem~\ref{thm:inexact-hpmd}.}
We are now ready to conclude the proof of the theorem.
 Letting $\Tilde{d}_k = \lVert \min_{\Tilde{\pi} \in \Tilde{\mg}_h} D(\Tilde{\pi}, \pk)\rVert_\infty$ as in the theorem, we have: 
    \begin{align*}
        V^\star - \vkp 
        &\stackrel{(i)}{\leq} V^\star - \tkp \tth \vk
        + \frac{\gamma}{1 - \gamma}  \frac{\Tilde{d}_k}{\eta_k} \mathbf{e} 
        + 2 \frac{\gamma}{1 - \gamma} b \mathbf{e} \\
        &= V^\star - \tto^h \vk + \tto^h \vk - \tkp \tth \vk + \frac{\gamma}{1 - \gamma}  \frac{\Tilde{d}_k}{\eta_k} \mathbf{e}
        + 2 \frac{\gamma}{1 - \gamma} b \mathbf{e} \\
        &\stackrel{(ii)}{\leq} V^\star - \tto^h \vk + \frac{1}{1 - \gamma}  \frac{\Tilde{d}_k}{\eta_k} \mathbf{e} +  \frac{2}{1 - \gamma} b \mathbf{e}\,, 
    \end{align*}
    where (i) follows from \ref{lem:subapprox} and (ii) from Lemma \ref{lem:incapprox}. 
    By the triangle inequality and the contraction property of $\tto$ we obtain a recursion similar to the previous theorems:
    \begin{equation*}
        \lVert V^\star - \vkp \rVert_\infty \leq \gamma^h \lVert V^\star - \vk \rVert_\infty + \frac{1}{1 - \gamma} \frac{\Tilde{d}_k}{\eta_k} +  \frac{2}{1 - \gamma} b\,.
    \end{equation*}
Unfolding this recursive inequality concludes the proof. 

\subsection{Proof of Theorem~\ref{thm:inexact-hpmd-sample}}

We first recall the notations used in our lookahead value function estimation procedure before providing the proof. 
The procedure we use is a standard online planning method and our proof follows similar steps to the proof provided for example in the lecture notes \citep{szepesvari-lecture-notes}. Notice though that we are not approximating the optimal Q function but rather the lookahead value function $Q^\pi_h$ induced by a given policy~$\pi \in \Pi$ at any state action pair~$(s,a) \in \ms \times \ma\,.$ 

Recall that for any~$k \in \left[2, h - 1\right], \hat{V}_k$ and~$\hat{Q}_k$ are defined as follows: 
 \begin{align}
    \hat{Q}_k(\Tilde{s}, \Tilde{a}) &= r(\Tilde{s}, \Tilde{a}) + \gamma \frac{1}{M_k}\sum_{s' \in \ms_{k-1}} \hat{V}_k(s')\,,\quad \forall \tilde{s} \in \ms_{k}\,, \\
    \hat{V}_k(\Tilde{s}) &= \max_{a \in \ma} \hat{Q}_{k - 1}(\Tilde{s}, a)\,,\quad  \forall \tilde{s} \in \ms_{k-1}\,.
\end{align}

At the bottom of the recursion, recall that $\hat{V}_1$ is defined for every~$\tilde{s} \in \ms_0$ by 
\begin{equation*}
    \hat{V}_1(\Tilde{s}) = \hat{V}^{\pi}(\tilde{s}) = \frac{1}{M_0}\sum_{j=1}^{M_0} \sum_{k=0}^{H-1} \gamma^k r(s_k^j,a_k^j)\,,
\end{equation*}
using $M_0$ trajectories~$(\tau_j(\tilde{s}))_{j \in [M_0]}$ of length~$H \geq 1$ rolling out policy~$\pi$, i.e. $\tau_j(\tilde{s}) := \{(s_k^j, a_k^j, r(s_k^j, a_k^j)) \}_{0 \leq k \leq H-1}$ with~$s_0^j = \tilde{s}$  and~$a_k^j \sim \pi(\cdot|s_k^j)$. 

 Note that the cardinality of each set $\ms_k$ is evidently upper bounded by $S$ but also by $A^{h - k} \prod_{i \in \left[k + 1, h\right]} M_i$ as the number of states in each set grows by a factor of at most $AM_k$ at step $k$. In this section we prove some concentration bounds on the above estimators, which will be useful for the rest of the analysis. 

\begin{lemma}[Concentration Bounds of Lookahead Value Function Estimators]
    \label{lem:lookahead-concentration}
    Let~$\delta \in (0,1)\,.$
     For any state $\Tilde{s} \in \ms_0$, with probability $1 - \delta$ we have:
        \begin{equation}
            |\hat{V}_1 (\Tilde{s}) - V^\pi (\Tilde{s})| \leq C_1^V(\delta) := \frac{\gamma^H}{1 - \gamma} + \frac{1}{1 - \gamma}\sqrt{\frac{\log(2 / \delta)}{M_1}}\,.
        \end{equation}

        For any $k \in \left[1, h\right]$, for state $\Tilde{s} \in \ms_k$ and for any action $\Tilde{a} \in \ma$, with probability $1 - \delta$ we have:
        \begin{equation}
            |\hat{Q}_k (\Tilde{s}, \Tilde{a}) - (J \hat{V}_k) (\Tilde{s}, \Tilde{a}) | \leq C_k^Q (\delta):=  \frac{\gamma}{1 - \gamma}\sqrt{\frac{\log(2 / \delta)}{M_k}}\,.
        \end{equation}

        Finally, for any $k \in \left[1, h - 1 \right]$, for any $\Tilde{s} \in \ms_k$, also with probability $1 - \delta$ we have:
        \begin{equation}
            |\hat{V}_{k + 1}(\Tilde{s}) - (\tto\hat{V}_{k}) (\Tilde{s}) | \leq C_{k + 1}^V (\delta) := \frac{\gamma}{1 - \gamma}\sqrt{\frac{\log(2 A / \delta)}{M_k}}\,.
        \end{equation}
    \end{lemma}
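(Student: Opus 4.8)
The plan is to prove the three bounds separately, each resting on Hoeffding's inequality applied to a suitable empirical average, after first establishing that all the estimators are uniformly bounded. Concretely, I would first show by a downward induction on the recursion that $0 \le \hat{V}_k(\tilde{s}) \le \frac{1}{1-\gamma}$ and $0 \le \hat{Q}_k(\tilde{s},\tilde{a}) \le \frac{1}{1-\gamma}$ for every level $k$ and every sampled node: the base case $\hat{V}_1$ is a truncated discounted sum of rewards in $[0,1]$, hence lies in $[0,\frac{1}{1-\gamma}]$, and the inductive step follows from $\hat{Q}_k = r + \frac{\gamma}{M_k}\sum_{s'}\hat{V}_k(s') \le 1 + \gamma\cdot\frac{1}{1-\gamma} = \frac{1}{1-\gamma}$ together with $\hat{V}_{k+1}=\max_a \hat{Q}_k(\cdot,a)$. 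This boundedness is what controls the Hoeffding ranges throughout.

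For the first bound, I would decompose the error of $\hat{V}_1(\tilde{s})$ as an estimate of $V^\pi(\tilde{s})$ into a deterministic truncation bias and a statistical fluctuation. The truncation bias comes from summing the discounted rewards only up to horizon $H-1$ rather than to infinity; since $r\in[0,1]$, the discarded tail is at most $\sum_{t=H}^{\infty} \gamma^t = \frac{\gamma^H}{1-\gamma}$, giving the first term of $C_1^V(\delta)$. The fluctuation term is handled by noting that the $M$ truncated returns $\sum_{k=0}^{H-1}\gamma^k r(s_k^j,a_k^j)$ are i.i.d.\ and take values in $[0,\frac{1}{1-\gamma}]$; Hoeffding's inequality then yields a deviation of the empirical mean from its expectation bounded by $\frac{1}{1-\gamma}\sqrt{\frac{\log(2/\delta)}{M}}$ with probability at least $1-\delta$, matching the second term.

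For the second bound, I would first identify the operator $J$ by $(JV)(\tilde{s},\tilde{a}) := r(\tilde{s},\tilde{a}) + \gamma (PV)(\tilde{s},\tilde{a})$, so that $(J\hat{V}_k)(\tilde{s},\tilde{a}) = r(\tilde{s},\tilde{a}) + \gamma \E_{s'\sim P(\cdot|\tilde{s},\tilde{a})}[\hat{V}_k(s')]$ is exactly the population version of $\hat{Q}_k(\tilde{s},\tilde{a}) = r(\tilde{s},\tilde{a}) + \frac{\gamma}{M_k}\sum_{s'}\hat{V}_k(s')$. The key point is that, conditionally on the randomness used to build and evaluate the subtree that defines $\hat{V}_k$ as a fixed bounded function, the $M_k$ next states $s'$ sampled from $P(\cdot|\tilde{s},\tilde{a})$ are i.i.d.\ and independent of $\hat{V}_k$; the summands $\gamma \hat{V}_k(s')$ then lie in $[0,\frac{\gamma}{1-\gamma}]$ and Hoeffding gives $C_k^Q(\delta)$.

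Finally, for the third bound I would use $\hat{V}_{k+1}(\tilde{s}) = \max_{a}\hat{Q}_k(\tilde{s},a)$ and $(\tto \hat{V}_k)(\tilde{s}) = \max_a (J\hat{V}_k)(\tilde{s},a)$ (since $\tto V = M^\star(r+\gamma PV) = \max_a (JV)(\cdot,a)$), together with the elementary inequality $|\max_a f(a) - \max_a g(a)|\le \max_a |f(a)-g(a)|$, reducing the statement to a uniform-over-actions version of the second bound; a union bound over the $|\ma|$ actions replaces $\log(2/\delta)$ by $\log(2|\ma|/\delta)$, giving $C_{k+1}^V(\delta)$. I expect the main obstacle to be the measurability/independence argument underlying the second bound: because the tree is built top-down while values are computed bottom-up, one must set up the probability space (following the construction in the cited lecture notes) so that $\hat{V}_k$ can legitimately be frozen as a fixed function independent of the child samples $s'$ before applying Hoeffding; the remaining steps are routine.
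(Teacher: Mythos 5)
Your proposal is correct and follows essentially the same route as the paper's proof: a truncation-bias plus Hoeffding decomposition for $\hat{V}_1$, a plain Hoeffding bound for $\hat{Q}_k$ against its population version $J\hat{V}_k$, and a union bound over the $|\ma|$ actions (via $|\max_a f - \max_a g| \leq \max_a |f - g|$, which the paper carries out through one-sided events) for $\hat{V}_{k+1}$. Your explicit boundedness induction and your flagging of the independence/measurability issue for freezing $\hat{V}_k$ before applying Hoeffding are refinements the paper leaves implicit (deferring, as you do, to the cited lecture notes), but they do not change the argument.
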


    \begin{proof}
        The proof of this result is standard and relies on Hoeffding's inequality. We provide a proof for completeness. Let~$\Pr$ be the probability measure induced by the interaction of the planner and the MDP simulator on an adequate probability space. We denote by~$\mathbb{E}$ the corresponding expectation.

        First we show the bound for the value function estimate $\hat{V}_1$. This estimate is biased due to the truncation of the rollouts at a fixed horizon $H$, which is the source of the first error term in the bound. For any state $\Tilde{s} \in \ms_0$, the bias in $\hat{V}_1 (\Tilde{s})$ is given by 
        \begin{align}
            |\E\left[\hat{V}_1 (\Tilde{s})\right] -  V^\pi (\Tilde{s})
            | &= \E\left[\sum_{t = H}^{\infty} \gamma^t r(s_t, a_t) | s_0 = \Tilde{s}, a_t \sim \pi(\cdot | s_t) \right]
            \leq \frac{\gamma^H}{1 - \gamma}\,.
        \end{align}

        The second term is due to Hoeffding's inequality: each trajectory is i.i.d, and the estimator is bounded with probability $1$ by $\frac{1}{1 - \gamma}$. Therefore $\hat{V}_1(\Tilde{s})$ concentrates around its mean, and with probability $1 - \delta$ we have:
        \begin{equation}
            |\hat{V}_1(\Tilde{s}) - \E\left[ \hat{V}_1(\Tilde{s}) \right] | \leq \frac{1}{1 - \gamma}\sqrt{\frac{\log(2 / \delta)}{M_1}}\,. 
        \end{equation}
        
        Now we prove the bound for the $\hat{Q}_k$ function. Assume $k \in \left[1, h\right]$ is arbitrary, and $\Tilde{s} \in \ms_k$. For any action $\Tilde{a} \in \ma$, we can bound the error of $\hat{Q}_k$ as follows:
        \begin{equation}
            |\hat{Q}_k (\Tilde{s}, \Tilde{a}) - (J\hat{V}_k)(\Tilde{s}, \Tilde{a}) |
            = \gamma \left| \frac{1}{M_k}\sum_{i = 1}^{M_k} \hat{V}_k(s'_i)  -  (P \hat{V}_k)(\Tilde{s}, \Tilde{a}) \right|
            \leq \frac{\gamma}{1 - \gamma}\sqrt{\frac{\log(2 / \delta)}{M_k}}\,.
        \end{equation}

        Finally we bound the $\hat{V}_{k + 1}$ function for $k > 1$. Assume $k \in \left[1, h - 1\right]$ is arbitrary, and $\Tilde{s} \in \ms_{k}$. With probability at least $1 - \delta$ we have:
        \begin{equation}
            |\hat{V}_{k + 1}(\Tilde{s}) - (\tto \hat{V}_k)(\Tilde{s}  )| \leq \frac{\gamma}{ 1- \gamma}\sqrt{\frac{\log(2 A/ \delta)}{M_k}}\,.
        \end{equation}
        To show this last result, we use a standard union bound. For every~$t \geq 0$, we have: 
         \begin{align}
            &\Pr(|\hat{V}_{k + 1}(\Tilde{s}) - \tto \hat{V}_k(\Tilde{s}  )| \geq t) \nonumber\\   
            &= \Pr(|\max_{a' \in \ma} \hat{Q}_k (\Tilde{s}, a') - \max_{a' \in \ma}(JV_k)(\Tilde{s}, a') | \geq t) \nonumber\\
            &\leq \Pr(\max_{a' \in \ma} \hat{Q}_k (\Tilde{s}, a') - \max_{a' \in \ma}(JV_k)(\Tilde{s}, a')  \geq t) 
+ \Pr(\max_{a' \in \ma} (JV_k)(\Tilde{s}, a')  - \max_{a' \in \ma}\hat{Q}_k (\Tilde{s}, a') \geq t)  \nonumber\\
&\leq \Pr(\max_{a' \in \ma} \left[\hat{Q}_k (\Tilde{s}, a') - (JV_k)(\Tilde{s}, a')\right]  \geq t) 
+ \Pr(\max_{a' \in \ma} \left[(JV_k)(\Tilde{s}, a')  - \hat{Q}_k (\Tilde{s},a')\right] \geq t)  \nonumber\\
            &\leq \sum_{\tilde{a} \in \ma} \Pr(\hat{Q}_k (\Tilde{s}, \Tilde{a} ) - (J\hat{V}_k)(\Tilde{s}, \Tilde{a }) \geq t) 
            + \sum_{\tilde{a} \in \ma} \Pr( (J\hat{V}_k)(\Tilde{s}, \Tilde{a} ) - \hat{Q}_k(\Tilde{s}, \Tilde{a }) \geq t) \nonumber\\
            &\leq 2A \exp(- 2\frac{(1 - \gamma)^2}{\gamma^2} t^2 M_k)\,. 
        \end{align}        
\end{proof}

\subsubsection{Estimation Error Bound for $\hat{Q}_h$}
\label{app:subsec:estimation-error-bound-hatQh}

We start by defining some useful Bellman operators. 
The Bellman operator~$J: \R^{S} \to \R^{S \times A}$ and its sampled version~$\hat{J}:  \R^{S} \to \R^{S \times A}$ are defined for every~$V \in \R^S$ and any~$(s,a) \in \ms \times \ma$ by 
        \begin{align}
            JV (s,a) &:= r(s,a) + \gamma PV(s,a) \,,\\ 
            \hat{J}V(s,a) &:= r(s,a) + \frac{\gamma}{m} \sum_{s' \in \mathcal{C}(s,a)} V(s')\,,
        \end{align} 
where~$\mathcal{C}(s,a)$ is a set of~$m$ states sampled i.i.d from the distribution~$P(\cdot|s,a)\,.$
\begin{lemma} The operators~$T$ and~$\hat{T}$ satisfy for every~$U, V \in \R^{S}, s \in \ms, a \in \ma$,
        \begin{align}
          |JU(s,a) - JV(s,a)| &\leq  \gamma \lVert U - V \rVert_\infty\,,\\
          |\hat{J}U(s,a) - \hat{J}V(s,a)| &\leq \gamma \max_{s' \in \mathcal{C}(s,a)} |U(s) - V(s)|\,.
        \end{align}
    \end{lemma}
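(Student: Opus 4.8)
The plan is to prove both bounds directly from the definitions of $J$ and $\hat{J}$ by subtracting, cancelling the common reward term, and applying the triangle inequality. Both operators have the form ``reward plus a $\gamma$-weighted average of the value function over next states,'' so the reward $r(s,a)$ disappears upon taking the difference $JU(s,a) - JV(s,a)$ (respectively $\hat{J}U(s,a) - \hat{J}V(s,a)$), leaving only a weighted sum of the pointwise differences $U(s') - V(s')$. The entire argument is the elementary observation that a convex (or uniform) average of the differences is bounded by the largest difference.

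For the first inequality I would write
\begin{equation*}
JU(s,a) - JV(s,a) = \gamma \sum_{s' \in \ms} P(s'|s,a)\,\bigl(U(s') - V(s')\bigr)\,,
\end{equation*}
take absolute values, pull the absolute value inside the sum via the triangle inequality, and bound each $|U(s') - V(s')|$ by $\lVert U - V \rVert_\infty$. The crucial structural fact is that $P(\cdot|s,a) \in \Delta(\ms)$, so the weights are nonnegative and sum to one; this collapses $\gamma \sum_{s'} P(s'|s,a)\,\lVert U - V\rVert_\infty$ to exactly $\gamma \lVert U - V \rVert_\infty$.

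For the second inequality the argument is identical in spirit, using instead
\begin{equation*}
\hat{J}U(s,a) - \hat{J}V(s,a) = \frac{\gamma}{m} \sum_{s' \in \mathcal{C}(s,a)} \bigl(U(s') - V(s')\bigr)\,,
\end{equation*}
where $\mathcal{C}(s,a)$ has exactly $m$ (possibly repeated) sampled states. Applying the triangle inequality and bounding each of the $m$ summands by $\max_{s' \in \mathcal{C}(s,a)} |U(s') - V(s')|$ yields a factor $\tfrac{\gamma}{m} \cdot m$, which simplifies to $\gamma$ times that maximum. Here I deliberately bound by the maximum over the sampled set $\mathcal{C}(s,a)$ rather than the global $\lVert U - V \rVert_\infty$, which is exactly the (tighter, sample-dependent) form stated in the lemma and what makes this the empirical analogue of the true contraction.

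There is essentially no genuine obstacle: this is a one-line triangle-inequality computation for each bound, and both reduce to the same template once the reward cancels. The only points requiring a moment of care are (i) ensuring the stochastic-kernel rows summing to one is invoked in the first bound so that the convex combination is bounded by the sup-norm, and (ii) noting that in the empirical operator the uniform weights $1/m$ over the $m$ sampled next states (counted with multiplicity) give the clean factor $\gamma$ in front of the maximum. I would therefore present the proof compactly as two parallel displayed chains of inequalities rather than belaboring any step.
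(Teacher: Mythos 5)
Your proof is correct and is exactly the standard argument the paper has in mind when it omits the proof as ``immediate'': cancel the reward term, apply the triangle inequality, and use that the weights (the row $P(\cdot|s,a)$ of the stochastic kernel, respectively the uniform weights $1/m$ over the $m$ sampled states counted with multiplicity) are nonnegative and sum to one. You also correctly read the second bound as $\max_{s' \in \mathcal{C}(s,a)} |U(s') - V(s')|$, silently fixing the paper's typo where the subscripted variable $s'$ does not appear inside the absolute value.
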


\begin{proof}
 The proof is immediate and is hence omitted.
\end{proof}

We now show how to bound the error of $\hat{Q}_k$ for any $k \in \left[2, h\right]$. We can bound the total error on $\hat{Q}_k$ by decomposing the error into parts, accounting individually for the contribution of each stage in the estimation procedure. We will make use of the operators~$\hat{J}$ as defined above and we will index them by the sets~$\ms_k$ ($k \in [1,h]$) for clarity. We use the notation~$\|\cdot\|_{\mathcal{X}}$ for the infinity norm over any finite set~$\mathcal{X}\,.$
Recall that $Q^\pi_k = r + \gamma P \tto^{k - 1}V^\pi = J \tto^{k - 1} V^\pi$ for every $k \in \left[1, h \right]$.
The following decomposition holds for any $k \in \left[2,h\right]$ using the aforementioned contraction properties:
\begin{align}
    &\max_{a \in \ma} \lVert \hat{Q}_k(\cdot, a) - Q^\pi_k(\cdot, a) \rVert_{\ms_k} \nonumber\\ 
    &= \max_{a \in \ma} \lVert (\hat{J}_{\ms_{k - 1}} \hat{V}_k)(\cdot, a) - (J \tto^{k - 1} V^\pi)(\cdot, a) \rVert_{\ms_k} \nonumber\\
    &\leq \max_{a \in \ma} \lVert (\hat{J}_{\ms_{k - 1}} \hat{V}_k)(\cdot, a) - (\hat{J}_{\ms_{k - 1}}\tto^{k - 1} V^\pi)(\cdot, a) \rVert_{\ms_k} + \max_{a \in \ma} \lVert (\hat{J}_{\ms_{k - 1}}\tto^{k - 1} V^\pi)(\cdot, a) - (J \tto^{k - 1} V^\pi)(\cdot, a) \rVert_{\ms_k} \nonumber\\
    &\leq \gamma \lVert \hat{V}_k - \tto^{k - 1} V^\pi\rVert_{\ms_{k - 1}} + \max_{a \in \ma} \lVert (\hat{J}_{\ms_{k - 1}}\tto^{k -1} V^\pi)(\cdot, a) - (J \tto^{k -1} V^\pi)(\cdot, a) \rVert_{\ms_k} \nonumber\\
    &= \gamma \lVert M \hat{Q}_{k - 1} - M J \tto^{k - 2} V^\pi\rVert_{\ms_{k - 1}}  + \max_{a \in \ma} \lVert (\hat{J}_{\ms_{k - 1}}\tto^{k -1} V^\pi)(\cdot, a) - (J \tto^{k -1} V^\pi)(\cdot, a) \rVert_{\ms_k} \nonumber\\
    &\leq \gamma \max_{a \in \ma} \lVert \hat{Q}_{k - 1}(\cdot, a) - (J \tto^{k - 2} V^\pi (\cdot, a) \rVert_{\ms_{k - 1}} + \max_{a \in \ma} \lVert (\hat{J}_{\ms_{k - 1}}\tto^{k -1} V^\pi)(\cdot, a) - (J \tto^{k -1} V^\pi)(\cdot, a) \rVert_{\ms_k} \nonumber\\
    &= \gamma \max_{a \in \ma} \lVert \hat{Q}_{k - 1}(\cdot, a) - Q^\pi_{k - 1}(\cdot, a) \rVert_{\ms_{k - 1}} + \max_{a \in \ma} \lVert (\hat{J}_{\ms_{k - 1}}\tto^{k -1} V^\pi)(\cdot, a) - (J \tto^{k -1} V^\pi)(\cdot, a) \rVert_{\ms_k} 
\end{align}

which, after defining the terms $\Delta_k = \max_{a \in \ma} \lVert \hat{Q}_k (\cdot, a) - Q^\pi_k (\cdot, a) \rVert_{\ms_k}$ and 

$\mathcal{E}_k = \max_{a \in \ma} \lVert (\hat{J}_{\ms_{k - 1}}\tto^{k -1} V^\pi)(\cdot, a) - (J \tto^{k -1} V^\pi)(\cdot, a) \rVert_{\ms_k}$ yields the following recursion:
\begin{align}
    \Delta_k \leq \gamma \Delta_{k - 1} + \mathcal{E}_k\,,
\end{align}
and unrolling this recursion leads to the following bound on $\Delta_h$:
\begin{equation}
    \Delta_h \leq \gamma^{h - 1} \Delta_1 + \sum_{i = 2}^{h} \gamma^{h - i} \mathcal{E}_i \, .
\end{equation}

We begin by bounding $\Delta_1$ by partially applying the recursion above:
\begin{align}
    \Delta_1 = \max_{a \in \ma}\lVert \hat{Q}_1 (\cdot, a) - J V^\pi (\cdot, a) \rVert_{\ms_1} 
    = \max_{a \in \ma}\lVert \hat{J}_{\ms_0} \hat{V}_1 (\cdot, a) - J V^\pi (\cdot, a) \rVert_{\ms_1}
    \leq \gamma \lVert \hat{V}_1 - V^\pi \rVert_{\ms_0}\,,
\end{align}
which from Lemma \ref{lem:lookahead-concentration} we know to be bounded with probability at most $1 - \delta |\ms_0|$ by:
\begin{equation}\label{eq:leaf-concentration}
    \Delta_1 \leq \gamma \lVert \hat{V}_1 - V^\pi \rVert_{\ms_0} \leq \frac{\gamma^{H + 1}}{1 - \gamma} + \frac{\gamma}{1 - \gamma} \sqrt{\frac{\log(2|\ms_0| / \delta)}{M_0}} \, .
\end{equation}

Finally, each term $\mathcal{E}_j$ for $j \leq k$ can be bounded using the following lemma below. The proof relies on a union bound over the states reached during the construction of the search tree, see the lemma towards the end of \citep{szepesvari-lecture-notes} for a detailed explanation and a proof.
\begin{lemma}
    Let $n(j) = |\ms_j|$ for $j \in \left[k\right]$ and any~$k \in [h]$, let $S_1$ be the root state of the tree search and for each $1 < i \leq n(j)$ let $S_i \in \ms_j$ be the state reached at the $i -1\textsuperscript{th}$ simulator call such that $\ms_j = \left\{S_i\right\}_{i = 1}^{n(j)}$. For $0 < \delta \leq 1$ with probability $1 - An(j) \delta$ for any $1 \leq i \leq n(j)$ we have:
    \begin{equation}
        \max_{a  \in \ma} |  (\hat{J}_{\ms_{j - 1}} \tto^{k - 1} V^\pi)(S_i, a) - (J \tto^{k - 1} V^\pi)(S_i, a) | \leq \frac{\gamma}{1 - \gamma} \sqrt{\frac{\log(2 / \delta)}{M_j}}\,.
    \end{equation}
\end{lemma}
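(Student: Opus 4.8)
The plan is to reduce the statement to a single fixed-pair Hoeffding bound and then dispose of the fact that the states $S_i$ are chosen adaptively by a filtration argument followed by a union bound. Throughout I write $W := \tto^{k-1}V^\pi$ and record the key structural fact that makes everything go through: $W$ is a \emph{fixed, deterministic} vector in $\R^S$, depending only on the (known) MDP and the policy $\pi$, and hence independent of all the randomness used to build the search tree. Moreover, since $r$ takes values in $[0,1]$ we have $V^\pi \in [0,\tfrac{1}{1-\gamma}]^S$, and $\tto$ preserves this range, so $0 \leq W(s') \leq \tfrac{1}{1-\gamma}$ for every $s' \in \ms$. Recalling that at a node $\tilde s \in \ms_j$ the operator $\hat{J}_{\ms_{j-1}}$ averages $W$ over the $M_j$ children of $(\tilde s,a)$ sampled i.i.d.\ from $P(\cdot\mid \tilde s,a)$, the whole proof is about controlling one empirical mean of bounded i.i.d.\ variables.

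First I would prove the bound for an arbitrary but \emph{fixed} state-action pair $(s,a)$. Using the definitions of $J$ and $\hat{J}_{\ms_{j-1}}$,
\[
(\hat{J}_{\ms_{j-1}} W)(s,a) - (J W)(s,a) = \gamma\left(\frac{1}{M_j}\sum_{l=1}^{M_j} W(s'_l) - (PW)(s,a)\right),
\]
where $s'_1,\dots,s'_{M_j}$ are the sampled children and $(PW)(s,a)=\mathbb{E}_{s'\sim P(\cdot\mid s,a)}[W(s')]$. The summands $W(s'_l)$ are i.i.d.\ and lie in $[0,\tfrac{1}{1-\gamma}]$, so Hoeffding's inequality yields, for every $t\geq 0$,
\[
\Pr\!\left(\left|(\hat{J}_{\ms_{j-1}}W)(s,a)-(JW)(s,a)\right|\geq \gamma t\right)\leq 2\exp\!\left(-2(1-\gamma)^2 M_j t^2\right).
\]
Taking $t=\tfrac{1}{1-\gamma}\sqrt{\log(2/\delta)/(2M_j)}$ makes the right-hand side at most $\delta$, and bounding $\sqrt{1/(2M_j)}\leq\sqrt{1/M_j}$ gives the per-pair deviation $\tfrac{\gamma}{1-\gamma}\sqrt{\log(2/\delta)/M_j}$ appearing in the lemma.

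The delicate step, and the one I expect to be the main obstacle, is that the states $S_i\in\ms_j$ are not fixed in advance: they are produced by the sampling at the levels above $j$, so I cannot union bound over a predetermined set of states. To handle this I would exploit the sequential structure of the tree construction. Let $\mathcal{F}_i$ be the $\sigma$-algebra generated by all simulator calls up to and including the one that reveals $S_i$. By construction the $M_j$ samples used to form $(\hat{J}_{\ms_{j-1}}W)(S_i,a)$ are drawn \emph{after} $S_i$ is determined and, conditionally on $\mathcal{F}_i$, are i.i.d.\ draws from $P(\cdot\mid S_i,a)$; crucially, $W$ is non-random. Hence the fixed-pair bound above applies verbatim conditionally on $\mathcal{F}_i$ with $S_i$ substituted for $s$. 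Since the conditional tail probability is at most the constant $\delta$ for every value of $S_i$, taking expectations gives the \emph{unconditional} bound $\Pr(A_{i,a})=\mathbb{E}[\Pr(A_{i,a}\mid\mathcal{F}_i)]\leq\delta$ for each index $i$ and action $a$, where $A_{i,a}$ is the corresponding deviation event.

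Finally I would make the bound simultaneous by a union bound over the $|\ma|$ actions and the $n(j)$ reached states, giving total failure probability at most $|\ma|\,n(j)\,\delta$, which is exactly the claim. One technical caveat I would address for full rigor is that $n(j)=|\ms_j|$ is itself random; this is resolved by running the union bound over a deterministic upper bound on the number of reachable states at level $j$, namely $n(j)\leq\min\{|\ms|,\ |\ma|^{h-j}\prod_{i=j+1}^{h} M_i\}$ as already noted in the text (defining $A_{i,a}$ vacuously for $i>n(j)$), after which the statement in the random form follows. Combining the per-pair Hoeffding estimate with this adaptivity-aware union bound completes the argument.
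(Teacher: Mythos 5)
Your proof is correct and takes essentially the same route as the paper, which defers this lemma to \citet{szepesvari-lecture-notes}: a fixed-pair Hoeffding bound exploiting that $W = \tto^{k-1}V^\pi$ is a deterministic vector bounded in $[0, \frac{1}{1-\gamma}]$, made unconditional via the tower rule over the filtration of simulator calls (so the adaptively reached states $S_i$ cause no trouble), followed by a union bound over the $|\ma|\,n(j)$ state-action pairs indexed by the order in which states are revealed. Your closing remark on the randomness of $n(j)$ --- union bounding over the deterministic cap $\min\{|\ms|,\,|\ma|^{h-j}\prod_{i=j+1}^{h} M_i\}$ with vacuous events for unused indices --- is a careful rigorization of a point the paper and its cited source leave implicit.
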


Combining this result with \eqref{eq:leaf-concentration} and the unrolled recursion gives the final concentration bound with probability at least $1 - \delta_V - \delta_J$ for $\delta_V, \delta_J \in (0,1)\,,$
\begin{equation}
    \max_{a \in \ma} \lVert \hat{Q}_k(\cdot, a) - Q^\pi_k(\cdot, a) \rVert_{\ms_k} \leq \frac{\gamma^{H + k}}{1 - \gamma} + \frac{\gamma^k}{1 - \gamma}\sqrt{\frac{\log(2 |\ms_0| / \delta_V)}{M_0}} + \frac{\gamma^2(1 - \gamma^{k - 1})}{(1 - \gamma)^2} \sqrt{\frac{\log(2A|\ms_0|(k - 1)/ \delta_J)}{M}}
\end{equation}

by a union bound over $j \leq k$, assuming $M_j = M$ for all $1 \leq j \leq h - 1$.

\subsubsection{End of the Proof of Theorem~\ref{thm:inexact-hpmd-sample}}
\label{app:subsec:end-of-proof-thm-inexact-sample-comp}

For any state-action pair $(s, a) \in \ms \times \ma$, define the set $|\ms_h| = \{s\}$ and the sets $|\ms_k|$ accordingly for $0 \leq k \leq h - 1$.  Here we set~$M_t := M > 0 \,, \forall t \in \left[1, h\right]$. Assuming the worst case growth of $|S_k| \leq |S_{k + 1}| AM_{k + 1}$, we can bound $|\ms_0|$ deterministically as $|\ms_0| \leq A^h M^{h}$. Using the bound in the previous section, it is possible to bound the error of $\hat{Q}_h(s, a) $ as follows:
\begin{align}
    \label{eq:bound-to-explain-tradeoff}
    |\hat{Q}_h(s, a) - Q_h^\pi (s, a)| &\leq \max_{a' \in \ma} \lVert \hat{Q}_h(\cdot, a') - Q^\pi_h(\cdot, a') \rVert_{\ms_h} \nonumber \\
    &\leq \frac{\gamma^{H + h}}{1 - \gamma} + \frac{\gamma^h}{1 - \gamma}\sqrt{\frac{\log(2 |\ms_0| / \delta_V)}{M_0}} + \frac{\gamma^2(1 - \gamma^{h - 1})}{(1 - \gamma)^2} \sqrt{\frac{\log(2A|\ms_0|h / \delta_J)}{M}}\,,
\end{align}
with probability at least $1 - \delta_V - \delta_J$.

Assume now we want to estimate the value function $\hat{Q}(s, a)$ for all state action pairs $(s, a)$ in some set $\mc \subseteq \ms \times \ma$: we want to achieve $\max_{(s, a) \in \mc} |\hat{Q}(s, a) - Q(s, a)| \leq b$ for some positive accuracy~$b$ with probability at least $1 - \delta_V - \delta_J$. This can be achieved by choosing the following parameters:
\begin{align}
    \forall 1 \leq j \leq h \,, M_j := M &> \frac{9\gamma^4 (1 - \gamma^{h - 1})^2}{(1 - \gamma)^4 b^2} \log(2 h A|\ms_0||\mc| / \delta_J)\,, \\
    M_0 &> \frac{9 \gamma^{2h}}{(1 - \gamma)^2 b^2} \log(2 |\ms_0||\mc| / \delta_V)\,, \\
    H &> \frac{1}{1 - \gamma} \log\left(\frac{3 \gamma^h}{b (1 - \gamma)}\right)\,. 
\end{align}

Picking $\mc = \ms \times \ma$ gives $\lVert \hat{Q}_h - Q^\pi_h \rVert_\infty \leq b$. If estimates for $\{V_k\}_{k = 1}^h$ are reused across state action pairs, and the components of each value function are only computed at most once for every state $s \in S$, then the overall number of samples is of the order of $K M_0 H |\ms_0| + K h|\ms_0|\cdot|\mc|M$. In particular, with~$\mc = \ms \times \ma$, the number of samples becomes of the order of $K M_0 H S + K h S \cdot AM$ where we reuse estimators (which are in this case computed for each state-action pair~$(s,a) \in \ms \times \ma$. Let $K > \frac{1}{h(1 - \gamma)} \log (\frac{4}{\epsilon(1 - \gamma)(1 -\gamma^h)})$ as in Theorem \ref{thm:inexact-hpmd-sample}. Choosing $\delta_V = \delta_J = \delta / 2$ for some $\delta \in (0,1)$ and $b = \frac{\epsilon (1 - \gamma) (1 - \gamma^h)}{4}$ yields the following bound with probability at least $1 - \delta$ (by Theorem \ref{thm:inexact-hpmd}):
\begin{equation}
    \lVert V^\star - \vk \rVert_\infty \leq \gamma^{hk} \left(\lVert V^\star - V^{\pi_0}\rVert_\infty + \frac{1 - \gamma^{hk}}{(1 - \gamma)(1 - \gamma^{h})}\right) + \frac{\epsilon}{2}\,.
\end{equation}

To conclude, note that when $k = K$, we have:
\begin{align}
    \gamma^{hK}\left(\lVert V^\star - V^{\pi_0} \rVert_\infty + \frac{1- \gamma^{hK}}{(1 - \gamma)(1 - \gamma^h)}\right) 
    \leq \gamma^{hK}\frac{2 (1 - \gamma^{h K})}{(1 - \gamma)(1 - \gamma^{h})} 
    \leq \frac{\epsilon (1 - \gamma^{h K})}{2}
    \leq \frac{\epsilon}{2}\,.  
\end{align}

\section{Additional Details and Proofs for Section~\ref{sec:fa-hpmd}: $h$-PMD with Function Approximation}
\label{appx:hpmd-fun-approx}

 To incorporate function approximation to $h$-PMD, the overall approach is as follows: compute an estimate of the lookahead value function by fitting a linear estimator using least squares, then perform an update step as in inexact PMD, taking advantage of the same error propagation analysis which is agnostic to the source of errors in the value function. However a problem that arises is in storing the new policy, since a classic tabular storage method would require $\mathcal{O}(S)$ memory, which we would like to avoid to scale to large state spaces. A solution to this problem will be presented later on in this section.

Since we are using lookahead in our $h$-PMD algorithm, we modify the targets in our least squares regression procedure in order to approximate~$Q_h^\pi$ rather than~$Q^\pi$ as in the standard PMD algorithm (with $h=1$). 

\subsection{Inexact $h$-PMD with Linear Function Approximation}
\label{app:subsec:inexact-pmd-linear-fa}

Here we provide a detailed description of the $h$-PMD algorithm using function approximation to estimate the $Q_h$-function. Firstly we consider the lookahead depth~$h$ as a fixed input to the algorithm. We assume that the underlying MDP has finite state and action spaces $\ms$ and $\ma$, with transition kernel $P : \ms \times \ma \to \Delta(\ma)$, reward vector $R \in \left[0, 1\right]^{S \times A}$ and discount factor $\gamma$. We also assume we are given a feature map $\psi : \ms \times \ma \to \mathbb{R}^d$, also written as $\Psi \in \mathbb{R}^{SA \times d}$.

We adapt the inexact $h$-PMD algorithm from section \ref{sec:inexact-h-pmd} as follows:

\begin{enumerate}
    \item Take as input any policy $\pi_0 \in \text{rint}(\Pi)$, for example one that is uniform over actions in all states.
    \item Initialize our memory containing $\pi_0$, and an empty list $\Theta$ which will hold the parameters of our value function estimates computed at each iteration.
    \item At each iteration $k=0,\ldots, K$, perform the following steps:
    \begin{enumerate}
        \item \label{step:residual} Compute the targets $\hat{R} (z) := \hat{Q}_h(z)$ for all $z \in \mc$ using the procedure described in section \ref{sec:monte-carlo-planning}, using $\pi_k$ as a policy. 
        \item Compute $\theta_k := \argmin_{\theta \in \mathbb{R}^d} \sum_{z \in \mc} \rho (z) (\psi(z)^\top \theta - \hat{R}(z))^2$ and append it to the list $\Theta$.
        \item \label{step:update} Using $\Psi\theta_k$ as an estimate for $Q^\pk_h$, compute $\pkp$ using the usual update from inexact $h$-PMD. This step can be merged with step \ref{step:residual}, the details are described below.
    \end{enumerate}
\end{enumerate}

Computing the full policy $\pi_k$ for every state at every iteration would require $\Omega(S)$ operations, although the number of samples to the environment still depends only on $d$. If this is not a limitation the policy update step can be performed as in inexact $h$-PMD simply by using $\Psi\theta_k$ as an estimate for $Q_h^\pk$, as described in step \ref{step:update}. Otherwise, we describe a method below to compute $\pi_k$ on demand when it is needed in the computation of the targets in step \ref{step:residual}, eliminating step \ref{step:update} entirely.

\begin{enumerate}
    \item The policy $\pi_0$ is already given for all states.
    \item For any $k > 0$ given $\pk$ we can compute $\pkp$ at any state $s$ using the following:
    \begin{enumerate}
        \item Compute any greedy policy $\Tilde{\pi}(\cdot | s)$ using $\Psi\theta_k$, for example $\Tilde{\pi}(a' | s) \propto \mathbbm{1}\left\{a' \in \underset{a \in \ma}{\argmax} \left\{\langle \psi(s, a)^\top \theta_k \right\}\right\}$
        \item Compute $\eta_k := \frac{1}{\gamma^{2 h(k + 1)}} D_{\phi}(\Tilde{\pi}(\cdot | s), \pi^k(\cdot|s))$
        \item Finally, compute $\pi_{k + 1}(\cdot | s) \in \argmax_{\pi \in \Pi}\left\{\eta_k \langle (\Psi\theta_k)_s, \pi(\cdot | s)\rangle - D_{\phi}(\pi(\cdot | s), \pi_k (\cdot | s)) \right\}$
    \end{enumerate}
\end{enumerate}

\begin{remark} This procedure can be used to compute $\pkp(\cdot | s)$ ``on demand" for any $k > 0$ and $s \in \ms$ by recursively computing $\pk(\cdot | s')$ at any states $s'$ required for the computation of $\Psi \theta_k$ (since $\pi_0$ is given). The memory cost requirement for this method is of the order of $\mathcal{O}(KA \min\{|\ms_0| H, S\})$ in total which is always less than computing the full tabular policy at each iteration.
\end{remark}

\begin{remark} Above, $(\Psi \theta_k)_s$ denotes a vector in $\mathbb{R}^{A}$ that can be computed in $\mathcal{O}(Ad)$ operations: each component in $a$ is equal to $\psi(s, a)^\top \theta_k$ which can be computed in  $\mathcal{O}(d)$ operations. Also, the stepsize $\eta_k$ is chosen to be equal to its lower bound in the analysis of \ref{thm:inexact-hpmd}, with the value of $c_k$ chosen to be equal to $\gamma^{2 h (k + 1)}$, as it is needed for linear convergence. We also note that by memorizing policies as they are computed, we manage to perform just as many policy updates as are needed by the algorithm.
\end{remark}

\subsection{Discussion of Assumption~\ref{as:value-approx}}
\label{app:subsec:assumption-value-approx}

We make two preliminary comments before discussing the relevance of our assumption and alternatives to relax it below:

\begin{itemize}
\item First, notice that we do not require $Q^{\pi}_h$ to be represented as a linear function. Our assumption is an \textbf{approximate} action value function approximation. 
\item When the lookahead depth is $h=1$, we recover the standard PMD algorithm. Notice that in this case our assumption is standard in the RL literature for linear function approximation. 
\end{itemize}

Now, using our notations, notice that for any policy $\pi \in \Pi$, $Q_h^{\pi} = r + \gamma P \mathcal{T}^{h-1}V^{\pi} = r + \gamma P \mathcal{T}^{h-1}M^{\pi} Q^{\pi} = J \mathcal{T}^{h-1}M^{\pi} Q^{\pi}.$ We have two alternatives:

\begin{itemize}
\item Directly approximate $Q_h^{\pi}$ as we do in our assumption. In that case the targets used in our regression procedure are estimates $\hat{Q}_h$ of $Q_h^{\pi}$. Note also that this approach is meaningful from a practical point as we may directly use a neural network to approximate the lookahead Q-function which is used in our $h$-PMD algorithm rather than approximating the Q-function first and then trying to approximate from it the lookahead Q-function $Q_h^{\pi}.$

\item Approximate $Q^{\pi}$ with linear function approximation and use our propagation analysis (see section~\ref{sec:monte-carlo-planning} and proof in appendix~\ref{app:subsec:estimation-error-bound-hatQh}, \ref{app:subsec:end-of-proof-thm-inexact-sample-comp}) based on the link between $Q_h^{\pi}$ and $Q^{\pi}$ highlighted in the formula above. This approach which propagates the function approximation error on $Q^{\pi}$ to $Q_h^{\pi}$ constitutes an alternative to Assumption~\ref{as:value-approx} on $Q_h^{\pi}$ by an assumption on~$Q^{\pi}.$ We discuss it further in the rest of this section.
\end{itemize}

In the following exposition we provide more details regarding estimating $Q^\pi$ using linear function approximation and relaxing assumption~\ref{as:value-approx} using the second approach discussed above.

Let Assumption~\ref{as:full-rank-feature-matrix} hold (as in Theorem~\ref{thm:hpmd-linear-fa}). 
Instead of assumption~\ref{as:value-approx}, we introduce the following assumption on the $Q$-function~$Q^{\pi}$ rather than the $h$-step lookahead Q-function~$Q_h^{\pi}\,.$ 

\begin{assumption}
$\exists\, \epsilon > 0, \forall \pi \in \Pi, \inf_{\theta \in \mathbb{R}^d} \lVert Q^\pi - \Psi \theta \rVert \leq \epsilon\,.$
\end{assumption}

We modify the $h$-PMD with function approximation algorithm as follows: instead of directly approximating $Q_h^\pi$ using linear function approximation, we first use linear function approximation to approximate $Q^\pi$ from which we construct an estimate of $Q_h^\pi$ similarly to our method for inexact $h$-PMD. This requires us to replace the regression targets with simpler ones, which are Monte Carlo estimates of $Q^\pi$ instead of $Q^\pi_h$. Specifically, these new targets are defined as:
$$ R_{M_0} (z) := \frac{1}{M_0} \sum_{i = 1}^{M_0} \sum_{t = 0}^{H - 1} \gamma^t r(s_t^{(i)}, a_t^{(i)}) ,$$
where $(s_t^{(i)}, a_t^{(i)})$ are state action pairs sampled according to the policy $\pi$, $z = (s, a)\in \mathcal{C} \subseteq \mathcal{S} \times \mathcal{A}$ and $(s_0^{(i)}, a_0^{(i)}) = z$.

For any policy $\pi$ we can use the same function approximation procedure as described in our paper (see Appendix C.1) to yield a $\hat{\theta} \in \mathbb{R}^d$ with $\lVert Q^\pi - \Psi \hat \theta \rVert_\infty \leq \epsilon_0$, (for some value of $\epsilon_0$ to be defined below) by simply plugging in $h = 1$. Note that this particular case was previously analysed in \citep{szepesvari-lecture-notes} (Lecture 8). We reuse an intermediate result therein (equation (7)) which guarantees the bound:
\begin{equation}
\lVert Q^\pi - \Psi \hat{\theta} \rVert_\infty \leq \lVert \epsilon_\pi \rVert_\infty (1 + \sqrt d) + \sqrt d \left( \frac{\gamma^H}{1 - \gamma} + \frac{1}{1 - \gamma} \sqrt{\frac{\log(2 |\mathcal{C}| / \delta)}{2 M_0}}\right) =: \epsilon_0 (\delta)
\end{equation}
with probability at least $1 - \delta$ for some $0 < \delta < 1$.

Now we reuse most of our estimation procedure in Section~\ref{sec:monte-carlo-planning}, simply by changing the definition of $\hat{V_1}$ to $ \hat{V_1} (s) := \sum_{a \in \mathcal{A}} \pi(a | s) (\Psi \hat\theta)_{s, a}$ for all $s \in \mathcal{S}_0$. This evidently retains the guarantee that $|\hat{V}_1(s) - V^\pi(s) | \leq \epsilon_0 (\delta)$ with probability at least $1 - \delta$ for each $s \in \mathcal{S}_0$. Finally, using the same error propagation analysis as in Appendix~\ref{app:subsec:estimation-error-bound-hatQh}, replacing $C_1^V (\frac{\delta_1^V}{|\mathcal{S}_0|})$ with $\epsilon_0(\delta_1^V)$ we obtain the following bound for any individual state action pair $(s, a)$:
\begin{equation}
|\hat{Q^\pi_h}(s, a) - Q^\pi_h (s, a) | \leq \frac{\gamma}{1 - \gamma} \sqrt{\frac{\log (2 / \delta_h^Q)}{M_h}} + \gamma^2\frac{1 - \gamma^{h - 1}}{(1 - \gamma)^2} \sqrt{\frac{\log(2(h - 1)Z/ \delta_t^V)}{M}} + \gamma^h \epsilon_0(\delta_1^V)
\end{equation}
with probability at least $1 - \delta_1^V - \delta_t^V - \delta_h^Q$.

Finally, the update rule as described in Appendix~\ref{app:subsec:inexact-pmd-linear-fa} is replaced by $\pi_{k + 1}(\cdot | s) \in \argmax_{\pi_s \in \Delta(\mathcal{A})}(\eta_k \langle \hat{Q_h^{\pi_k}} (s, \cdot), \pi_s\rangle - D_\phi(\pi_s, \pi_s^k))$. Note that we only compute $\pi_k$ in the states where we need it to compute $\pi_{k + 1}$ as described in Appendix~\ref{app:subsec:inexact-pmd-linear-fa}.

Unfortunately, it is not clear how to directly obtain a result similar to Theorem~\ref{thm:hpmd-linear-fa} from this approach (controlling $\lVert V^\star - V^{\pi_k} \rVert_\infty$ at each iteration, i.e. uniformly over all states) beyond the bound that we provide above. A naive union bound over the state action space would result in a bound with a logarithmic dependence on the size of the state space, unlike Theorem~\ref{thm:hpmd-linear-fa} which results in a bound that is completely independent of the size of the state space.

We would like to add an additional comment regarding possible ways to relax Assumption~\ref{as:value-approx} even for the standard case where $h=1$. Recently, Mei et al. \citep{mei-et-al24ordering-based-conditions} proposed an interesting alternative approach relying on order-based conditions (instead of approximation error based ones) to analyze policy gradient methods with linear function approximation. We would like to highlight though that their work only applies to the bandit setting to the best of our knowledge. The extension to MDPs is far from being straightforward and would require some specific investigation. It would be interesting to see if one can relax Assumption 6.2 using similar ideas.

\subsection{Convergence Analysis of Inexact $h$-PMD using Function Approximation}
\label{appx:proof-thm63}

Our analysis follows the same lines as the proof in~\citep{lattimore-szepesvari-api} under our assumptions from section \ref{sec:fa-hpmd}.  

We compute $\hat{Q}_h (z)$ for a subset $\mc$ of state action pairs using the vanilla Monte Carlo Planning algorithm presented in section \ref{sec:inexact-h-pmd}. These will be our targets for estimating the value function using least squares. In order to control the extrapolation of these estimates to the full $Q_h^\pi$ function, we will need the following theorem. 

\begin{theorem}[\citep{kiefer-wolfowitz60}] 
Assume $\mz$ is finite and $\psi : \mz \to \mathbb{R}^d$ is such that the underlying matrix $\Psi \in \mathbb{R}^{|\mz| \times d}$ is of rank $d$. Then there exist $\mc \subseteq \mz$ and a distribution $\rho : \mc \to \left[0, 1\right]$ (i.e. $\sum_{z \in \mc} \rho(z) = 1$) satisfying the following conditions:
\begin{enumerate}
    \item $|\mc| \leq d ( d + 1) / 2$\,, 
    \item $\sup_{z \in \mz} \lVert \psi(z) \rVert_{\mgr^{-1}} \leq \sqrt{d}$ where $\mgr := \sum_{z \in \mc} \rho(z) \psi(z) \psi(z)^\top$.
\end{enumerate}
\end{theorem}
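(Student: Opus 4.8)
The plan is to recognize the claim as a statement about $G$-optimal experimental design and to prove it through its classical dual, the $D$-optimal design problem, following the Kiefer--Wolfowitz equivalence theorem. Write $M_z := \psi(z)\psi(z)^\top$ for the rank-one information matrix of a point $z$, living in the space $\mathrm{Sym}_d$ of symmetric $d\times d$ matrices, and $V(\rho) := \sum_{z \in \mz} \rho(z) M_z$ for the information matrix of a distribution $\rho$ on $\mz$. First I would set up the problem $\max_\rho \log\det V(\rho)$ over the probability simplex on $\mz$ and establish that a maximizer $\rho^\star$ exists and is nondegenerate: the simplex over the finite set $\mz$ is compact, the map $\rho \mapsto \log\det V(\rho)$ is concave (the linear map $\rho \mapsto V(\rho)$ composed with the concave $\log\det$ on positive definite matrices), and the full-rank assumption on $\Psi$ guarantees that some $\rho$ (e.g.\ the uniform one) yields $V(\rho) \succ 0$, so the supremum is finite and attained at some $\rho^\star$ with $V^\star := V(\rho^\star) \succ 0$.

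Next I would extract the variance bound (condition~2) from first-order optimality. For each $z \in \mz$ consider the feasible perturbation $\rho_t := (1-t)\rho^\star + t\,\delta_z$, so that $V(\rho_t) = V^\star + t(M_z - V^\star)$, and differentiate using $\tfrac{d}{dt}\log\det V(\rho_t)\big|_{t=0} = \mathrm{tr}\!\big(V^{\star-1}(M_z - V^\star)\big) = \psi(z)^\top V^{\star-1}\psi(z) - d$. Optimality of $\rho^\star$ forces this right derivative to be nonpositive for every $z$, giving $\psi(z)^\top V^{\star-1}\psi(z) \le d$, i.e.\ $\|\psi(z)\|_{V^{\star-1}} \le \sqrt d$ for all $z \in \mz$. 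Averaging the same quantity against $\rho^\star$ yields $\sum_z \rho^\star(z)\,\psi(z)^\top V^{\star-1}\psi(z) = \mathrm{tr}(V^{\star-1}V^\star) = d$, so the bound is in fact tight; this is exactly the equivalence-theorem content, and it produces condition~2 for any design whose information matrix equals $V^\star$.

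Finally I would obtain the cardinality bound (condition~1) by sparsifying $\rho^\star$ while preserving $V^\star$. Since $V^\star = \sum_z \rho^\star(z) M_z$ is a convex combination of the $M_z$ inside $\mathrm{Sym}_d$, a space of dimension $d(d+1)/2$, Carathéodory's theorem lets me rewrite $V^\star = \sum_{z \in \mc}\rho(z) M_z$ with weights $\rho(z) \ge 0$, $\sum_{z\in\mc}\rho(z)=1$, and $|\mc| \le d(d+1)/2 + 1$. To shave off the extra atom, I would use that the first-order condition says $V^\star$ maximizes the \emph{linear} functional $W \mapsto \langle V^{\star-1}, W\rangle$ over the polytope $\mathcal{M} := \mathrm{conv}\{M_z\}$, so $V^\star$ lies on a proper face of $\mathcal{M}$ of affine dimension at most $d(d+1)/2 - 1$; applying Carathéodory inside that face lowers the count by one. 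Replacing $\rho^\star$ by $\rho$ leaves $V(\rho) = \mgr = V^\star$ unchanged, so the variance bound of the previous paragraph holds verbatim with $\mgr^{-1}$, giving both conditions simultaneously.

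The routine parts are the existence argument and the derivative computation via $\partial_t \log\det V = \mathrm{tr}(V^{-1}\partial_t V)$. The main obstacle is the sharp constant in condition~1: naive Carathéodory in $\mathrm{Sym}_d$ only delivers $d(d+1)/2 + 1$ atoms, and removing the last one genuinely relies on the face argument, i.e.\ on first invoking the equivalence theorem to place $V^\star$ on a proper face of $\mathcal{M}$, together with treating the degenerate case in which $\langle V^{\star-1}, M_z\rangle = d$ for every $z$ (the functional is constant) by arguing on the minimal face directly.
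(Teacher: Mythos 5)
Your proof is correct. Note, however, that the paper does not prove this statement at all: it is quoted verbatim as a cited classical result \citep{kiefer-wolfowitz60} (following its use in \citet{lattimore-szepesvari-api}), so there is no in-paper argument to compare against. What you have written is a complete, self-contained proof along the standard lines: existence of a $D$-optimal design by compactness and concavity of $\log\det$, the first-order/equivalence-theorem computation $\frac{d}{dt}\log\det V(\rho_t)\big|_{t=0^+}=\psi(z)^\top V^{\star-1}\psi(z)-d\le 0$ yielding the $G$-optimality bound $\sup_z\lVert\psi(z)\rVert_{\mgr^{-1}}\le\sqrt d$, and Carath\'eodory in $\mathrm{Sym}_d$ for the support size. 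One simplification worth noting for your final step: you do not need the face/degenerate-case dichotomy. In \emph{any} convex representation $V^\star=\sum_z\lambda_z M_z$ with $\lambda_z>0$, taking the inner product with $V^{\star-1}$ gives $d=\sum_z\lambda_z\langle V^{\star-1},M_z\rangle\le d$, so equality holds termwise and every support atom lies on the hyperplane $\{W:\langle V^{\star-1},W\rangle=d\}$, an affine space of dimension $d(d+1)/2-1$; Carath\'eodory applied within that hyperplane immediately yields $|\mc|\le d(d+1)/2$, uniformly covering the case where the functional is constant on the polytope. Since the sparsified design has the same information matrix $\mgr=V^\star$, condition~2 is preserved verbatim, exactly as you argue.
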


 Let $\mc \subseteq \ms \times \ma$, $\rho : \mc \to \left[0, 1\right]$ be as described in the Kiefer-Wolfowitz theorem. Define $\hat{\theta}$ as follows:
\begin{equation}\label{eq:least-squares-lookahead}
    \hat{\theta} \in \argmin_{\theta \in \mathbb{R}^d} \sum_{z \in \mc} \rho(z) (\phi(z)^\top\theta - \hat{R}_m (z))^2\,.
\end{equation}

Assuming that $\mgr$ is nonsingular, a closed form solution for the above least squares problem is given by $\hat{\theta} = \mgr^{-1}\sum_{z \in \mc}\rho(z)\hat{R}_m(z) \phi(z)$. This will be used to estimate our lookahead value function as $\Psi \hat{\theta}$. We can control the extrapolation error of this estimate as follows:

\begin{theorem}\label{thm:fa-lookahead-error-bound}
    Let $\delta \in (0,1)$ and let $\theta \in \mathbb{R}^d$ such that $\lVert Q_h^\pi - \Phi \theta\rVert_\infty \leq \epsilon_\pi$. Assume that we estimate the targets $\hat{Q}_h$ using the Monte Carlo planning procedure described in section \ref{sec:monte-carlo-planning} with parameters, $M$ and $M_0$. There exists a subset $\mc \subseteq \mz$ and a distribution $\rho : \mc \to \mathbb{R}_{\geq 0}$ for which, with probability at least $1 - \delta$, the least squares estimator $\hat{\theta}$ satisfies:
    \begin{equation}
        \lVert Q_h^\pi - \Phi \hat{\theta} \rVert_\infty \leq \epsilon_\pi(1 + \sqrt{d}) + \epsilon_Q\sqrt{d}\,, 
    \end{equation}
    where 
    \begin{equation}
        \epsilon_Q := \frac{\gamma^{H + h}}{1 - \gamma} + \frac{\gamma^h}{1  - \gamma}\sqrt{\frac{\log (4 Zd^2 / \delta)}{M_0}}  + \frac{\gamma^2 (1 - \gamma^{h - 1})}{(1 - \gamma)^2} \sqrt{\frac{\log(4 h Zd^2/ \delta)}{M}} \,,
    \end{equation}
    and Z := $A^h M_h M^{h- 1}\,.$ 
\end{theorem}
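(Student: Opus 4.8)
The plan is to bound the extrapolation error by splitting it into a realizability part and an estimation part, then transferring the bound from the design set $\mc$ to the whole space $\mz$ using the $G$-optimal design guarantee of Kiefer--Wolfowitz. Fix a near-realizer $\theta \in \R^d$ with $\lVert Q_h^\pi - \Psi\theta\rVert_\infty \le \epsilon_\pi$ (which exists by assumption). By the triangle inequality,
\begin{equation*}
\lVert Q_h^\pi - \Psi\hat\theta\rVert_\infty \le \lVert Q_h^\pi - \Psi\theta\rVert_\infty + \lVert \Psi(\theta - \hat\theta)\rVert_\infty \le \epsilon_\pi + \max_{z \in \mz} |\psi(z)^\top(\theta - \hat\theta)|\,,
\end{equation*}
so it suffices to control the second term uniformly over $z \in \mz$.

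First I would pass to the $\mgr$-weighted geometry. For every $z \in \mz$, Cauchy--Schwarz in the inner product induced by $\mgr$ gives $|\psi(z)^\top(\theta - \hat\theta)| \le \lVert \psi(z)\rVert_{\mgr^{-1}}\, \lVert \theta - \hat\theta\rVert_{\mgr}$, and the Kiefer--Wolfowitz bound $\sup_{z} \lVert\psi(z)\rVert_{\mgr^{-1}} \le \sqrt d$ reduces the task to bounding $\lVert \theta - \hat\theta\rVert_{\mgr}$, a quantity that only involves the design set $\mc$. This is the step that makes the final bound scale with $\sqrt d$ instead of the state space size.

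The key step is then a self-bounding argument for $\lVert\theta-\hat\theta\rVert_{\mgr}$. Using the closed forms $\mgr\hat\theta = \sum_{z\in\mc}\rho(z)\hat R(z)\psi(z)$ and $\mgr\theta = \sum_{z\in\mc}\rho(z)(\psi(z)^\top\theta)\psi(z)$, one obtains $\mgr(\hat\theta-\theta) = \sum_{z\in\mc}\rho(z)\big(\hat R(z) - \psi(z)^\top\theta\big)\psi(z)$. Pairing with $\hat\theta-\theta$ and applying Cauchy--Schwarz over the $\rho$-weighted sum yields
\begin{equation*}
\lVert\hat\theta-\theta\rVert_{\mgr}^2 \le \Big(\sum_{z\in\mc}\rho(z)\big(\hat R(z)-\psi(z)^\top\theta\big)^2\Big)^{1/2}\, \lVert\hat\theta-\theta\rVert_{\mgr}\,,
\end{equation*}
so that $\lVert\hat\theta-\theta\rVert_{\mgr} \le \big(\sum_{z\in\mc}\rho(z)(\hat R(z)-\psi(z)^\top\theta)^2\big)^{1/2}$. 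On $\mc$ the pointwise residual splits as $|\hat R(z) - \psi(z)^\top\theta| \le |\hat R(z) - Q_h^\pi(z)| + |Q_h^\pi(z) - \psi(z)^\top\theta| \le \epsilon_Q + \epsilon_\pi$, the second term being the realizability error and the first the Monte Carlo estimation error of $\hat Q_h = \hat R$ controlled by the concentration bound of the previous subsection. Since $\rho$ is a probability distribution on $\mc$, the weighted sum is bounded by $(\epsilon_Q + \epsilon_\pi)^2$, giving $\lVert\hat\theta-\theta\rVert_{\mgr} \le \epsilon_\pi + \epsilon_Q$. Substituting back yields $\max_{z}|\psi(z)^\top(\theta-\hat\theta)| \le \sqrt d(\epsilon_\pi+\epsilon_Q)$ and hence the claimed bound $\epsilon_\pi(1+\sqrt d) + \epsilon_Q\sqrt d$.

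The remaining work, and the main obstacle, is to produce the explicit high-probability value of $\epsilon_Q$. The estimation error $|\hat R(z) - Q_h^\pi(z)| = |\hat Q_h(z) - Q_h^\pi(z)|$ is controlled at each fixed $z \in \mc$ by the per-point planning bound established earlier (taken with $k = h$). Since the bound must hold simultaneously at all design points, I would take a union bound over the $|\mc| \le d(d+1)/2 = \mathcal{O}(d^2)$ points, which inflates the logarithmic terms to $\log(4 Z d^2/\delta)$, with $Z = |\ma|^h M_h M^{h-1}$ bounding the number of states $|\ms_0|$ reached in each planning call, and splitting the failure probability across the truncation/rollout term and the transition-sampling term. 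Matching constants so that the failure probability sums to $\delta$ and the three error contributions assemble into the stated $\epsilon_Q$ is the delicate but routine bookkeeping part; the genuinely load-bearing ideas are the $G$-optimal design inequality and the self-bounding Cauchy--Schwarz step above.
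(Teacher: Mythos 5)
Your proposal is correct and follows essentially the same route as the paper, whose proof of this theorem is given by reference: it invokes the Kiefer--Wolfowitz $G$-optimal design and the least-squares extrapolation argument of \citet{lattimore-szepesvari-api}, substituting the lookahead targets $\hat{Q}_h$ and the paper's Monte Carlo planning concentration bound (with a union bound over the $|\mc| \leq d(d+1)/2$ design points and $|\ms_0| \leq Z$) exactly as you do. Your self-bounding Cauchy--Schwarz step for $\lVert \hat\theta - \theta \rVert_{\mgr}$ is a minor cosmetic variant of the direct bound in that reference and yields the same constants, so you have in effect supplied the details the paper outsources.
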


\begin{proof}
    The proof of this theorem is identical to the one in \citep{lattimore-szepesvari-api}, up to using our new targets and error propagation analysis instead of the typical function approximation targets for $Q$-function estimation (without lookahead). The subset $\mc \subseteq \ms \times \ma$ and distribution $\rho : \mc \to \left[0, 1\right]$ are given by the Kiefer-Wolfowitz theorem stated above. We suppose we have access to such a set. Note that this assumption is key to controlling the extrapolation error of a least squares estimate $\Psi \hat{\theta}$. 
\end{proof}

We have now fully described how to estimate the lookahead value function $Q^\pi_h$ using a number of trajectories that does not depend on the size of the state space. In the previous section, we described how to use this estimate in an $h$-PMD style update, while maintaining a memory size that does not depend on $S$. Therefore we have an $h$-PMD algorithm that does not use memory, computation or number of samples depending on $S$.

\section{Further Simulations}
\label{appx:simu}

\subsection{Simulations for $h$-PMD with Euclidean Regularization}

Here we provide details of an additional experiment running $h$-PMD with Euclidean regularization. As in section \ref{sec:sims}, experiments were run on the \textit{bsuite} DeepSea environment \citep{osband-bsuite} with $\gamma = 0.99$ and a grid size of 64x64. The figures are identical in structure to the ones in \ref{sec:sims}, 32 runs were performed for each value of $h$ in 1, 5, 10, 15, and 20. In Fig.~\ref{fig:deqa-samples}, we report the number of samples to the environment at each iteration for the same runs. When compared to the figures showing the convergence, it is clearly seen that running the algorithm with higher values of lookahead results in convergence using fewer overall samples.

\begin{figure*}[h]
\centering 
      \includegraphics[width=0.5\textwidth]{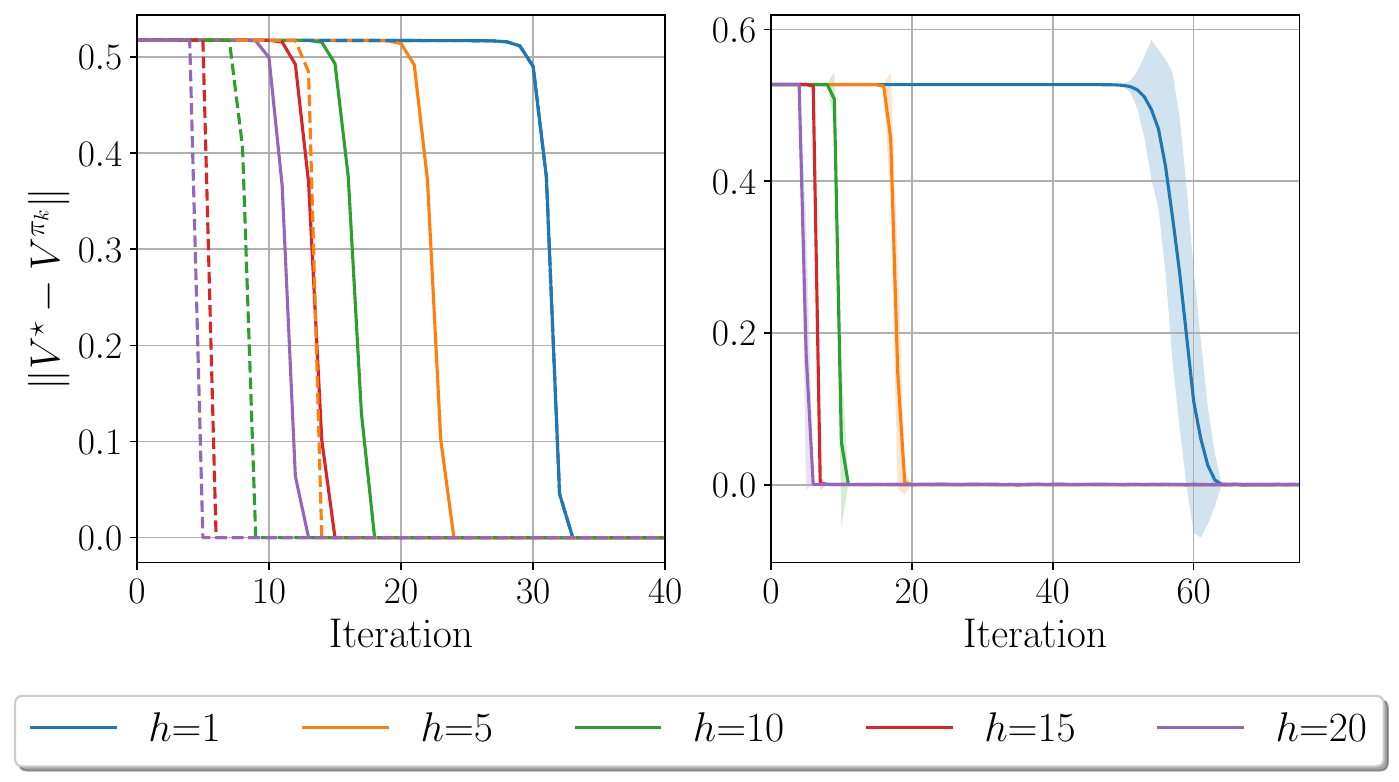}
\caption{Suboptimality value function gap for $h$-PMD using Euclidean Bregman divergence. in the exact (left) and inexact (right) settings. (Right) We performed 32 runs for each value of $h$, the mean is shown as a solid line and the standard deviation as a shaded area. }
\label{fig:dpqa-triple}
\end{figure*}

\subsection{Sample Complexity Comparison for Different Lookahead Depths}
\label{subsec:comparison-sample-complexity}

We have recorded the sample complexity of our algorithm for different values of $h$ in Figure~\ref{fig:deqa-samples}. Notice that the number of samples used at each iteration increases with $h$. However, notice that this increase is small: $h$-PMD with $h = 20$ requires only 1.5 times more samples per iteration than $h = 1$ as shown in Figure~\ref{fig:deqa-samples}. Furthermore, increasing $h$ greatly improves the convergence rate of the algorithm as described by our theory, which results in a much smaller number of iterations required until convergence (for example $h = 20$ converges in at least 10 times fewer iterations than $h = 1$ (see figure~\ref{fig:deqa-triple} (right) in section~\ref{sec:sims}). Overall, the algorithm is more sample efficient for higher values of $h$.

\begin{figure*}[ht]
\centering 
\includegraphics[width=0.4\textwidth]{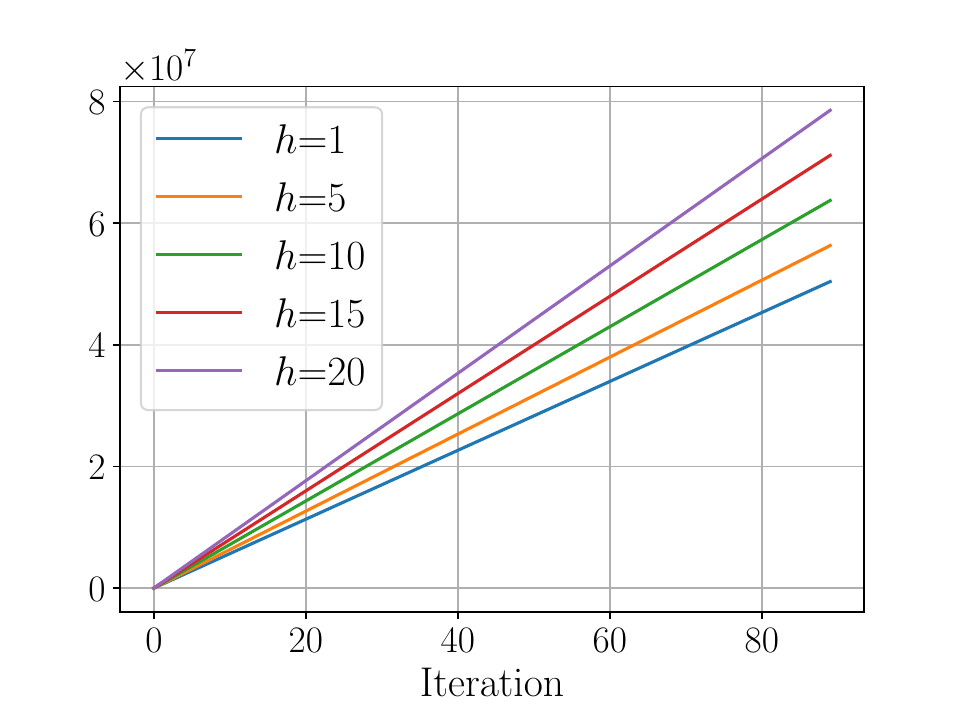}
\caption{Samples used by inexact $h$-PMD at each iteration. When compared with the first figure, it is clear to see the algorithm needs less samples to converge with higher values of $h$. Since far less iterations are needed with higher values of lookahead, the benefit of higher convergence rate greatly outweighs the additional cost per iteration.}

\label{fig:deqa-samples}
\end{figure*}

\subsection{Total Running Time Comparison for Different Lookahead Depths}

\begin{figure*}[ht]
    \centering
    \includegraphics[width=0.55\textwidth]{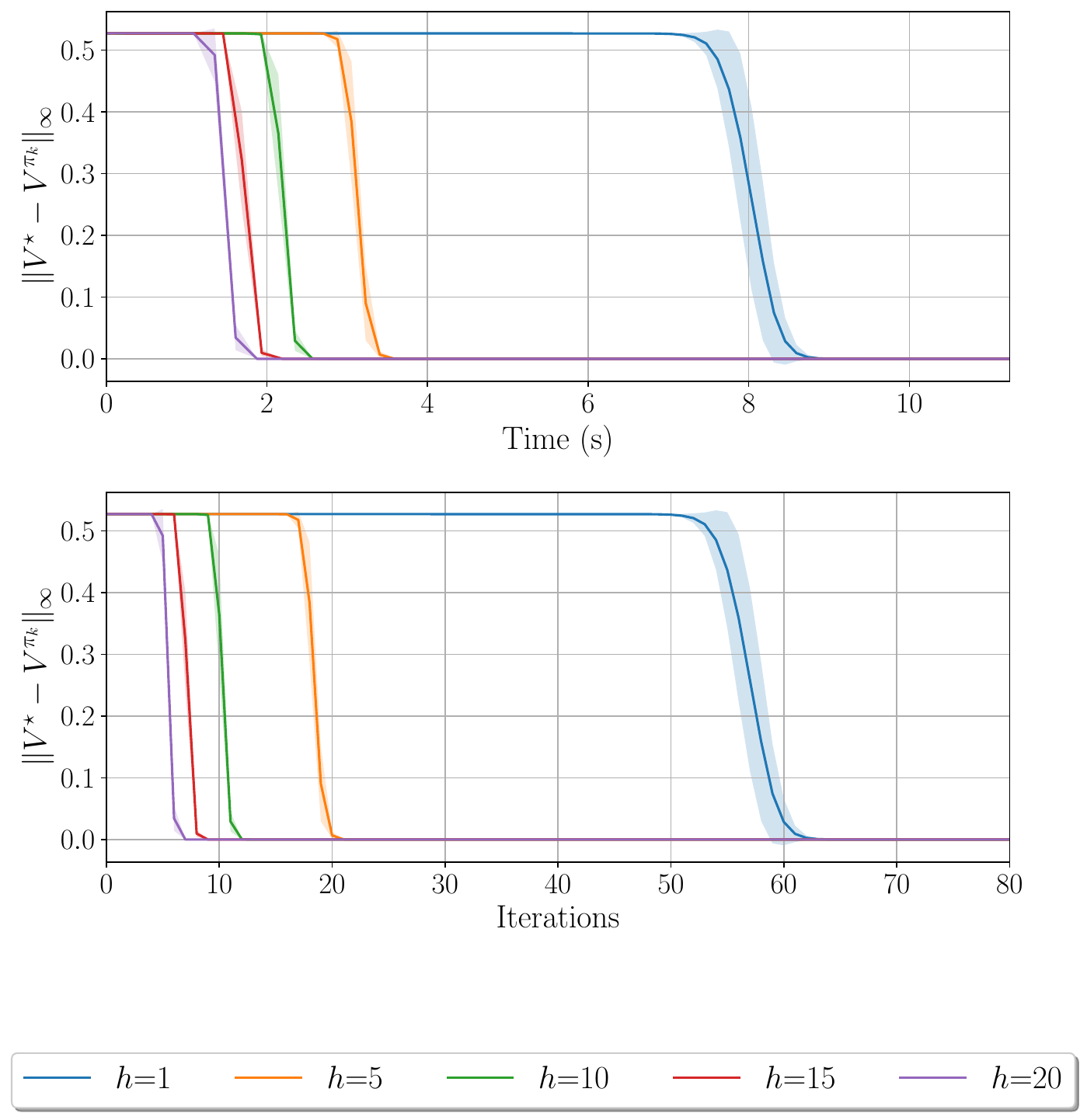}
    \caption{Value function gap against running time. Note that the algorithms with higher values for $h$ still converge faster in terms of runtime.}
    \label{fig:deqa-time}
\end{figure*}

We provide a plot accounting for running time instead of the number of iterations (see Figure~\ref{fig:deqa-time}). Observe that the overall performance is very similar to the same plot with the number of iterations on the x-axis.

\subsection{About the Stepsizes in Simulations}
\label{app:subsec:about-stepsizes}

In our experiments we use the exact same stepsizes described in our theory. We also observe in our simulations that the stepsize does not go to infinity but rather decreases at some point and/or stabilizes. This is a consequence of the adaptivity of the stepsize discussed above. We also note that the simulations reported in \citep{johnson-et-al23optimal-pmd} (appendix D p. 18) also validate the fact that the stepsizes do not go to infinity (see the right plots therein for the stepsizes). We also observe a similar behavior for the stepsizes in our experiments (see Figure \ref{fig:deqa-eta}).

\begin{figure*}[h]
    \centering
    \includegraphics[width=0.35\textwidth]{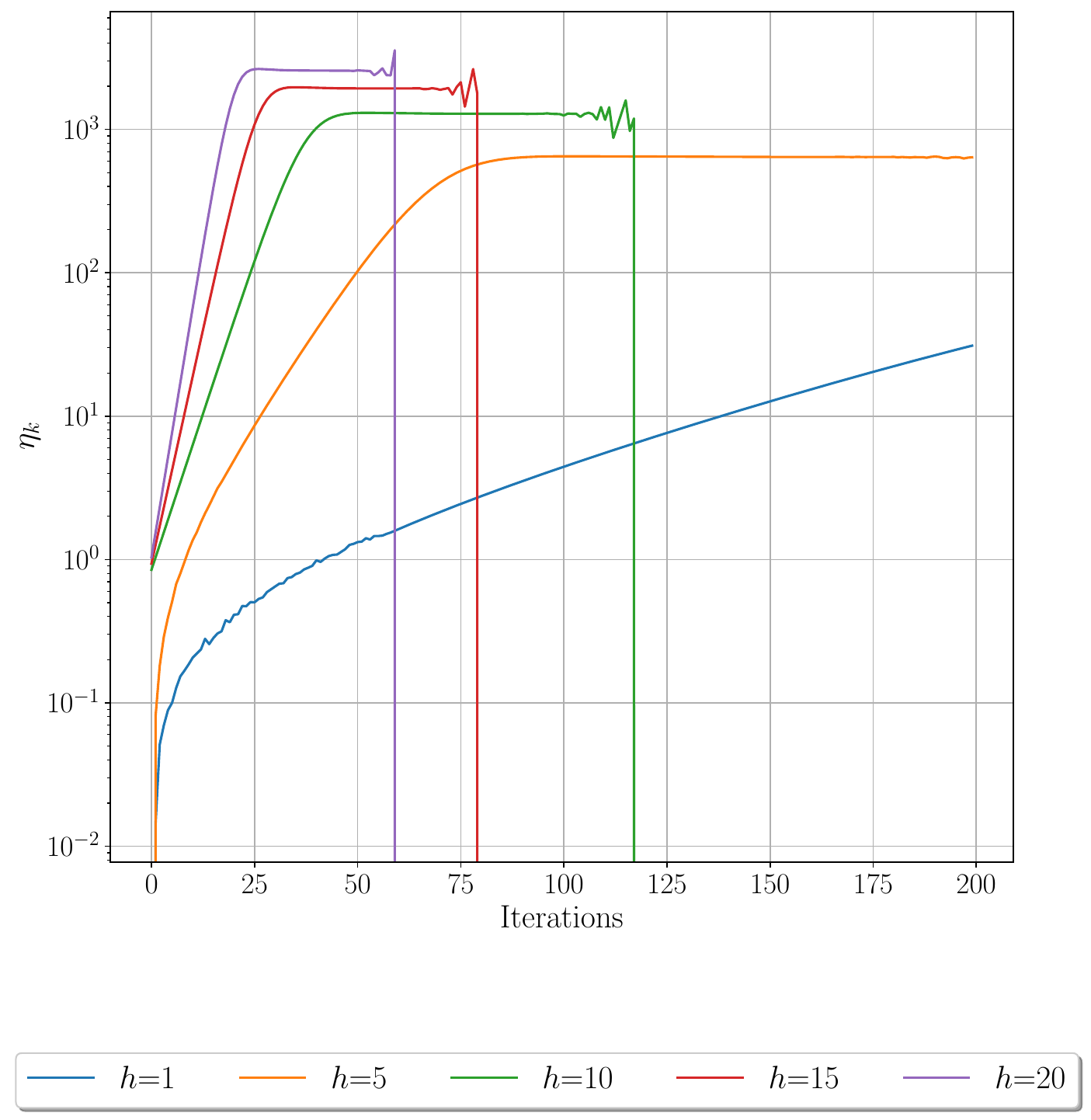}
    \caption{Behaviour of our adaptive stepsize $\eta_k$ at each iteration of the algorithm for different values of $h$. Note that the stepsize does not diverge towards infinity, instead it usually vanishes after a certain number of iterations.}
    \label{fig:deqa-eta}
\end{figure*}

\subsection{Larger lookahead depths}
\label{app:subsec:larger-h}

In order to practically investigate the tradeoff of increasing the lookahead depth $h$, we performed additional experiments where $h$ was increased to $100$ in the DeepSea \textit{bsuite} environment. The plots in Figure~\ref{fig:deqa-big-h} show that increasing $h$ past the values in our initial experiments still results in better performance.

\begin{figure*}[h]
    \centering
    \includegraphics[width=0.78\textwidth]{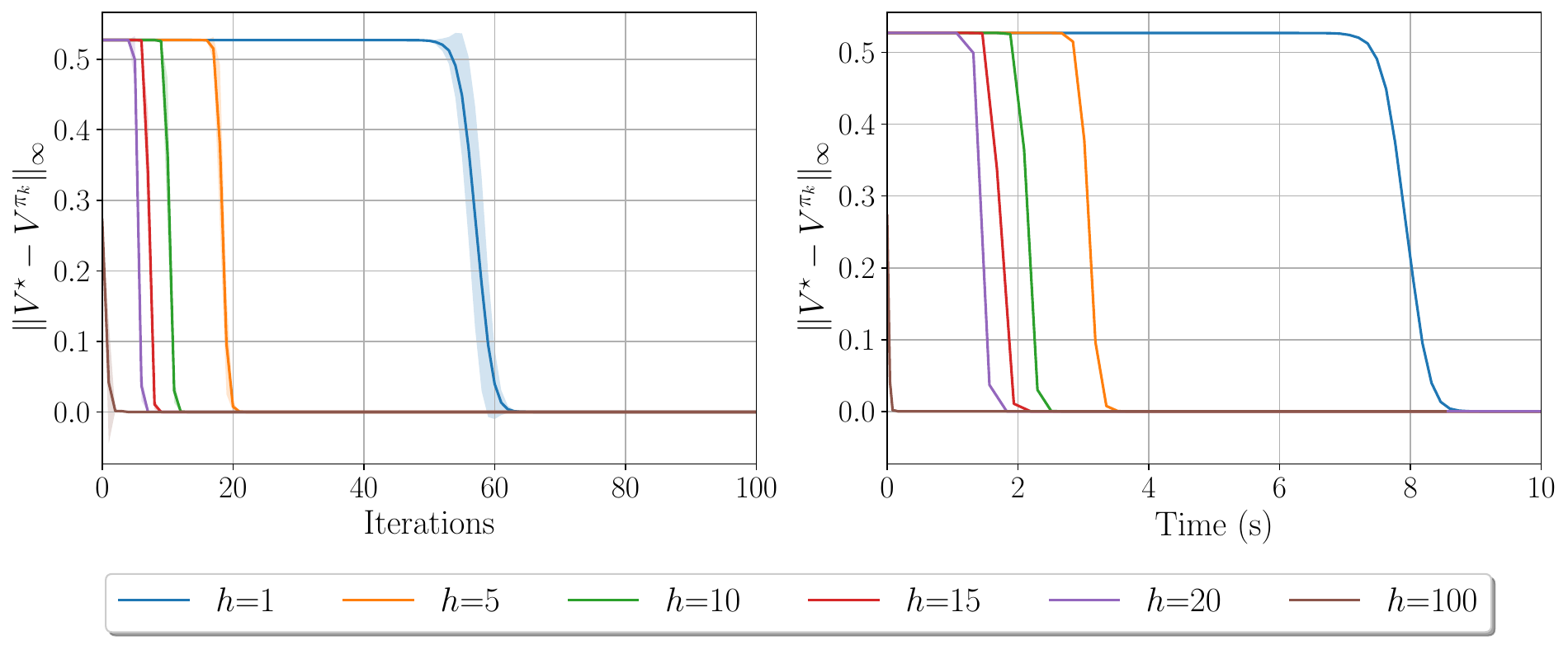}
    \caption{Value function gap plotted against number of iterations (left) and against running time (right) of $h$-PMD in the DeepSea \textit{bsuite} environment. Note that increasing the lookahead depth seems to result in better performance in terms of runtime and number of iterations.}
    \label{fig:deqa-big-h}
\end{figure*}

\subsection{Experiments with MCTS-based Lookahead Estimation}
\label{app:subsec-mcts}

We have performed additional experiments in an even more challenging and practical setting, on two environments from the \textit{bsuite} \citep{osband-bsuite} set: DeepSea and Catch. We have implemented our PMD algorithm with a Monte Carlo Tree Search (MCTS) algorithm (which we call PMD-MCTS) to compute the lookahead Q-function. We used DeepMind’s MCTS implementation (MCTX)\footnote{\url{https://github.com/google-deepmind/mctx}} in order to run this algorithm on any deterministic gym style environments implemented in JAX (we use gymnax \citep{lange-gymnax} to implement these environments). Note that any tree search algorithm implementation can be used to estimate the lookahead function and plugged into our PMD method. See figures below. Note that this training procedure is more practical: we do not evaluate the value function at each state-action pair anymore at each time step, we only evaluate the value function (and so only update the policy) in states that we encounter using the current policy. This significantly relaxes the generative model assumption. Note that in this more challenging setting the best performance is not obtained for higher values of $h$: intermediate values of $h$ perform better, illustrating the tradeoff in choosing the depth $h$.

\begin{figure*}[ht]
    \begin{minipage}{0.5\textwidth}
        \centering
    \includegraphics[width=\linewidth]{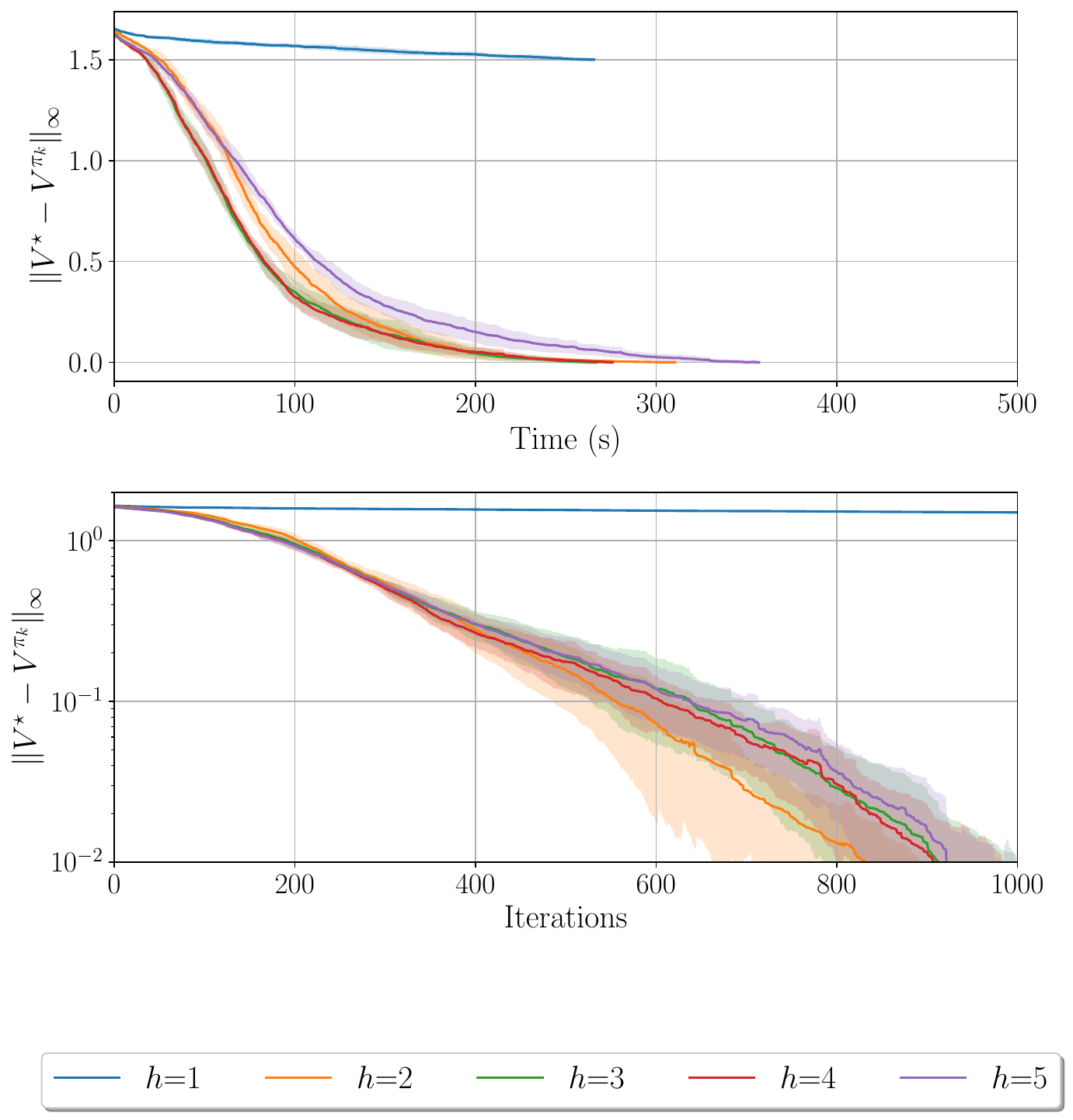}
    \end{minipage}
    \begin{minipage}{0.5\textwidth}
        \centering
    \includegraphics[width=\linewidth]{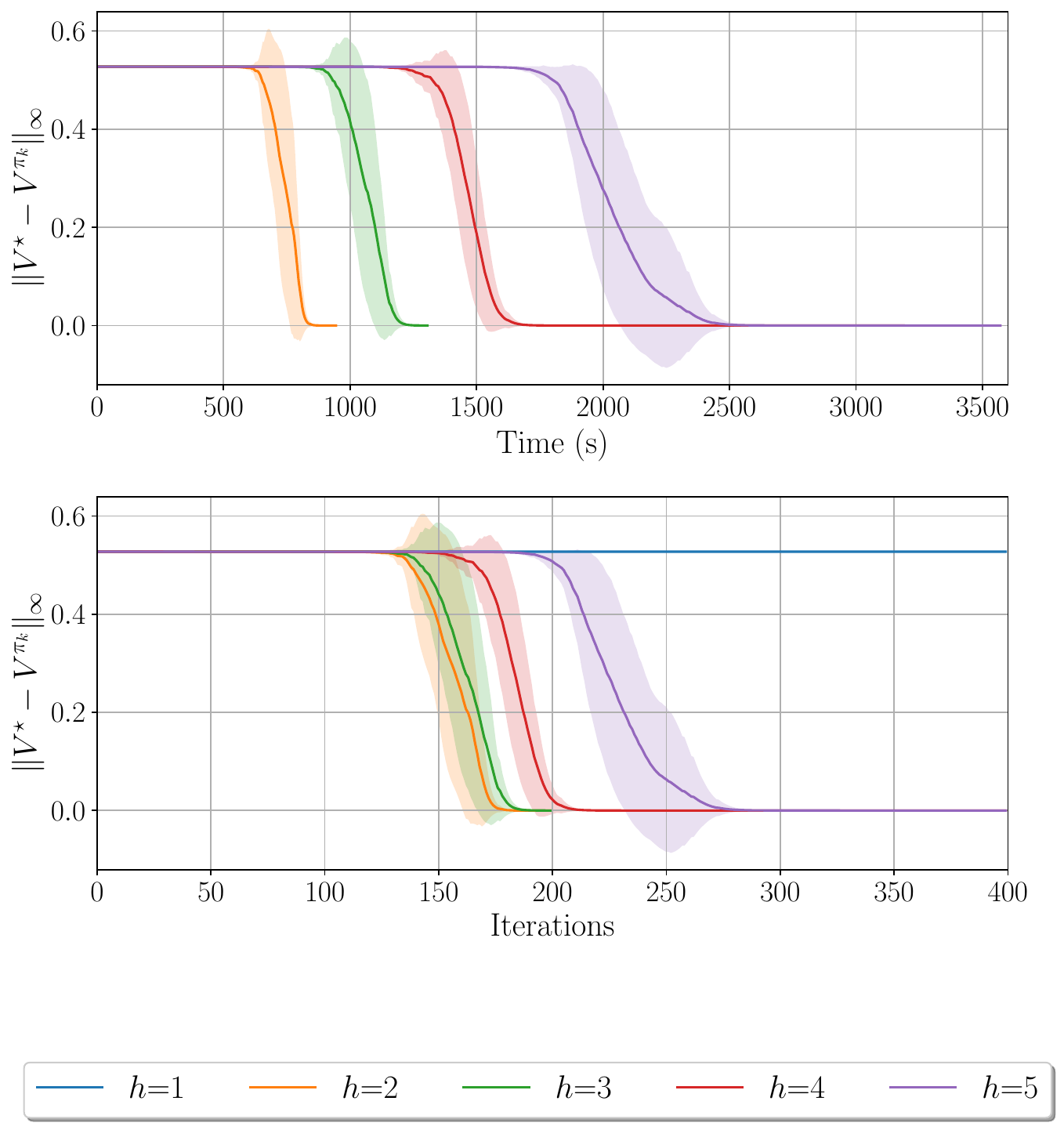}
    \end{minipage}
    \caption{Training curves of PMD-MCTS for the \textit{bsuite} Catch (left) and DeepSea (right) environments. Note that the case of $h=1$ fails to converge in any reasonable number of iterations in both environments.}
    \label{fig:interm-values-h-better}
\end{figure*}

\newpage
\subsection{Continuous Control Experiments}

Finally, we performed experiments in classic continuous control environments in order to investigate the performance of $h$-PMD when applied to MDPs with large/infinite state spaces. For this continuous state space setting, we implemented a fully online variant of $h$-PMD based off of DeepMind's MuZero \citep{muzero} algorithm. Experiments show that increasing the lookahead depth can lead to better performance up to a certain point, and the tradeoff seems to be environment-specific. 

\begin{figure*}[h]
    \centering
    \includegraphics[width=\textwidth]{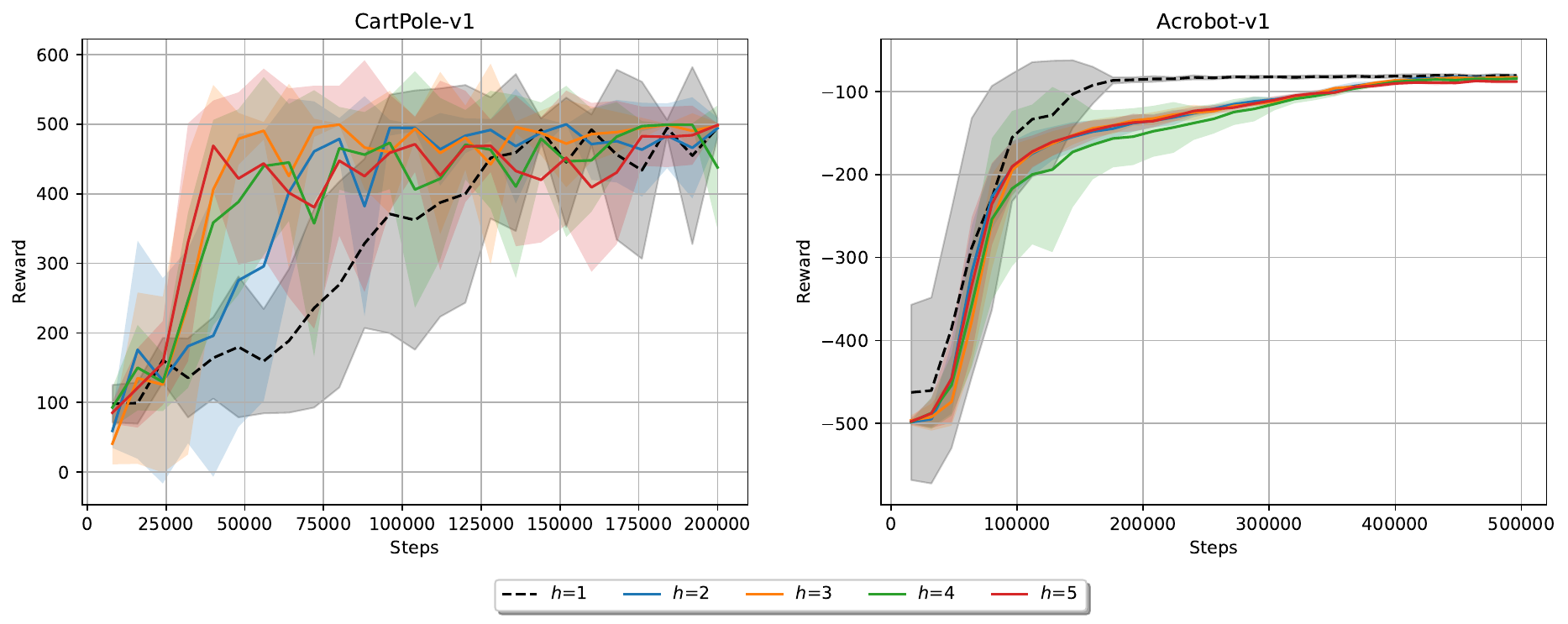}
    \caption{Performance of $h$-PMD in two of OpenAI's \textit{gym} \citep{brockman-et-al-gym} environments, CartPole-v1 (left) and Acrobot-v1 (right). In both figures the reward is plotted against the total number of environment steps evaluated, which can be understood as the number of samples used. Note that higher values of $h$ do lead to better performance for the CartPole-v1 environment, but not necessarily for the Acrobot-v1 environment.}
    \label{fig:muzero-gym}
\end{figure*}

\subsection{Additional experiments}
We conclude with results from a larger set of experiments designed to verify the robustness of our experimental results to different environment parameters. We report the performance of the $h$-PMD algorithm in the inexact setting in two different environments, DeepSea from \textit{bsuite} \citep{osband-bsuite}, and the classic Gridworld environment. The effect of $\gamma$ is as expected: higher values of $\gamma$ result in slower convergence, lower values of $\gamma$ yield faster convergence. The size of the environment (``chain length" for DeepSea and number of columns/rows for Gridworld) directly affects the number of iterations until convergence predictably: larger environments take more iterations to solve. Note that in all cases the effect of $h$ is as expected, i.e. higher $h$ leads to faster convergence.

\begin{figure*}[h]
    \begin{minipage}{0.5\textwidth}
        \centering
    \includegraphics[width=\linewidth]{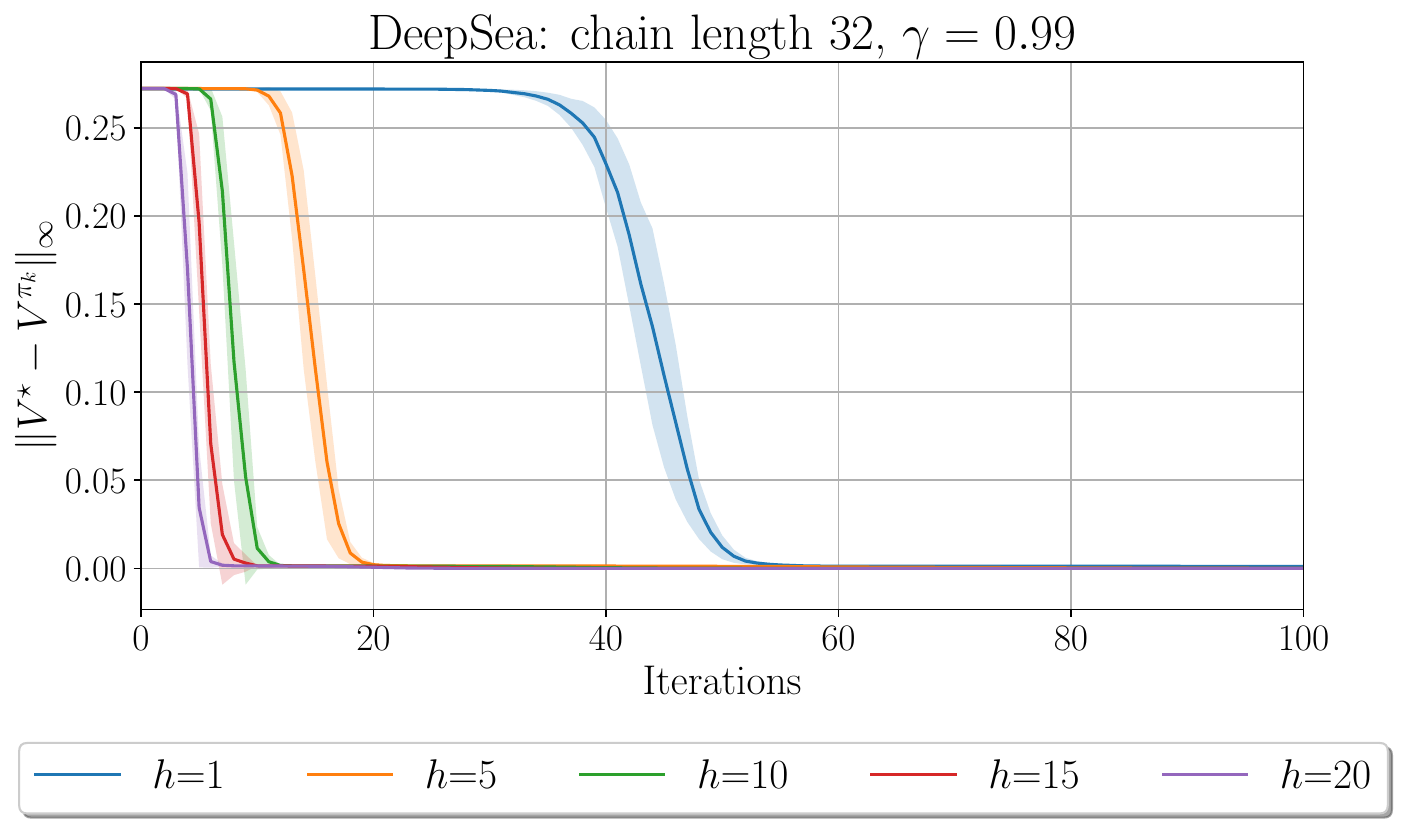}
    \end{minipage}
    \begin{minipage}{0.5\textwidth}
        \centering
    \includegraphics[width=\linewidth]{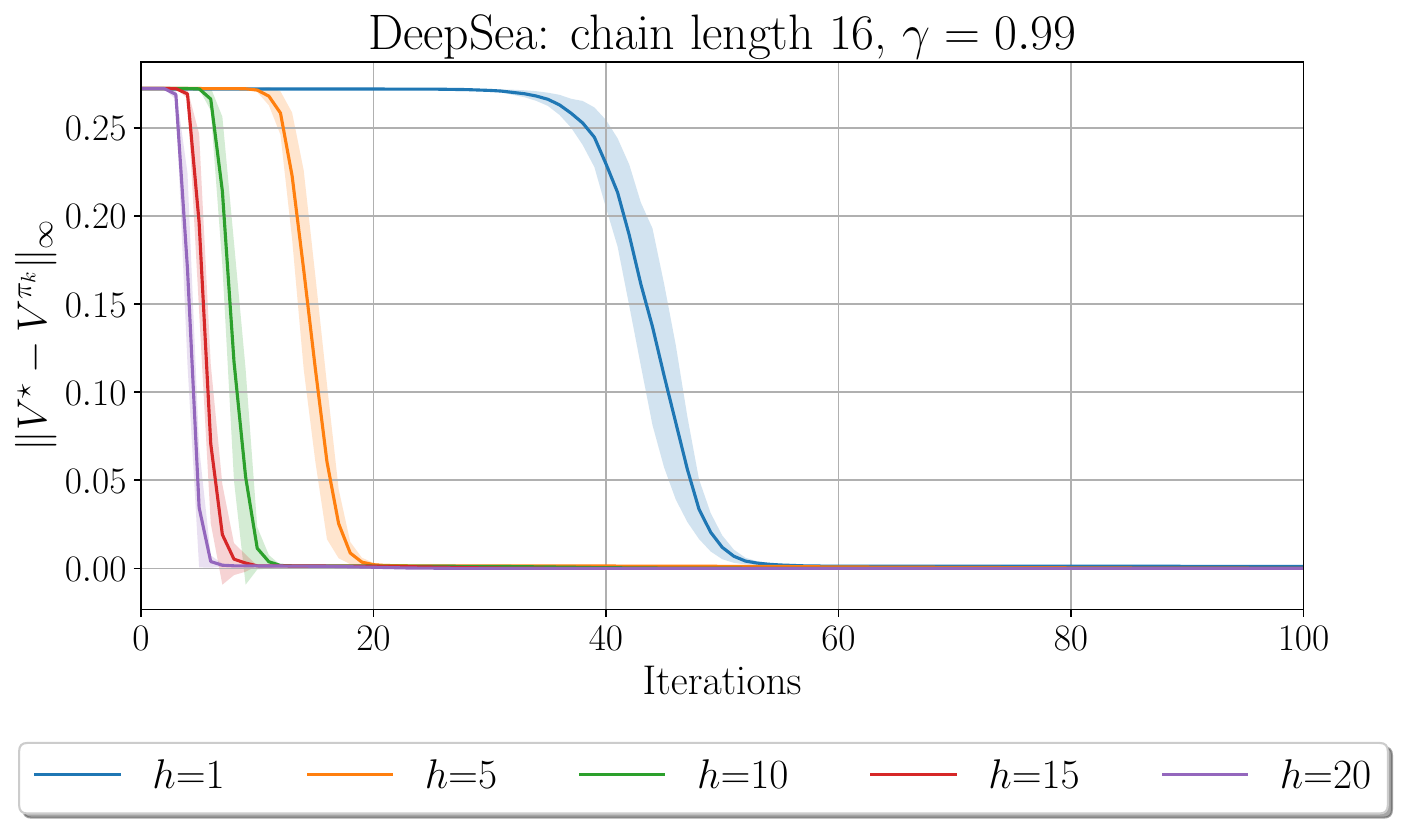}
    \end{minipage}
\end{figure*}

\begin{figure*}[h]
    \begin{minipage}{0.5\textwidth}
        \centering
    \includegraphics[width=\linewidth]{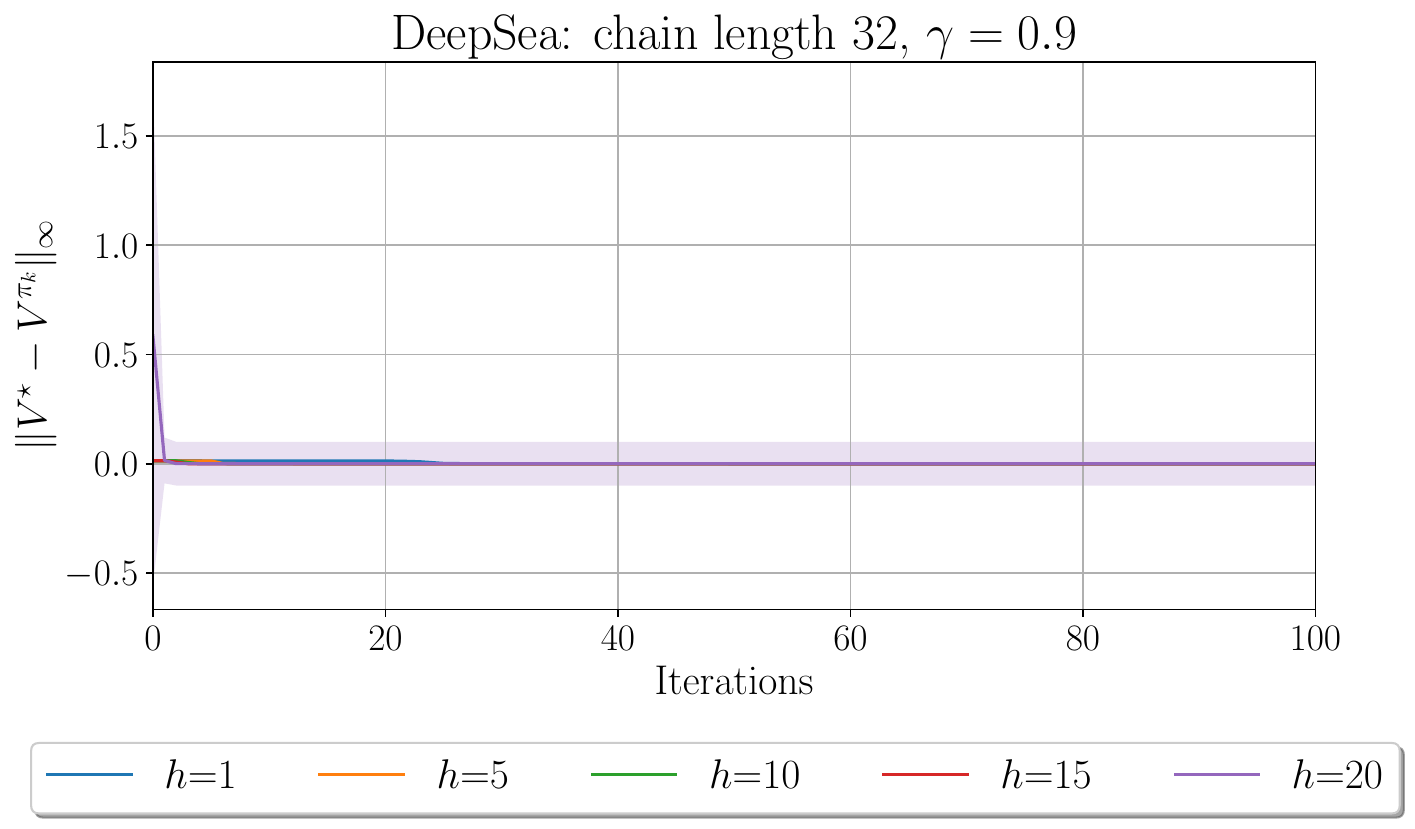}
    \end{minipage}
    \begin{minipage}{0.5\textwidth}
        \centering
    \includegraphics[width=\linewidth]{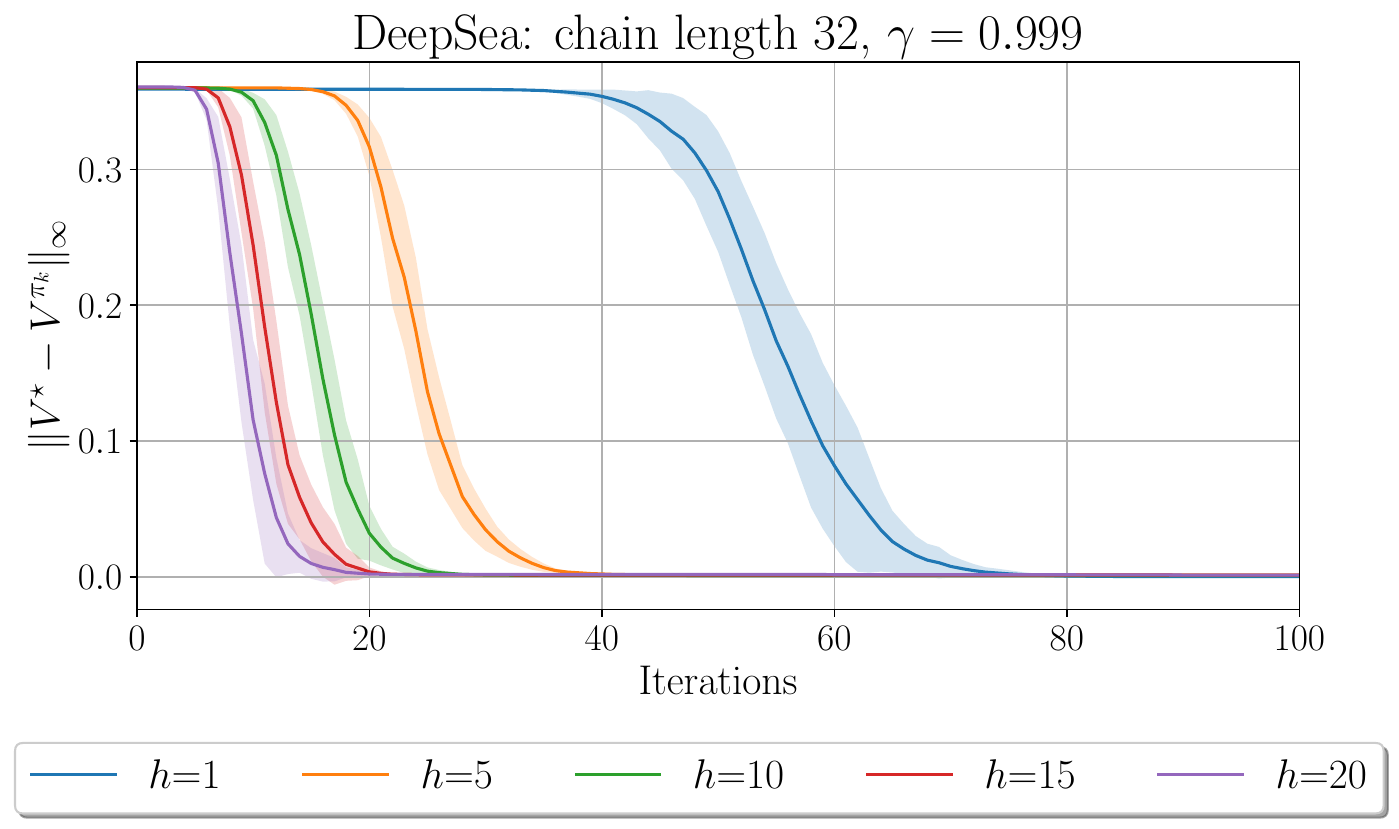}
    \end{minipage}
\end{figure*}

\begin{figure*}[h]
    \begin{minipage}{0.5\textwidth}
        \centering
    \includegraphics[width=\linewidth]{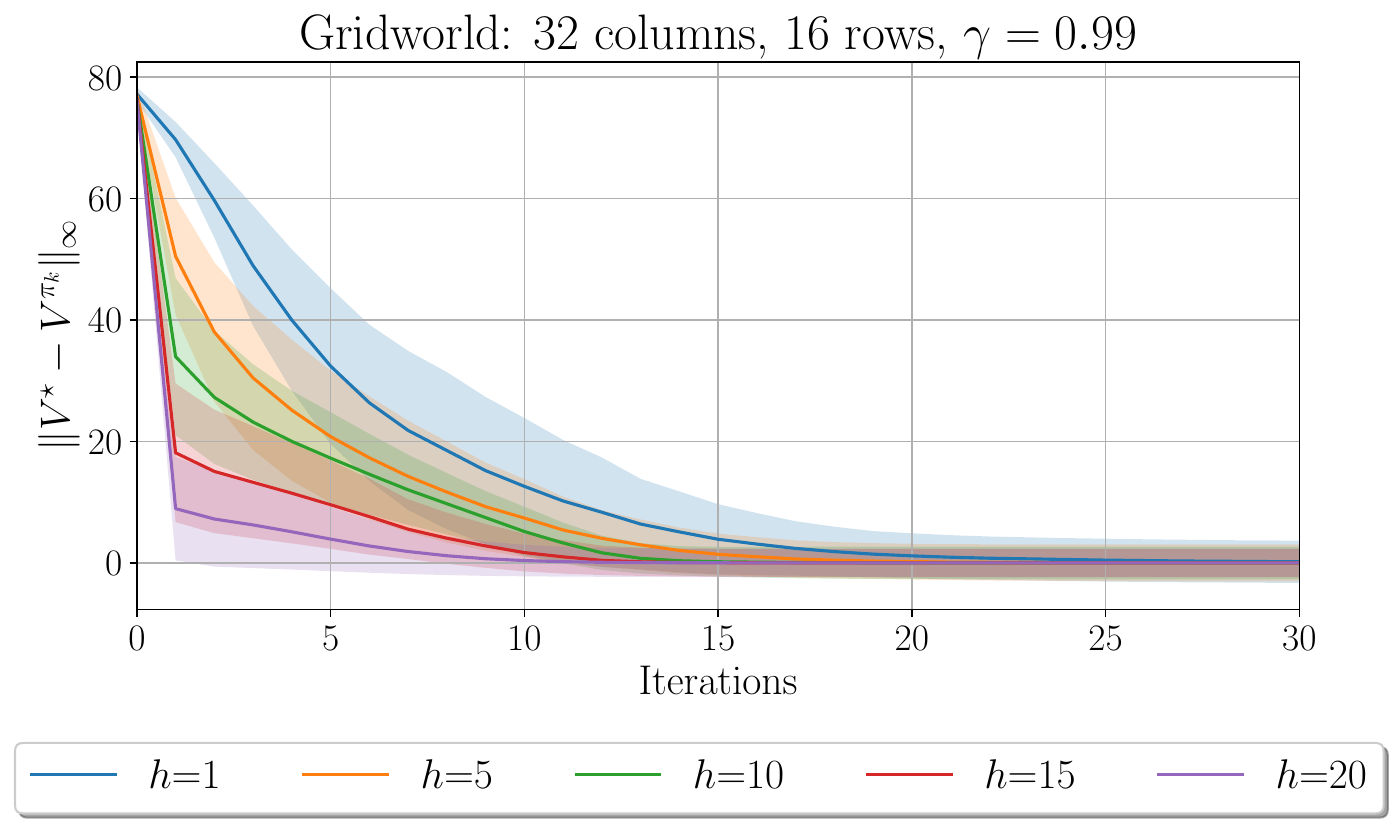}
    \end{minipage}
    \begin{minipage}{0.5\textwidth}
        \centering
    \includegraphics[width=\linewidth]{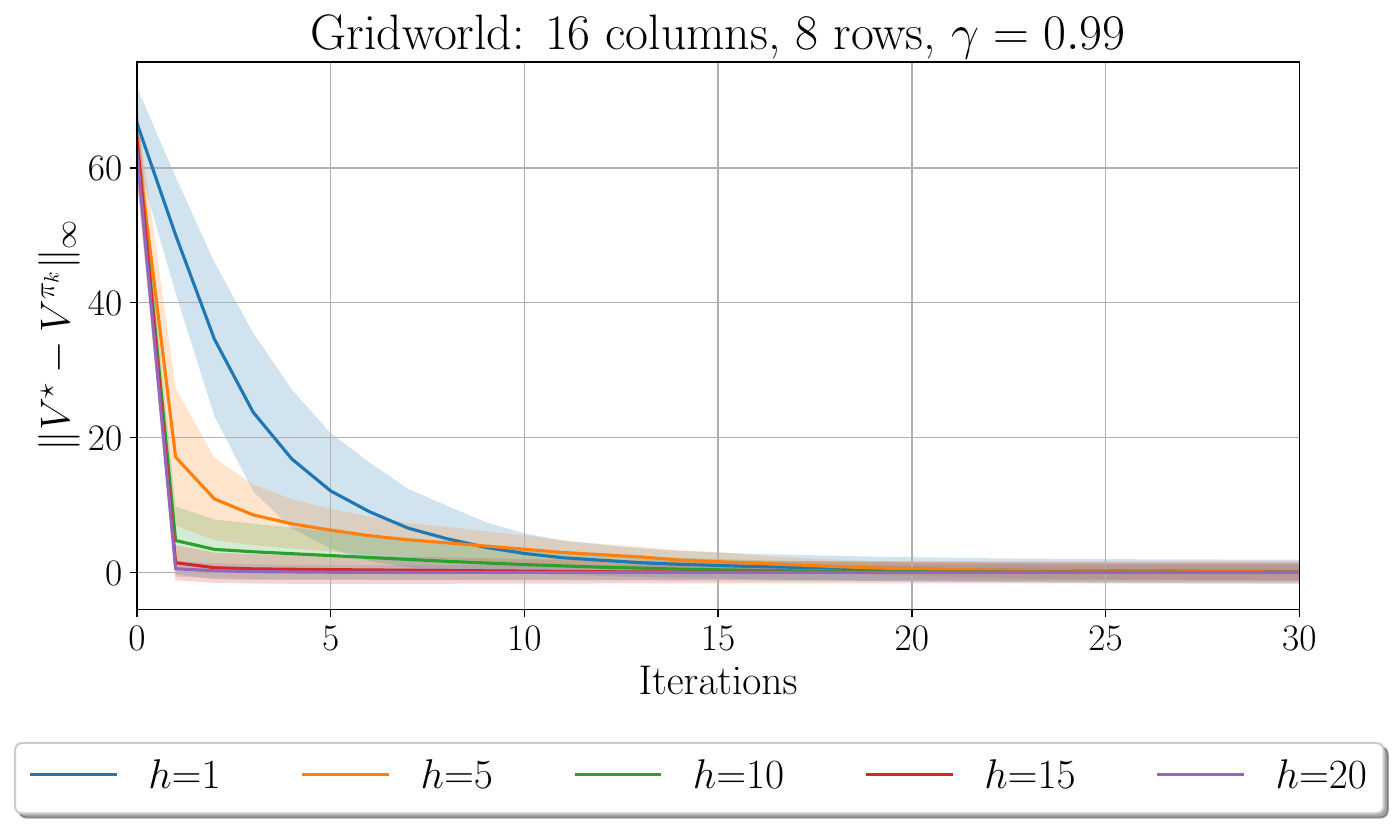}
    \end{minipage}
\end{figure*}

\begin{figure*}[h!]
    \begin{minipage}{0.5\textwidth}
        \centering
    \includegraphics[width=\linewidth]{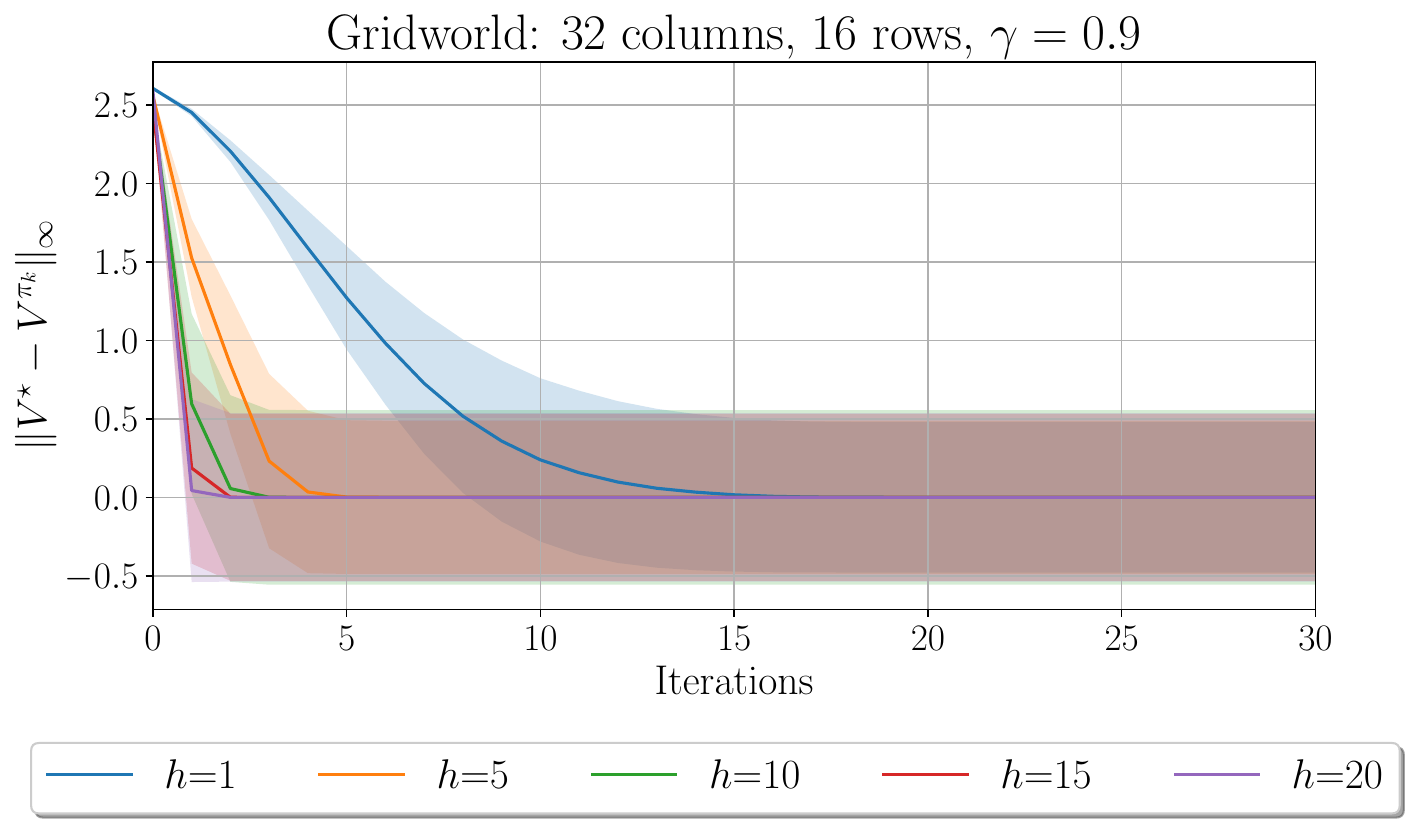}
    \end{minipage}
    \begin{minipage}{0.5\textwidth}
        \centering
    \includegraphics[width=\linewidth]{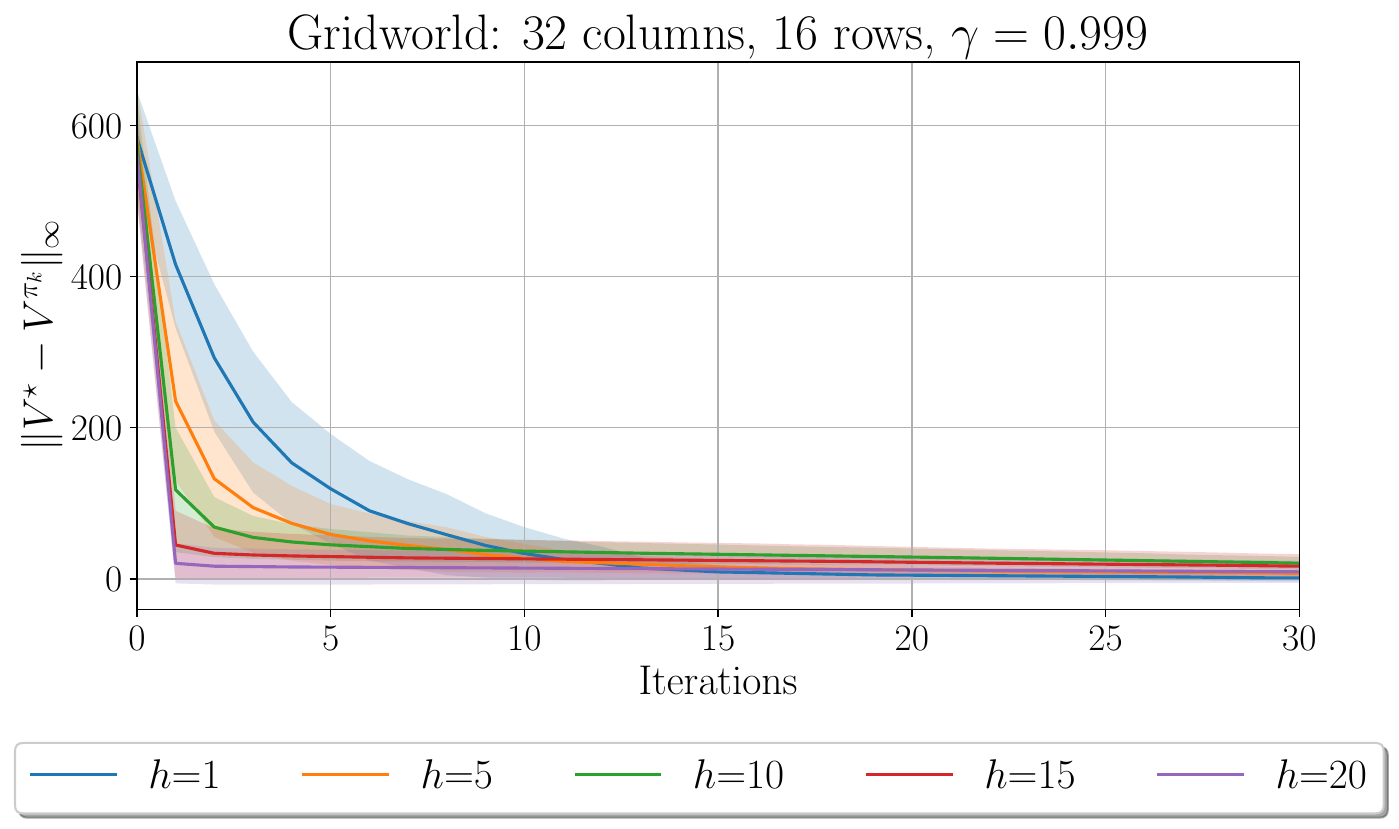}
    \end{minipage}
\end{figure*}

\end{document}